\newlength{\widebarargwidth}
\newlength{\widebarargheight}
\newlength{\widebarargdepth}
\long\def\@makecaption#1#2{
        \vskip 0.8ex
        \setbox\@tempboxa\hbox{\small {\bf #1:} #2}
        \parindent 1.5em  
        \dimen0=\hsize
        \advance\dimen0 by -3em
        \ifdim \wd\@tempboxa >\dimen0
                \hbox to \hsize{
                        \parindent 0em
                        \hfil 
                        \parbox{\dimen0}{\def\baselinestretch{0.96}\small
                                {\bf #1.} #2
                                } 
                        \hfil}
        \else \hbox to \hsize{\hfil \box\@tempboxa \hfil}
        \fi
        }
\newcounter{manualsubequation}
\renewcommand{\themanualsubequation}{\alph{manualsubequation}}
\newcommand{\startsubequation}{%
  \setcounter{manualsubequation}{0}%
  \refstepcounter{equation}\ltx@label{manualsubeq\theequation}%
  \xdef\labelfor@subeq{manualsubeq\theequation}%
}
\newcommand{\tagsubequation}{%
  \stepcounter{manualsubequation}%
  \tag{\ref{\labelfor@subeq}\themanualsubequation}%
}
\let\subequationlabel\ltx@label
\renewcommand{\baselinestretch}{1.04} 
\date{}
\newcommand{\BlackBox}{\rule{1.5ex}{1.5ex}}  
\renewenvironment{proof}[1][\proofname]{%
  \par\pushQED{\qed}\normalfont%
  \topsep6\p@\@plus6\p@\relax
  \trivlist\item[\hskip\labelsep\bfseries#1\@addpunct{.}]%
  \ignorespaces
}{%
  \popQED\endtrivlist\@endpefalse
}
\newtheorem{theorem}{Theorem}[section]
\newtheorem{lemma}[theorem]{Lemma}
\newcommand\numberthis{\addtocounter{equation}{1}\tag{\theequation}}
\newcommand{\ceil}[1]{{\left\lceil #1 \right\rceil}}
\renewcommand{\S}{\mathbb{S}}
\newcommand{\E}{\mathbb{E}}
\renewcommand{\Pr}{\mathbb{P}}
\newcommand{\lv}{\lVert}
\newcommand{\rv}{\rVert}
\renewcommand{\epsilon}{\varepsilon}
\DeclareSymbolFont{extraup}{U}{zavm}{m}{n}
\DeclareMathSymbol{\varheart}{\mathalpha}{extraup}{86}
\DeclareMathSymbol{\vardiamond}{\mathalpha}{extraup}{87}
\def\gP{{\mathcal{P}}}
\def\gV{{\mathcal{V}}}
\def\gA{{\mathcal{A}}}
\renewcommand{\widehat}{\hat}
\renewcommand{\epsilon}{\varepsilon}
\renewcommand{\Pr}{\mathbb{P}}
\renewcommand{\S}{\mathcal{S}}
\newcommand{\ptrain}{\mathsf{P}_{\mathsf{train}}}
\newcommand{\ptest}{\mathsf{P}_{\mathsf{test}}}
\newcommand{\pmin}{\mathsf{P}_{\mathsf{min}}}
\newcommand{\pmaj}{\mathsf{P}_{\mathsf{maj}}}
\newcommand{\p}{\mathsf{P}}
\newcommand{\pa}{\mathsf{P}_{a}}
\newcommand{\pb}{\mathsf{P}_{b}}
\newcommand{\pv}{\mathsf{P}_{v}}
\newcommand{\ptestv}[1]{\mathsf{P}_{#1, \mathsf{test}}}
\newcommand{\q}{\mathsf{Q}}
\newcommand{\qs}{\mathsf{Q}_{S}}
\newcommand{\pminv}[1]{\mathsf{P}_{#1, \mathsf{min}}}
\newcommand{\pmajv}[1]{\mathsf{P}_{#1, \mathsf{maj}}}
\newcommand{\nmin}{{n_{\mathsf{min}}}}
\newcommand{\nmaj}{{n_{\mathsf{maj}}}}
\newcommand{\Smin}{\mathcal{S}_{\mathsf{min}}}
\newcommand{\Smaj}{\mathcal{S}_{\mathsf{maj}}}
\newcommand{\Sus}{\mathcal{S}_{\mathsf{US}}}
\newcommand{\cA}{\mathcal{A}}
\newcommand{\eR}{\mathsf{Excess\;Risk}}
\newcommand{\cV}{\mathcal{V}}
\newcommand{\Abucket}{\cA_{\mathsf{USB}}}
\newcommand{\AbucketS}{\cA_{\mathsf{USB}}^\S}
\newcommand{\AbucketSK}{\cA_{\mathsf{USB}}^{\S,K}}
\newcommand{\bayesv}{\mathfrak{B}_\gV}
\newcommand{\riskv}{\mathfrak{R}_\gV}
\newcommand{\mmR}{\mathsf{Minimax\; Excess\;Risk}}
\newcommand{\overlap}{\mathsf{Overlap}}
\title{\textbf{Undersampling is a Minimax Optimal Robustness Intervention in Nonparametric Classification}}
\author{Niladri S. Chatterji\footnote{Equal contribution.}\phantom{$^\ast$}\\
  Stanford University\\
  niladri@cs.stanford.edu \\
  \and
  Saminul Haque$^\ast$\\
  Stanford University\\
  saminulh@stanford.edu\\
  \and
  Tatsunori Hashimoto\\
  Stanford University\\
  thashim@stanford.edu
}
\date{\today}
\begin{document}
\maketitle
\begin{abstract}
While a broad range of techniques have been proposed to tackle distribution shift, the simple baseline of training on an \emph{undersampled} balanced dataset often achieves close to state-of-the-art-accuracy across several popular benchmarks. This is rather surprising, since undersampling algorithms discard excess majority group data. To understand this phenomenon, we ask if learning is fundamentally constrained by a lack of minority group samples. We prove that this is indeed the case in the setting of nonparametric binary classification. Our results show that in the worst case, an algorithm cannot outperform undersampling unless there is a high degree of overlap between the train and test distributions (which is unlikely to be the case in real-world datasets), or if the algorithm leverages additional structure about the distribution shift. In particular, in the case of label shift we show that there is always an undersampling algorithm that is minimax optimal. In the case of group-covariate shift we show that there is an undersampling algorithm that is minimax optimal when the overlap between the group distributions is small. We also perform an experimental case study on a label shift dataset and find that in line with our theory, the test accuracy of robust neural network classifiers is constrained by the number of minority samples. 
\end{abstract}
\section{Introduction} \label{s:introduction}
A key challenge facing the machine learning community is to design models that are robust to distribution shift.
When there is a mismatch between the train and test distributions, current models are often brittle and perform poorly on rare examples~\citep{hovy2015tagging,blodgett2016demographic,tatman2017gender,hashimoto2018fairness,Alcorn_2019_CVPR}. In this paper, our focus is on group-structured distribution shifts. In the training set, we have many samples from a \emph{majority} group and relatively few samples from the \emph{minority} group, while during test time we are equally likely to get a sample from either group. 

To tackle such distribution shifts, a na\"ive algorithm is one that first \emph{undersamples} the training data by discarding excess majority group samples~\citep{kubat1997addressing,wallace2011class} and then trains a model on this resulting dataset~(see Figure~\ref{fig:linear_models_undersampling} for an illustration of this algorithm). The samples that remain in this undersampled dataset constitute i.i.d. draws from the test distribution. Therefore, while a classifier trained on this pruned dataset cannot suffer biases due to distribution shift, this algorithm is clearly wasteful, as it discards training samples.
\begin{figure}[t]
    \centering
    \includegraphics[height=4.8cm]{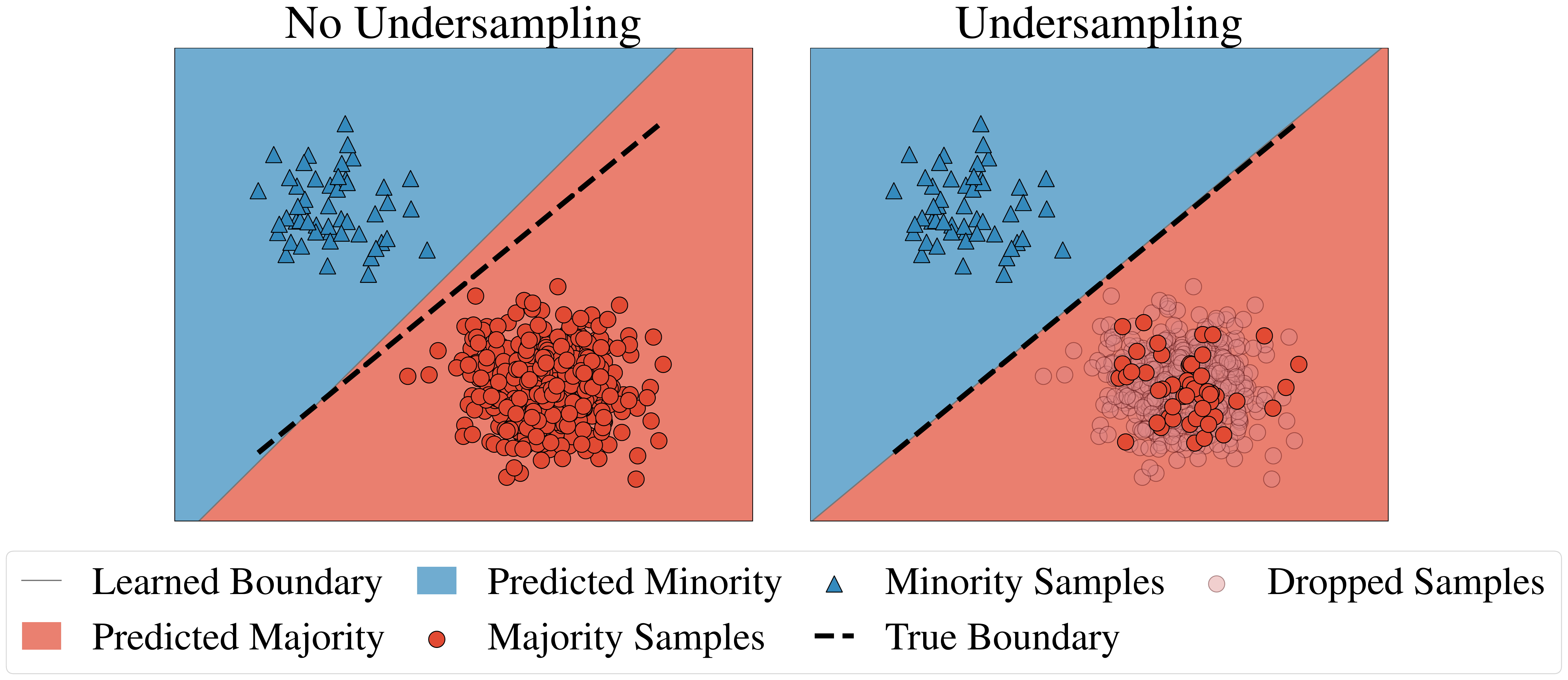}
    \caption{Example with linear models and linearly separable data. On the left we have the maximum margin classifier over the entire dataset, and on the right we have the maximum margin classifier over the undersampled dataset. The undersampled classifier is less biased and aligns more closely with the true boundary.}
    \label{fig:linear_models_undersampling}
\end{figure}
This perceived inefficiency of undersampling has led to the design of several algorithms to combat such distribution shift~\citep{chawla2002smote,lipton2018detecting,sagawa2019distributionally,cao2019learning,menon2020long,ye2020identifying,kini2021label,wang2021importance}. In spite of this algorithmic progress, the simple baseline of training models on an undersampled dataset remains competitive. In the case of label shift, where one class label is overrepresented in the training data, this has been observed by \citet{cui2019class,cao2019learning}, and \citet{yang2020rethinking}. While in the case of group-covariate shift, a study by \citet{idrissi2021simple} showed that the empirical effectiveness of these more complicated algorithms is limited.

For example, \citet{idrissi2021simple} showed that on the group-covariate shift CelebA dataset the worst-group accuracy of a ResNet-50 model on the undersampled CelebA dataset which 
\emph{discards 97\%} of the available training data is as good as methods that use all of available data such as importance-weighted ERM~\citep{shimodaira2000improving}, Group-DRO~\citep{sagawa2019distributionally} and Just-Train-Twice~\citep{liu2021just}. In Table~\ref{tab:undersampling_performance}, we report the performance of the undersampled classifier compared to the state-of-the-art-methods in the literature across several label shift and group-covariate shift datasets. We find that, although undersampling isn't always the optimal robustness algorithm, it is typically a very competitive baseline and within $1\mbox{--}4\%$ the performance of the best method. 
\begin{table}[ht]
	\caption{Performance of undersampled classifier compared to the best classifier across several popular label shift and group-covariate shift datasets. 
	When reporting worst-group accuracy we denote it by a $^\star$. When available, we report the $95\%$ confidence interval. We find that the undersampled classifier is always within $1\mbox{--}4\%$ of the best performing robustness algorithm, except on the CIFAR100 and MultiNLI datasets. In Appendix~\ref{sec:tablereferences} we provide more details about each of the results in the table.}
	\label{tab:undersampling_performance}
	\centering
	\setlength{\tabcolsep}{2.5pt}
	\renewcommand{\arraystretch}{1.4}
\begin{tabular}{|cc|>{\centering\arraybackslash}p{7em}c|}
\hline  \rule{0pt}{15pt}\multirow{2}{*}{Shift Type} &  \multirow{2}{*}{Dataset/Paper}  & \multicolumn{2}{c|}{Test/Worst-Group$^\star$ Accuracy}  
\\ 
 \rule{0pt}{15pt} & & Best & Undersampled  \\ 
\hline
\multirow{2}{*}{Label}&Imb. CIFAR10 (step 10) \citep{cao2019learning} & $87.81$ & $84.59$\\
&Imb. CIFAR100 (step 10) \citep{cao2019learning} & $59.46$ & $53.08$\\
\hline
&CelebA \citep{idrissi2021simple} & $86.9\pm1.1^\star$ &  $85.6\pm2.3^\star$\\
&Waterbirds  \citep{idrissi2021simple} &  $87.6\pm1.6^\star$ & $89.1\pm1.1^\star$\\
\multirow{-2}{*}{Group-Covariate}&MultiNLI  \citep{idrissi2021simple} & $78.0\pm0.7^\star$& $68.9\pm0.8^\star$\\
 &CivilComments \citep{idrissi2021simple} & $72.0\pm1.9^\star$ & $71.8\pm 1.4^\star$\\
\hline
\end{tabular}
\end{table}

Inspired by the strong performance of undersampling in these experiments, we ask:  
\begin{center}
\emph{Is the performance of a model under distribution shift fundamentally\\ constrained by the lack of minority group samples?} 
\end{center}
To answer this question we analyze the \emph{minimax excess risk}. We lower bound the minimax excess risk to prove that the performance of \emph{any} algorithm is lower bounded only as a function of the minority samples ($\nmin$). This shows that even if a robust algorithm optimally trades off between the bias and the variance, it is fundamentally constrained by the variance on the minority group which decreases only with $\nmin$. 

\paragraph{Our contributions.} In our paper, we consider the well-studied setting of nonparametric binary classification~\citep{tsybakov2010nonparametric}. By operating in this nonparametric regime we are able to study the properties of undersampling in rich data distributions, but are able to circumvent the complications that arise due to the optimization and implicit bias of parametric models. 

We provide insights into this question in the label shift scenario, where one of the labels is overrepresented in the training data, $\ptrain(y=1)\ge \ptrain(y=-1)$, whereas the test samples are equally likely to come from either class. Here the class-conditional distribution $\p(x \mid y)$ is Lipschitz in $x$. We show that in the label shift setting there is a fundamental constraint, and that the minimax excess risk of \emph{any robust learning method} is lower bounded by $1/\nmin^{1/3}$. That is, minority group samples fundamentally constrain performance under distribution shift. Furthermore, by leveraging previous results about nonparametric density estimation~\citep{freedman1981histogram} we show a matching upper bound on the excess risk of a standard binning estimator trained on an undersampled dataset to demonstrate that undersampling is optimal.


Further, we experimentally show in a label shift dataset (Imbalanced Binary CIFAR10) that the accuracy of popular classifiers generally follow the trends predicted by our theory. When the minority samples are increased, the accuracy of these classifiers increases drastically, whereas when the number of majority samples are increased the gains in the accuracy are marginal at best.

We also study the covariate shift case. In this setting, there has been extensive work studying the effectiveness of transfer~\citep{kpotufe2018marginal,hanneke2019value} from train to test distributions, often focusing on deriving specific conditions under which this transfer is possible. In this work, we demonstrate that when the overlap (defined in terms of total variation distance) between the group distributions $\pa$ and $\pb$ is small, transfer is difficult, and that the minimax excess risk of any robust learning algorithm is lower bounded by $1/\nmin^{1/3}$. While this prior work also shows the impossibility of using majority group samples in the extreme case with no overlap, our results provide a simple lower bound that shows that the amount of overlap needed to make transfer feasible is unrealistic. We also show that this lower bound is tight, by proving an upper bound on the excess risk of the binning estimator acting on the undersampled dataset.


Taken together, our results underline the need to move beyond designing ``general-purpose'' robustness algorithms (like importance-weighting~\citep{cao2019learning,menon2020long,kini2021label,wang2021importance}, g-DRO~\citep{sagawa2019distributionally}, JTT~\citep{liu2021just}, SMOTE~\citep{chawla2002smote}, etc.) that are agnostic to the structure in the distribution shift. Our worst case analysis highlights that to successfully beat undersampling, an algorithm must leverage additional structure in the distribution shift. 


\section{Related work} \label{s:related_work}
On several group-covariate shift benchmarks (CelebA, CivilComments, Waterbirds), \citet{idrissi2021simple} showed that training ResNet classifiers on an undersampled dataset either outperforms or performs as well as other popular reweighting methods like Group-DRO~\citep{sagawa2019distributionally}, reweighted ERM, and Just-{T}rain-{T}wice~\citep{liu2021just}. They find Group-DRO performs comparably to undersampling, while both tend to outperform methods that don't utilize group information.

One classic method to tackle distribution shift is importance weighting \citep{shimodaira2000improving}, which reweights the loss of the minority group samples to yield an unbiased estimate of the loss. However, recent work~\citep{byrd2019effect, xu2020understanding} has demonstrated the ineffectiveness of such methods when applied to overparameterized neural networks. Many followup papers~\citep{cao2019learning,ye2020identifying,menon2020long,kini2021label,wang2021importance} have introduced methods that modify the loss function in various ways to address this. However, despite this progress undersampling remains a competitive alternative to these importance weighted classifiers. 


Our theory draws from the rich literature on non-parametric classification~\citep{tsybakov2010nonparametric}.  Apart from borrowing this setting of nonparametric classification, we also utilize upper bounds on the estimation error of the simple histogram estimator~\citep{freedman1981histogram,gyorfi1985nonparametric} to prove our upper bounds in the label shift case. Finally, we note that to prove our minimax lower bounds we proceed by using the general recipe of reducing from estimation to testing~\citep[][Chapter~15]{wainwright2019high}. One difference from this standard framework is that our training samples shall be drawn from a different distribution than the test samples used to define the risk.

Past work has established lower bounds on the minimax risk for binary classification without distribution shift for general VC classes~\citep[see, e.g., ][]{massart2006risk}. Note that, these bounds are not directly applicable in the distribution shift setting, and consequently these lower bounds scale with the total number of samples $n = \nmaj +\nmin$ rather than with the minority number of samples $(\nmin)$. There are also refinements of this lower bound to obtain minimax lower bounds for cost-sensitive losses that penalize errors on the two class classes differently~\citep{kamalaruban2018minimax}. By carefully selecting these costs it is possible to apply these results in the label shift setting. However, these lower bounds remain loose and decay with $n$ and $\nmaj$ in contrast to the tighter $\nmin$ dependence in our lower bounds. We provide a more detailed discussion about potentially applying these lower bounds to the label shift setting after the presentation of our theorem in Section~\ref{s:label_shift_lower_bounds}.

There is rich literature that studies domain adaptation and transfer learning under label shift~\citep{maity2020minimax} and covariate shift~\citep{ben2006analysis,david2010impossibility,ben2010theory,ben2012hardness,ben2014domain,berlind2015active,kpotufe2018marginal,hanneke2019value}. The principal focus of this line of work was to understand the value of unlabeled data from the target domain, rather than to characterize the relative value of the number of labeled samples from the majority and minority groups. Among these papers, most closely related to our work are those in the covariate shift setting~\citep{kpotufe2018marginal,hanneke2019value}. Their lower bound results can be reinterpreted to show that under covariate shift in the absence of overlap, the minimax excess risk is lower bounded by $1/\nmin^{1/3}$. We provide a more detailed comparison with their results after presenting our lower bounds in Section~\ref{s:group_shift_lower_bounds}.



Finally, we note that \citet{arjovsky2022throwing} recently showed that undersampling can improve the worst-class accuracy of linear SVMs in the presence of label shift. In comparison, our results hold for arbitrary classifiers with the rich nonparametric data distributions.

\section{Setting} \label{s:setting}
In this section, we shall introduce our problem setup and define the types of distribution shift that we consider.

\subsection{Problem setup}
The setting for our study is nonparametric binary classification with Lipschitz data distributions. We are given $n$ training datapoints $\mathcal{S}:=\{(x_1,y_1),\ldots,(x_n,y_n)\} \in ([0,1] \times \{-1,1\})^n$ that are all drawn from a \emph{train} distribution $\ptrain$. During test time, the data shall be drawn from a \emph{different} distribution $\ptest$. Our paper focuses on the robustness to this shift in the distribution from train to test time. To present a clean analysis, we study the case where the features $x$ are bounded scalars, however, it is easy to extend our results to the high-dimensional setting. 

Given a classifier $f: [0,1] \to \{-1,1\}$, we shall be interested in the test error (risk) of this classifier under the test distribution $\ptest$:
\begin{align*}
    R(f; \ptest) : = \mathbb{E}_{(x,y)\sim \ptest}\left[\mathbf{1}(f(x)\neq y)\right].
\end{align*}

\subsection{Types of distribution shift} \label{s:dist_shifts}
We assume that $\ptrain$ consists of a mixture of two groups of unequal size, and $\ptest$ contains equal numbers of samples from both groups. Given a majority group distribution $\pmaj$ and a minority group distribution $\pmin$, the learner has access to $\nmaj$ majority group samples and $\nmin$ minority group samples:
\begin{align*}
    \Smaj \sim \pmaj^{\nmaj} \quad \text{and} \quad \Smin \sim \pmin^{\nmin}.
\end{align*}
Here $\nmaj >n/2$ and $\nmin < n/2$ with $\nmaj + \nmin = n$. The full training dataset is $\S = \Smaj \cup \Smin = \{(x_1,y_1),\ldots,(x_n,y_n)\}$. We assume that the learner has access to the knowledge whether a particular sample $(x_i,y_i)$ comes from the majority or minority group.

The test samples will be drawn from $\ptest = \frac{1}{2}\pmaj + \frac{1}{2}\pmin$, a uniform mixture over $\pmaj$ and $\pmin$. Thus, the training dataset is an imbalanced draw from the distributions $\pmaj$ and $\pmin$, whereas the test samples are balanced draws. We let $\rho := \nmaj / \nmin>1$ denote the imbalance ratio in the training data. We consider the uniform mixture during test time since the resulting test loss is of the same order as the worst-group loss.

We focus on two-types of distribution shifts: label shift and group-covariate shift that we describe below.

\subsubsection{Label shift}\label{s:label_shift_description}
 In this setting, the imbalance in the training data comes from there being more samples from one class over another. Without loss of generality, we shall assume that the class $y=1$ is the majority class. Then, we define the majority and the minority class distributions as $$\pmaj(x,y) = \p_1(x) \mathbf{1}(y=1) \quad \text{and}\quad \pmin = \p_{-1}(x) \mathbf{1}(y=-1),$$ where $\p_1,\p_{-1}$ are class-conditional distributions over the interval $[0,1]$. We assume that class-conditional distributions $\p_i$ have densities on $[0,1]$ and that they are 1-Lipschitz: for any $x,x' \in [0,1]$,
\begin{align*}
    |\p_i(x) - \p_i(x')| \leq |x - x'|.
\end{align*}
We denote the class of pairs of distributions $(\pmaj,\pmin)$ that satisfy these conditions by $\gP_{\mathsf{LS}}$. We note that such Lipschitzness assumptions are common in the literature~\citep[see][]{tsybakov2010nonparametric}.

\subsubsection{Group-covariate shift}\label{s:group_shift_description}
 In this setting, we have two groups $\{a, b\}$, and corresponding to each of these groups is a distribution (with densities) over the features $\pa(x)$ and $\pb(x)$.  We let $a$ correspond to the majority group and $b$ correspond to the minority group. Then, we define $$\pmaj(x,y) = \pa(x) \p(y \mid x) \quad \text{and} \quad \pmin(x,y) = \pb(x)\p(y\mid x).$$ We assume that for $y \in \{-1,1\}$, for all $x,x' \in [0,1]$:
\begin{align*}
    \big|\p(y \mid x) - \p(y \mid x')\big| \le |x-x'|,
\end{align*}
that is, the distribution of the label given the feature is $1$-Lipschitz, and it varies slowly over the domain. 

To quantify the shift between the train and test distribution, we define a notion of overlap between the group distributions $\pa$ and $\pb$ as follows:
\begin{align*}
    \overlap(\pa,\pb) := 1- \mathrm{TV}(\pa,\pb)
\end{align*}
where $\mathrm{TV}(\pa,\pb):= \sup_{E \subseteq [0,1]} |\pa(E) - \pb(E)|$, denotes the total variation distance between $\pa$ and $\pb$. 
Notice that when $\pa$ and $\pb$ have disjoint supports, $\mathrm{TV}(\pa,\pb) = 1$ and therefore $\overlap(\pa,\pb) = 0$. On the other hand when $\pa =\pb$, $\mathrm{TV}(\pa,\pb) = 0$ and $\overlap(\pa,\pb) = 1$. When the overlap is $1$, the majority and minority distributions are identical and hence we have no shift between train and test. Observe that $\overlap(\pa,\pb)  = \overlap(\pmaj,\pmin)$ since $\p(y \mid x)$ is shared across $\pmaj$ and $\pmin$. 

Given a level of overlap $\tau \in [0,1]$ we denote the class of pairs of distributions $(\pmaj,\pmin)$ with overlap at least $\tau$ by $\gP_{\mathsf{GS}}(\tau)$. It is easy to check that, $\gP_{\mathsf{GS}}(\tau) \subseteq \gP_{\mathsf{GS}}(0)$ at any overlap level $\tau \in [0,1]$.

Considering a notion of overlap between the marginal distributions $\pa(x)$ and $\pb(x)$ is natural in the group covariate setting since the conditional distribution that we wish to estimate $\p(y \mid x)$ remains constant from train to test time. Higher overlap between $\pa$ and $\pb$ allows a classifier to learn more about the underlying conditional distribution $\p(y \mid x)$ when it sees samples from either group. In contrast, in the label shift setting $\p(x \mid y)$ remains constant from train to test time and higher overlap between $\p(x \mid 1)$ and $\p(x\mid -1)$ does not help to estimate $\p(y \mid x)$.

\section{Lower bounds on the minimax excess risk}
\label{s:lower_bounds}
In this section, we shall prove our lower bounds that show that the performance of any algorithm is constrained by the number of minority samples $\nmin$. Before we state our lower bounds, we need to introduce the notion of excess risk and minimax excess risk.
\paragraph{Excess risk and minimax excess risk.} We measure the performance of an algorithm $\cA$ through its excess risk defined in the following way. Given an algorithm $\cA$ that takes as input a dataset $\S$ and returns a classifier $\cA^{\S}$, and a pair of distributions $(\pmaj,\pmin)$ with $\ptest = \frac{1}{2}\pmaj+\frac{1}{2}\pmin$, the \emph{expected excess risk} is given by
\begin{align}\label{e:definition_excess_risk}
   \eR[\cA;(\pmaj,\pmin)] := \E_{\S \sim \pmaj^\nmaj \times \pmin^\nmin}\big[R(\cA^\S;\ptest))-R(f^{\star}(\ptest);\ptest)\big],
\end{align}
where $f^{\star}(\ptest)$ is the Bayes classifier that minimizes the risk $R(\cdot;\ptest)$.  The first term corresponds to the expected risk for the algorithm when given $\nmaj$ samples from $\pmaj$ and $\nmin$ samples from $\pmin$, whereas the second term corresponds to the Bayes error for the problem. 

Excess risk does not let us characterize the inherent difficulty of a problem, since for any particular data distribution $(\pmaj,\pmin)$ the best possible algorithm $\cA$ to minimize the excess risk would be the trivial mapping $\cA^\S=f^{\star}(\ptest)$. Therefore, to prove meaningful lower bounds on the performance of algorithms we need to define the notion of minimax excess risk~\citep[see][Chapter~15]{wainwright2019high}. Given a class of pairs of distributions~$\gP$ define
\begin{align}\label{e:definition_minimax_risk}
    \mmR(\gP) := \inf_{\cA} \sup_{(\pmaj,\pmin)\in\gP} \eR[\cA;(\pmaj,\pmin)],
\end{align}
where the infimum is over all measurable estimators $\cA$. The minimax excess risk is the excess risk of the ``best'' algorithm in the worst case over the class of problems defined by $\gP$.

\subsection{Label shift lower bounds}\label{s:label_shift_lower_bounds}
We demonstrate the hardness of the label shift problem in general by establishing a lower bound on the minimax excess risk. 
\begin{restatable}{theorem}{lblshiftlower}\label{t:label_shift_lower_bound}
Consider the label shift setting described in Section~\ref{s:label_shift_description}. Recall that $\gP_{\mathsf{LS}}$ is the class of pairs of distributions $(\pmaj,\pmin)$ that satisfy the assumptions in that section. 
The minimax excess risk over this class is lower bounded as follows:
\begin{align}
    \mmR(\gP_{\mathsf{LS}}) &= \inf_{\cA} \sup_{(\pmaj,\pmin)\in \gP_{\mathsf{LS}}} \eR[\cA;(\pmaj,\pmin)]\ge \frac{1}{600} \frac{1}{\nmin^{1/3}}. \label{e:lower_bound_label_shift}
\end{align}
\end{restatable}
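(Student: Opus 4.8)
The plan is to use the standard reduction from estimation to testing (Le Cam's two-point method, or more precisely Assouad's lemma / a multiple-hypothesis version), as referenced after the theorem statement. The key idea is to construct a family of pairs of distributions $(\pmaj,\pmin)\in\gP_{\mathsf{LS}}$ that are hard to distinguish from $\nmin$ samples of the minority class, yet whose Bayes classifiers differ substantially, so that any algorithm must pay an excess risk on at least one of them. Since $\pmaj$ is fixed across the family (or nearly so), the majority samples carry no information about which hypothesis we are in, so the effective sample size is $\nmin$, not $n$ — this is exactly what produces the $\nmin$ dependence rather than $n$.

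Concretely, I would first fix the majority class density, say $\p_1 \equiv 1$ (uniform on $[0,1]$), so that the $\nmaj$ majority samples are completely uninformative about the minority distribution. Then I would partition $[0,1]$ into $m$ bins of width $1/m$, where $m$ is a parameter to be optimized (I expect $m \asymp \nmin^{1/3}$). On each bin I place a small Lipschitz ``bump'' of height $O(1/m)$ — this is forced by the $1$-Lipschitz constraint, since a bump localized to a bin of width $1/m$ can have height at most $\sim 1/m$ — and I let the sign of each bump be governed by an independent hypothesis bit $\theta_j \in \{-1,+1\}$. This gives a family of $2^m$ minority densities $\p_{-1}^{(\theta)}$, all $1$-Lipschitz and all valid densities for small enough constants. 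Because $\ptest = \frac12\pmaj + \frac12\pmin$ and the Bayes classifier thresholds $\p_1$ against $\p_{-1}^{(\theta)}$, flipping bit $\theta_j$ flips the Bayes-optimal label on bin $j$; the resulting change in risk, when a classifier disagrees with the truth on that bin, is of order (bump height)$\times$(bin width) $\asymp (1/m)\cdot(1/m) = 1/m^2$ per bin, hence the per-coordinate separation in the Hamming/Assouad sense is $\asymp 1/m^2$, times $m$ bins gives potential excess risk $\asymp 1/m$.

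Next I would bound the information: the KL divergence (or squared Hellinger) between the joint laws of $\nmin$ minority samples under $\theta$ and under a neighbor $\theta'$ differing in one bit is $\nmin \cdot O((\text{bump height})^2 / (\text{base density})) \asymp \nmin/m^2$ per sample on that bin, i.e. $\asymp \nmin/m^2$ total. Assouad's lemma then gives a lower bound of the form $\mmR \gtrsim m \cdot \frac{1}{m^2}\cdot(1 - \sqrt{\nmin/m^2}) = \frac{1}{m}(1 - \sqrt{\nmin}/m)$. Choosing $m \asymp \nmin^{1/2}$ would make the parenthetical a constant but give only $\nmin^{-1/2}$; to get $\nmin^{-1/3}$ I instead need to be more careful about the geometry — the right calculation is that flipping a bit changes the density by $\delta$ on a region of measure $\mu$, contributing excess risk $\asymp \delta\mu$ and KL $\asymp \nmin \delta^2 \mu$, and the Lipschitz constraint couples $\delta$ and $\mu$ via $\delta \lesssim \sqrt{\mu}$ (a bump of width $w$ has height $\lesssim w$ and measure $\lesssim w^2$, so $\delta \lesssim \mu^{1/4}$... ) — so I would carefully re-optimize the bin width $w$ to balance $m\cdot\delta w$ against the constraint that total KL $\nmin \cdot m \cdot \delta^2 w$ stays $O(1)$, with $m = 1/w$; this yields the exponent $1/3$ and the explicit constant $1/600$ after tracking the numerical factors through Assouad (or through a careful Le Cam two-point argument with a single well-chosen bump).

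The main obstacle I anticipate is getting the separation/information tradeoff to land on the exponent $1/3$ rather than $1/2$ or $1/4$: this depends on correctly identifying which quantity the Lipschitz constraint actually bottlenecks (bump height versus bump width versus number of bins) and on whether the risk separation is measured at the ``edge'' of a bin (where the Bayes boundary moves) or over the whole bin. A secondary technical point is verifying that each constructed $\p_{-1}^{(\theta)}$ is genuinely a probability density (nonnegative, integrates to $1$) while staying $1$-Lipschitz with the chosen constant — this forces the bump amplitudes to be a small absolute constant, which propagates into the $1/600$. I would also need to double-check that the Bayes error term $R(f^\star(\ptest);\ptest)$ is handled correctly: since it is subtracted off, I only need lower bounds on $R(\cA^\S;\ptest) - R(f^\star;\ptest)$, and the standard identity $R(f;\ptest)-R(f^\star;\ptest) = \E_{x\sim\ptest}[|2\eta(x)-1|\mathbf 1(f(x)\neq f^\star(x))]$ (where $\eta$ is the test-time regression function) reduces everything to controlling where $f$ disagrees with $f^\star$, weighted by the margin $|2\eta-1| \asymp \delta$ near the flipped bins.
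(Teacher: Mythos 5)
Your overall architecture is sound and in fact somewhat cleaner than the paper's. The paper does not fix the majority class: it perturbs \emph{both} class-conditionals with odd ``hat'' bumps indexed by $\{-1,0,1\}^K\times\{-1,0,1\}^K$, reduces the minimax risk to the gap between the Bayes risk of the posterior mixture and the average instance-wise Bayes risk (Lemma~\ref{l:m_lb} plus Le Cam's TV identity), and then extracts the $\nmin$-dependence only from those hypothesis pairs whose flip touches the minority class alone, bounding the relevant TV via Bretagnolle--Huber. Your variant --- freeze $\p_1\equiv 1$ so the $\nmaj$ majority samples carry literally zero information, and run Assouad over $\{\pm1\}^m$ sign patterns on the minority density --- is a legitimate sub-family of $\gP_{\mathsf{LS}}$ and closely parallels the paper's own proof of the \emph{group-shift} lower bound (Lemmas~\ref{l:group_shift_expected_minimax_risk} and~\ref{l:group_shift_KL_upper}). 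Two details you flag do need care and are resolved in the paper by the same device: the per-bin perturbation must be an odd, zero-mean hat function (a one-signed ``bump'' with sign $\theta_j$ would break $\int\p_{-1}^{(\theta)}=1$), which means flipping $\theta_j$ swaps the Bayes label on the two halves of the bin rather than on the whole bin --- the per-bin separation $\int_{I_j}|2\eta-1|\,\ptest\asymp\int|\phi|\asymp 1/m^2$ survives unchanged (Lemma~\ref{l:int_abs_phi}). Also, the specific constant $1/600$ is an artifact of the paper's construction; your construction would produce a different (and needing-to-be-checked) constant.

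The genuine gap is the information bound, and as written your accounting does not land on the exponent $1/3$. The per-coordinate KL between the laws of the $\nmin$ minority samples under neighboring $\theta,\theta'$ is obtained by integrating the $\chi^2$-type integrand over the \emph{support} of the flipped bump: with bump height $\delta\asymp w$ on a bin of width $w$ over a unit base density, one sample contributes $\asymp\delta^2 w\asymp w^3$, so the per-bit KL is $\asymp\nmin w^3$ --- this is exactly the paper's Lemma~\ref{l:kl_phi} combined with the KL chain rule. Your stated value $\nmin/m^2$ drops the factor of the bin width, and your proposed repair (``total KL $\nmin\cdot m\cdot\delta^2 w=O(1)$'') is the wrong quantity for Assouad: with $m=1/w$ and $\delta\asymp w$ it forces $w\asymp\nmin^{-1/2}$ and yields only a $\nmin^{-1/2}$ lower bound. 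The correct condition is that the \emph{per-coordinate} divergence $\nmin w^3$ be a constant strictly less than $1$, giving $w\asymp\nmin^{-1/3}$ and total excess risk $\asymp m\cdot\delta w\asymp w\asymp\nmin^{-1/3}$. Once you fix this single calculation the proof goes through; without it, the bin width is mis-tuned and the claimed rate does not follow from the steps you wrote down.
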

We establish this result in Appendix~\ref{s:proof_of_label_shift_theorem}. We show that rather surprisingly, the lower bound on the minimax excess risk scales only with the number of minority class samples $\nmin^{1/3}$, and does not depend on $\nmaj$. Intuitively, this is because any learner must predict which class-conditional distribution ($\p(x\mid 1)$ or $\p(x \mid -1)$) assigns higher likelihood at that $x$. To interpret this result, consider the extreme scenario where $\nmaj \to \infty$ but $\nmin$ is finite. In this case, the learner has full information about the majority class distribution. However, the learning task continues to be challenging since any learner would be uncertain about whether the minority class distribution assigns higher or lower likelihood at any given $x$. This uncertainty underlies the reason why the minimax rate of classification is constrained by the number of minority samples $\nmin$.


We briefly note that, applying minimax lower bounds from the transfer learning literature~\citep[][Theorem~3.1 with $\alpha = 1$, $\beta =0$ and $d=1$]{maity2020minimax} to our problem leads to a more optimistic lower bound of $1/n^{1/3}$. Our lower bounds that scale as $1/\nmin^{1/3}$, uncover the fact that only adding minority class samples helps reduce the risk.

As noted above in the introduction, it is possible to obtain lower bounds for the label shift setting by applying bounds from the cost-sensitive classification literature. However, as we shall argue below they are loose and predict the incorrect trend when applied in this setting. Consider the result \citep[][Theorem~4]{kamalaruban2018minimax} which is a minimax lower bound for cost sensitive binary classification that applies to VC classses (which does not capture the nonparameteric setting studied here but it is illuminating to study how that bound scales with the imbalance ratio $\rho = \nmaj/\nmin$). Assume that the joint distribution during training is a mixture distribution given by $\p = \frac{\rho}{1+\rho} \pmaj+\frac{1}{1+\rho} \pmin$ so that on average the ratio of the number of samples from the majority and minority class is equal to $\rho$. Then by applying their lower bound we find that it scales with $1/(n \rho)$ (see Appendix~\ref{s:cost_sensitive_discussion} for a detailed calculation). This scales inversely with $\rho$ the imbalance ratio and incorrectly predicts that the problem gets easier as the imbalance is larger. In contrast, our lower bound scales with $1/\nmin = (1+\rho)/n$, which correctly predicts that as the imbalance is larger, the minimax test error is higher.


\subsection{Group-covariate shift lower bounds}\label{s:group_shift_lower_bounds}

Next, we shall state our lower bound on the minimax excess risk that demonstrates the hardness of the group-covariate shift problem. 
\begin{restatable}{theorem}{groupshiftlower}\label{t:group_shift_lower_bound}Consider the group shift setting described in Section~\ref{s:group_shift_description}. Given any overlap $\tau \in [0,1]$ recall that $\gP_{\mathsf{GS}}(\tau)$ is the class of distributions such that $\overlap(\pmaj,\pmin)\ge \tau$. The minimax excess risk in this setting is lower bounded as follows:
\begin{align*}
    &\mmR(\gP_{\mathsf{GS}}(\tau)) = \inf_{\cA} \sup_{(\pmaj,\pmin)\in \gP_{\mathsf{GS}}(\tau)} \eR[\cA;(\pmaj,\pmin)]\\ &\hspace{1.3in}\ge \frac{1}{200(\nmin\cdot (2-\tau)+\nmaj\cdot \tau)^{1/3}}\ge\frac{1}{200\nmin^{1/3}( \rho\cdot \tau+2)^{1/3}} , \numberthis \label{e:lower_bound_group_shift}
\end{align*}
where $\rho = \nmaj/\nmin >1$. 
\end{restatable}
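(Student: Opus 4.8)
The plan is to prove the bound through the classical reduction from estimation to testing~\citep[Chapter~15]{wainwright2019high}, realized as an Assouad-type argument over a hypercube of hard instances, with a construction designed so that the effective number of samples is dictated by the train/test mismatch. Fix a bin width $w=1/m$ ($m$ to be chosen at the end) and build a family $\{(\pmaj^{(b)},\pmin^{(b)})\}_{b\in\{\pm1\}^{m}}$ indexed by sign vectors. The feature marginals $\pa,\pb$ are the same for all $b$ and are chosen so that $\overlap(\pa,\pb)\ge\tau$: take $\pb$ uniform on $[0,1]$, and let $\pa$ spread a fraction $\tau$ of its mass uniformly on $[0,1]$ and concentrate the remaining $1-\tau$ on a tiny sub-interval. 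Since only $\p(y\mid x)$ is required to be Lipschitz, $\pa$ may be arbitrarily peaked, so this ``majority-only'' bump occupies negligible length while $\mathrm{TV}(\pa,\pb)\le 1-\tau$. On every bin $j$ outside the bump set $\p^{(b)}(y=1\mid x)=\tfrac12+b_j\phi_j(x)$ for a tent $\phi_j$ of height $h\le w/2$ supported on bin $j$, and set $\p(y=1\mid x)=\tfrac12$ on the bump; since the tents vanish at bin boundaries and have slope at most $2h/w\le1$, the conditional $\p^{(b)}(y\mid x)$ is globally $1$-Lipschitz, so every instance lies in $\gP_{\mathsf{GS}}(\tau)$.

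To convert excess risk into testing, note that the Bayes rule on instance $b$ predicts $\sign(b_j)$ on bin $j$ and that, writing $g^{(b)}=\sum_j b_j\phi_j$, every classifier $f$ satisfies $R(f;\ptest^{(b)})-R(f^{\star};\ptest^{(b)})=\E_{x\sim\ptest}\bigl[2\,|g^{(b)}(x)|\,\mathbf{1}(f(x)\neq\sign(g^{(b)}(x)))\bigr]$; integrating one tent against the (locally constant) test density shows that getting the sign wrong on bin $j$ costs excess risk of order $h\cdot\ptest(\text{bin }j)$, and summing over the perturbed bins the total test mass at stake is $\ptest(\text{outside bump})=\tfrac12(1+\tau)$, a constant. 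For identifiability, observe that flipping the single coordinate $b_j$ alters the law of the training sample \emph{only} through the labels of the training points that land in bin $j$, and the expected number of those is $N_j=\nmaj\,\pa(\text{bin }j)+\nmin\,\pb(\text{bin }j)$; this is precisely where the two-sample structure (training from $\pmaj^{\nmaj}\times\pmin^{\nmin}$, risk measured under $\ptest$) parts ways with the textbook single-sample set-up. A $\chi^2$/KL computation bounds the per-coordinate divergence between neighboring hypotheses by $\lesssim h^2 N_j$, so Assouad's lemma yields $\mmR(\gP_{\mathsf{GS}}(\tau))\gtrsim h\cdot\tfrac12(1+\tau)\cdot\bigl(1-c\,h\sqrt{\max_j N_j}\bigr)$.

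It remains to optimize. Choose the per-bin marginal masses so that every perturbed bin carries the same expected count $N_j\equiv N$; since the perturbed bins collectively receive exactly the training points falling outside the bump, this forces $m\cdot N=\nmin(1-\tau)+(\nmaj+\nmin)\tau=\nmin+\nmaj\tau$. Taking $h\asymp 1/\sqrt N$ keeps the identifiability factor bounded below by a constant, while the Lipschitz budget demands $h\le w/2=1/(2m)$, i.e.\ $m\lesssim\sqrt N$; combining $mN\asymp\nmin+\nmaj\tau$ with $m\lesssim\sqrt N$ gives $m\asymp(\nmin+\nmaj\tau)^{1/3}$ and $h\asymp m^{-1}$, hence $\mmR(\gP_{\mathsf{GS}}(\tau))\gtrsim(\nmin+\nmaj\tau)^{-1/3}$. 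Since $\nmin+\nmaj\tau\le\nmin(2-\tau)+\nmaj\tau\le\nmin(2+\rho\tau)$, this implies both inequalities in~\eqref{e:lower_bound_group_shift}, and bookkeeping of the absolute constants through the tents, the $\chi^2$ bound and Assouad's lemma yields the prefactor $1/200$. I expect the main obstacle to be the information-theoretic step with two distinct sampling laws: one must justify that flipping a single coordinate perturbs the joint data distribution only through the in-bin labels, so that Assouad tensorizes bin-by-bin with the correct weights $N_j$, and separately relate genuine classification error on a bin (where $f$ may be correct on part of it) to binary coordinate recovery, which needs the tents to provide enough separation. Conceptually, $\nmaj$ enters only multiplied by $\tau$ because perturbing the majority-only region would merely enlarge the required $m$ without a commensurate gain in test mass at stake --- the quantitative echo of the fact that surplus majority samples outside the overlap are useless, which is exactly why undersampling is hard to beat.
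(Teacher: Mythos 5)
Your proposal is correct and follows essentially the same strategy as the paper's proof: a hypercube of tent-function perturbations to $\p(y\mid x)$ on width-$1/K$ bins, a per-bin information bound proportional to the expected number of training points landing in that bin (giving $\mathrm{KL}\lesssim h^2 N_j$, exactly as in the paper's Lemmas~\ref{l:kl_phi} and \ref{l:group_shift_KL_upper}), and the balance $h\asymp 1/K$ against $h^2 N_j\lesssim 1$ yielding $K\asymp(\text{effective count})^{1/3}$. The one substantive difference is the choice of hard marginals: the paper takes $\pa$ to be the step density $2-\tau$ on $[0,\tfrac12)$ and $\tau$ on $[\tfrac12,1]$ with $\pb=2-\pa$, perturbs all bins, and at the end discards the ``easy'' half where the expected count is $(\nmaj(2-\tau)+\nmin\tau)/K$, retaining the half with count $(\nmin(2-\tau)+\nmaj\tau)/K$; you instead park the majority's surplus mass $1-\tau$ on a negligible bump where the conditional is held at $\tfrac12$, so that every perturbed bin has count $(\nmin+\nmaj\tau)/m$. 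Since $\nmin+\nmaj\tau\le\nmin(2-\tau)+\nmaj\tau$, your variant gives a marginally stronger bound that implies both inequalities in~\eqref{e:lower_bound_group_shift}; your use of Assouad with a $\chi^2$/TV bound in place of the paper's per-bin Le Cam plus Bretagnolle--Huber is an interchangeable packaging of the same computation. The two technical points you flag (that flipping $b_j$ perturbs the sample law only through in-bin labels, and that per-bin classification error must be reduced to sign recovery of $b_j$) are genuine but standard, and are handled in the paper by the conditional-independence argument in Lemma~\ref{l:group_shift_KL_upper} and the pointwise Bayes-risk computation in Lemma~\ref{l:group_shift_expected_minimax_risk}, respectively.
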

We prove this theorem in Appendix~\ref{s:proof_of_group_shift_theorem}. 

We see that in the \emph{low overlap} setting $(\tau \ll 1/\rho)$, the minimax excess risk is lower bounded by $1/\nmin^{1/3}$, and we are fundamentally constrained by the number of samples in minority group. To see why this is the case, consider the extreme example with $\tau=0$ where $\pa$ has support $[0,0.5]$ and $\pb$ has support $[0.5,1]$. The $\nmaj$ majority group samples from $\pa$ provide information about the correct label predict in the interval $[0,0.5]$ (the support of $\pa$). However, since the distribution $\p(y\mid x)$ is $1$-Lipschitz in the worst case these samples provide very limited information about the correct predictions in $[0.5,1]$ (the support of $\pb$). Thus, predicting on the support of $\pb$ requires samples from the minority group and this results in the $\nmin$ dependent rate. In fact, in this extreme case $(\tau = 0)$ even if $\nmaj \to \infty$, the minimax excess risk is still bounded away from zero. This intuition also carries over to the case when the overlap is small but non-zero and our lower bound shows that minority samples are much more valuable than majority samples at reducing the risk. 

On the other hand, when the overlap is high ($\tau \gg 1/\rho$) the minimax excess risk is lower bounded by $1/(\nmin(2-\tau)+\nmaj\tau)^{1/3}$ and the extra majority samples are quite beneficial. This is roughly because the supports of $\pa$ and $\pb$ have large overlap and hence samples from the majority group are useful in helping make predictions even in regions where $\pb$ is large. In the extreme case when $\tau = 1$, we have that $\pa = \pb$ and therefore recover the classic i.i.d. setting with no distribution shift. Here, the lower bound scales with $1/n^{1/3}$, as one might expect.



Previous work on transfer learning with covariate shift has considered other more elaborate notions of \emph{transferability}~\citep{kpotufe2018marginal,hanneke2019value} than overlap between group distributions considered here. In the case of no overlap $(\tau = 0)$, previous results~\citep[][Theorem~1 with $\alpha = 1, \beta = 0$ and $\gamma = \infty$]{kpotufe2018marginal} yield the same lower bound of $1/\nmin^{1/3}$. On the other extreme, applying their result~\citep[][Theorem~1 with $\alpha = 1, \beta = 0$ and $\gamma =0$]{kpotufe2018marginal} in the high transfer regime yields a lower bound on $1/n^{1/3}$. This result is aligned with the high overlap $\tau = 1$ case that we consider here.

Beyond these two edge cases of no overlap ($\tau = 0$) and high overlap ($\tau = 1$), our lower bound is key to drawing the simple complementary conclusion that even when overlap between group distributions is small as compared to $1/\rho$, minority samples alone dictate the rate of convergence. 





\section{Upper bounds on the excess risk for the undersampled binning estimator}\label{s:upper_bounds}

We will show that an undersampled estimator matches the rates in the previous section showing that undersampling is an optimal robustness intervention. We start by defining the undersampling procedure and the undersampling binning estimator.

\paragraph{Undersampling procedure.}
Given training data $\S:=\{(x_1,y_1),\ldots,(x_{n},y_n)\}$, generate a new undersampled dataset $\Sus$ by
\begin{itemize}
    \item including all $\nmin$ samples from $\Smin$ and,
    \item including $\nmin$ samples from $\Smaj$ by sampling uniformly at random without replacement.
\end{itemize}
This procedure ensures that in the undersampled dataset $\Sus$, the groups are balanced, and that $|\Sus| = 2\nmin$. 

The undersampling binning estimator defined next will first run this undersampling procedure to obtain $\Sus$ and just uses these samples to output a classifier.

\paragraph{Undersampled binning estimator} 
The undersampled binning estimator $\Abucket$ takes as input a dataset $\S$ and a positive integer $K$ corresponding to the number of bins, and returns a classifier $\AbucketSK: [0,1]\to \{-1,1\}$. This estimator is defined as follows:
\begin{enumerate}
    \item First, we compute the undersampled dataset $\Sus$.
    \item Given this dataset $\Sus$, let $n_{1,j}$ be the number of points with label $+1$ that lie in the interval $I_j = [\frac{j-1}{K},\frac{j}{K}]$. Also, define $n_{-1,j}$ analogously. Then set
    \begin{align*}
        \cA_{j} = \begin{cases} 1 & \text{if } n_{1,j} > n_{-1,j}, \\
        -1 & \text{otherwise.}
        \end{cases}
    \end{align*}
    \item Define the classifier $\AbucketSK$ such that if $x \in I_j$ then
    \begin{align}
        \AbucketSK(x) = \cA_{j}. \label{e:definition_undersampled_binning}
    \end{align}
    Essentially in each bin $I_j$, we set the prediction to be the majority label among the samples that fall in this bin.
\end{enumerate}
Whenever the number of bins $K$ is clear from the context we shall denote $\AbucketSK$ by $\AbucketS$. Below we establish upper bounds on the excess risk of this simple estimator.

\subsection{Label shift upper bounds}\label{s:label_shift_upper_bounds}
We now establish an upper bound on the excess risk of $\Abucket$ in the label shift setting (see Section~\ref{s:label_shift_description}). Below we let $c,C>0$ be absolute constants independent of problem parameters like $\nmaj$ and $\nmin$.
\begin{restatable}{theorem}{lblshiftupper}\label{t:label_shift_upper_bounds}
Consider the label shift setting described in Section~\ref{s:label_shift_description}. For any $(\pmaj,\pmin) \in \gP_{\mathsf{LS}}$ the expected excess risk of the Undersampling Binning Estimator (Eq.~\eqref{e:definition_undersampled_binning}) with number of bins with $K = c\ceil{\nmin^{1/3}}$ is upper bounded by
\begin{align*}
   \eR[\Abucket;(\pmaj,\pmin)] = \E_{\S \sim \pmaj^\nmaj \times \pmin^\nmin}\big[R(\AbucketS;\ptest)-R(f^{\star};\ptest)\big] \le \frac{C}{\nmin^{1/3}}.
\end{align*}
\end{restatable}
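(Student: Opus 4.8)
The plan is to reduce the classification excess risk of the undersampled binning estimator to the $L^1$ estimation error of a histogram density estimator, and then bound that error with the classical bias--variance decomposition for histograms of Lipschitz densities~\citep{freedman1981histogram}. The key structural observation is what undersampling does in the label-shift case: since every majority sample has $y=1$ with $x\sim\p_1$ and every minority sample has $y=-1$ with $x\sim\p_{-1}$, the set $\Sus$ consists of $\nmin$ i.i.d.\ draws from $\p_1$ (all labelled $+1$) together with an independent batch of $\nmin$ i.i.d.\ draws from $\p_{-1}$ (all labelled $-1$) --- a uniformly random size-$\nmin$ subsample of i.i.d.\ majority points is still i.i.d.\ by exchangeability. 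Hence, writing $p_{i,j}:=\int_{I_j}\p_i$, the bin counts satisfy $n_{i,j}\sim\mathrm{Binomial}(\nmin,p_{i,j})$ with $n_{1,j}$ independent of $n_{-1,j}$, and on $I_j$ the classifier is $\AbucketS(x)=\sign(\widehat{\p}_1(x)-\widehat{\p}_{-1}(x))$, where $\widehat{\p}_i(x):=\tfrac{K}{\nmin}n_{i,j}$ is exactly the $K$-bin histogram density estimator of $\p_i$ built from the $\nmin$ class-$i$ samples. So no distribution shift survives and each $\widehat{\p}_i$ is a textbook histogram on i.i.d.\ data.

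First I would rewrite the excess risk in terms of $\p_1,\p_{-1}$. Under $\ptest$ the class prior is $(\tfrac12,\tfrac12)$, the regression function is $\eta(x)=\p_1(x)/(\p_1(x)+\p_{-1}(x))$, the $x$-marginal is $\bar{\p}=\tfrac12(\p_1+\p_{-1})$, and $f^\star(x)=\sign(\p_1(x)-\p_{-1}(x))$. The standard identity $R(f;\ptest)-R(f^\star;\ptest)=\E_{x\sim\bar{\p}}[\,|2\eta(x)-1|\,\mathbf{1}(f(x)\neq f^\star(x))\,]$ together with $|2\eta(x)-1|\,\bar{\p}(x)=\tfrac12|\p_1(x)-\p_{-1}(x)|$ gives
\[
\eR[\Abucket;(\pmaj,\pmin)]=\tfrac12\,\E_{\Sus}\!\int_0^1|\p_1(x)-\p_{-1}(x)|\,\mathbf{1}\big(\AbucketS(x)\neq f^\star(x)\big)\,dx .
\]
Next, whenever $\sign(\widehat{\p}_1(x)-\widehat{\p}_{-1}(x))$ disagrees with $\sign(\p_1(x)-\p_{-1}(x))$ the two numbers lie on opposite sides of $0$ (with the tie convention $\to-1$ handled), so $|\p_1(x)-\p_{-1}(x)|\le|(\widehat{\p}_1(x)-\widehat{\p}_{-1}(x))-(\p_1(x)-\p_{-1}(x))|\le|\widehat{\p}_1(x)-\p_1(x)|+|\widehat{\p}_{-1}(x)-\p_{-1}(x)|$; integrating,
\[
\eR[\Abucket;(\pmaj,\pmin)]\le\tfrac12\!\!\sum_{i\in\{1,-1\}}\!\!\E_{\Sus}\,\norm{\widehat{\p}_i-\p_i}_{L^1[0,1]} .
\]

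Finally I would bound each histogram $L^1$ risk via $\E\norm{\widehat{\p}_i-\p_i}_{L^1}\le\norm{\E\widehat{\p}_i-\p_i}_{L^1}+\E\norm{\widehat{\p}_i-\E\widehat{\p}_i}_{L^1}$. For the bias, for $x\in I_j$ we have $\E\widehat{\p}_i(x)=K\int_{I_j}\p_i$, so $|\E\widehat{\p}_i(x)-\p_i(x)|\le K\int_{I_j}|\p_i(u)-\p_i(x)|\,du\le K\int_{I_j}|u-x|\,du\le 1/K$ by $1$-Lipschitzness, hence $\norm{\E\widehat{\p}_i-\p_i}_{L^1}\le 1/K$. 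For the fluctuation, Jensen gives $\E|\widehat{\p}_i(x)-\E\widehat{\p}_i(x)|=\tfrac{K}{\nmin}\E|n_{i,j}-\E n_{i,j}|\le\tfrac{K}{\nmin}\sqrt{\nmin p_{i,j}}=K\sqrt{p_{i,j}/\nmin}$ on $I_j$, so $\E\norm{\widehat{\p}_i-\E\widehat{\p}_i}_{L^1}=\sum_j\tfrac1K\cdot K\sqrt{p_{i,j}/\nmin}=\nmin^{-1/2}\sum_j\sqrt{p_{i,j}}\le\nmin^{-1/2}\sqrt{K\sum_j p_{i,j}}=\sqrt{K/\nmin}$ by Cauchy--Schwarz. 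Combining, $\eR[\Abucket;(\pmaj,\pmin)]\le 1/K+\sqrt{K/\nmin}$, and choosing $K=c\ceil{\nmin^{1/3}}$ makes both terms $O(\nmin^{-1/3})$, which is the claimed bound.

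I expect the bulk of the work to sit in the reduction of the second paragraph --- pinning down the excess-risk identity and the sign-disagreement inequality precisely (including the tie-breaking convention) and simplifying $|2\eta-1|\,\bar{\p}$; after that the histogram bound is entirely classical. The genuinely important step, though, is the structural one: undersampling converts the shifted training sample into an i.i.d.-per-class sample from the test-time class-conditionals, which is why the rate is governed by $\nmin$ and not $\nmaj$. (One could instead replace the Jensen/Cauchy--Schwarz variance bound by a Bernstein argument, for which it helps that a $1$-Lipschitz density on $[0,1]$ is automatically bounded by $2$; but it is not needed along this route.)
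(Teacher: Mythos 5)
Your proposal is correct and follows the same overall strategy as the paper --- undersampling turns the training set into $\nmin$ i.i.d.\ draws from each class-conditional, the binning classifier is the sign of the difference of two histogram density estimators, and the excess risk is controlled by the $L^1$ estimation error of those histograms with $K\asymp\nmin^{1/3}$. The two proofs diverge in execution at two points. First, the reduction: the paper forms the plug-in regression estimate $\widehat{\eta}^{\S}=\widehat{\p}^{\S}_1/(\widehat{\p}^{\S}_1+\widehat{\p}^{\S}_{-1})$, invokes the classical bound $R(\AbucketS;\ptest)-R(f^\star;\ptest)\le 2\int|\widehat{\eta}^{\S}-\eta|\,\ptest$, and then needs a chain of triangle-inequality manipulations to convert $|\widehat{\eta}^{\S}-\eta|\ptest$ into $|\widehat{\p}^{\S}_1-\p_1|+|\widehat{\p}^{\S}_{-1}-\p_{-1}|$ (picking up a factor of roughly $3$); your route via the identity $R(f)-R(f^\star)=\E_x[|2\eta-1|\mathbf{1}(f\neq f^\star)]$, the simplification $|2\eta-1|\bar{\p}=\tfrac12|\p_1-\p_{-1}|$, and the sign-crossing inequality is more direct and gives the cleaner constant $\tfrac12$ in front of the sum of $L^1$ errors. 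Second, the histogram bound: the paper passes to $L^2$ by Cauchy--Schwarz and cites the mean integrated squared error bound of \citet{freedman1981histogram}, whereas you prove the $L^1$ rate $1/K+\sqrt{K/\nmin}$ from scratch by the bias--variance split (Lipschitz bias $1/K$, Jensen plus Cauchy--Schwarz over bins for the fluctuation). Both are valid; yours is self-contained, the paper's is shorter modulo the citation. One cosmetic note: the paper's displayed definition $\widehat{\p}^{\S}_1(x)=n_{1,j}/(K\nmin)$ has the bin-width normalization inverted relative to a genuine density estimate --- your $\widehat{\p}_i(x)=Kn_{i,j}/\nmin$ is the correct normalization (and is what the cited histogram results require), so your version is if anything the more careful one.
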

We prove this result in Appendix~\ref{s:proof_of_label_shift_theorem}. This upper bound combined with the lower bound in Theorem~\ref{t:label_shift_lower_bound} shows that an undersampling approach is minimax optimal up to constants in the presence of label shift. 

Our analysis leaves open the possibility of better algorithms when the learner has additional information about the structure of the label shift beyond Lipschitz continuity. 
\subsection{Group-covariate shift upper bounds}\label{s:group_shift_upper_bounds}
Next, we present our upper bounds on the excess risk of the undersampled binning estimator in the group-covariate shift setting (see Section~\ref{s:group_shift_description}). In the theorem below, $C>0$ is an absolute constant independent of the problem parameters $\nmaj$, $\nmin$ and $\tau$.
\begin{restatable}{theorem}{groupshiftupper}\label{t:group_shift_upper_bound}Consider the group shift setting described in Section~\ref{s:group_shift_description}. For any overlap $\tau \in [0,1]$ and for any $(\pmaj,\pmin) \in \gP_{\mathsf{GS}}(\tau)$ the expected excess risk of the Undersampling Binning Estimator (Eq.~\eqref{e:definition_undersampled_binning}) with number of bins with $K = \ceil{\nmin^{1/3}}$ is
\begin{align*}
   \eR[\Abucket;(\pmaj,\pmin)] = \E_{\S \sim \pmaj^\nmaj \times \pmin^\nmin}\big[R(\AbucketS;\ptest))-R(f^{\star};\ptest)\big] \le \frac{C}{\nmin^{1/3}}.
\end{align*}
\end{restatable}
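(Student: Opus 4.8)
The plan is to exploit the fact that, after undersampling, $\Sus$ is distributed (over the draw of $\S$ and the internal undersampling randomness) as $\nmin$ i.i.d.\ draws from $\pmaj$ together with $\nmin$ independent i.i.d.\ draws from $\pmin$ --- this follows from exchangeability of the $\nmaj$ majority points. Since $\tfrac12\pmaj + \tfrac12\pmin = \ptest$ and, crucially, $\pmaj$ and $\pmin$ share the conditional $\p(y\mid x)$, one can treat $\Sus$ as a stratified sample from $\ptest$ for the purpose of estimating the conditional probability $\eta(x) := \p(y=1\mid x)$, which is $1$-Lipschitz by the group-covariate assumption. The Bayes classifier is $f^{\star}(x) = \mathbf{1}(\eta(x)\ge 1/2)$, so the group-covariate case reduces to a clean plug-in estimation problem; this is in fact \emph{simpler} than the label-shift case, where one has to handle the possibly non-Lipschitz ratio $\p_1/(\p_1+\p_{-1})$.

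Concretely, I would first record the standard plug-in reduction. Writing $\hat\eta(x) = n_{1,j}/(n_{1,j}+n_{-1,j})$ for $x\in I_j$ (with $\hat\eta := 1/2$ on empty bins, matching the estimator's tie-breaking), the returned classifier is $\AbucketS(x) = \mathbf{1}(\hat\eta(x) > 1/2)$, and a pointwise comparison shows $|2\eta(x) - 1|\,\mathbf{1}(\AbucketS(x)\ne f^{\star}(x)) \le 2\,|\hat\eta(x) - \eta(x)|$. Taking expectations, $\eR[\Abucket;(\pmaj,\pmin)] \le 2\,\E_{x\sim \bar{\p}}\,\E_{\Sus}\,|\hat\eta(x) - \eta(x)|$, where $\bar{\p} := \tfrac12(\pa + \pb)$ is the $x$-marginal of $\ptest$.

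The next step is the usual bias--variance split, bin by bin. Let $N_j$ be the number of points of $\Sus$ in $I_j$; conditionally on the features, $n_{1,j}$ is a sum of $N_j$ independent Bernoullis with means within $1/K$ of each other (Lipschitzness), so $\E[\,|\hat\eta_j - \bar\eta_j| \mid \text{features}\,] \le \tfrac{1}{2\sqrt{N_j}}$ with $\bar\eta_j$ the average of $\eta$ over the bin's points, while $|\bar\eta_j - \eta(x)| \le 1/K$ for all $x\in I_j$. This gives $\E_{x\sim\bar{\p}}\E_{\Sus}|\hat\eta(x)-\eta(x)| \le \tfrac1K + \sum_j \bar{\p}(I_j)\,\E\!\big[\tfrac{1}{2\sqrt{N_j}}\mathbf{1}(N_j\ge 1)\big] + \sum_j \bar{\p}(I_j)\,\Pr(N_j = 0)$. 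Using $\E[N_j] = 2\nmin\,\bar{\p}(I_j) =: \mu_j$ (hence $\sum_j \mu_j = 2\nmin$), a standard moment bound for sums of independent Bernoullis gives $\E[N_j^{-1/2}\mathbf{1}(N_j\ge1)] \lesssim (\mu_j + 1)^{-1/2}$, so each term of the variance sum is $\lesssim \mu_j/(\nmin\sqrt{\mu_j}) = \sqrt{\mu_j}/\nmin$; by Cauchy--Schwarz over the $K$ bins, $\sum_j \sqrt{\mu_j} \le \sqrt{K\sum_j \mu_j} = \sqrt{2K\nmin}$, so the variance sum is $\lesssim \sqrt{K/\nmin}$. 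For the empty-bin sum, $\bar{\p}(I_j)\Pr(N_j = 0) \le \tfrac{\mu_j}{2\nmin} e^{-\mu_j} \le \tfrac{1}{2e\nmin}$, so it is $\lesssim K/\nmin$. Altogether $\E|\hat\eta - \eta| \lesssim \tfrac1K + \sqrt{K/\nmin} + K/\nmin$, and taking $K = \ceil{\nmin^{1/3}}$ as in the statement makes every term $O(\nmin^{-1/3})$, yielding $\eR[\Abucket;(\pmaj,\pmin)] \le C/\nmin^{1/3}$. Note the rate does not involve $\tau$: undersampling discards the excess majority data, which is precisely why it cannot beat the $\tau = 0$ lower bound of Theorem~\ref{t:group_shift_lower_bound}.

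I expect the main obstacle to be the careful bookkeeping for bins of small or vanishing probability mass --- ensuring the crude per-bin error bound there is compensated by the small mass $\bar{\p}(I_j)$ and by the small probability that such a bin is hit --- together with verifying the moment inequality $\E[N_j^{-1/2}\mathbf{1}(N_j\ge1)] \lesssim (\mu_j+1)^{-1/2}$ for the Poisson-binomial $N_j$ (e.g.\ via $\mathbf{1}(N\ge1)/\sqrt{N} \le \sqrt{2}/\sqrt{N+1}$ and a bound on $\E[1/(N+1)]$). Everything else is the standard histogram-classifier analysis already used for the label-shift upper bound (Theorem~\ref{t:label_shift_upper_bounds}); the genuinely new ingredient is simply the observation that $\Sus$ behaves like $2\nmin$ stratified draws from $\ptest$, which lets one estimate the shared Lipschitz conditional $\eta$ directly and makes the overlap parameter $\tau$ irrelevant to the upper bound.
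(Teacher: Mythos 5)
Your proof is correct, and its overall architecture coincides with the paper's: the same bin-wise decomposition with an $O(1/K)$ Lipschitz bias term (Lemma~\ref{l:decomposition_of_excess_risk}), the same per-bin rate of order $1/\sqrt{\nmin\,\ptest(I_j)}+1/K$ (Lemma~\ref{l:group_shift_per_interval}), the same Cauchy--Schwarz aggregation over bins, and the same choice $K=\ceil{\nmin^{1/3}}$; your opening observation that $\Sus$ consists of $\nmin$ i.i.d.\ draws from each of $\pmaj$ and $\pmin$, so that $\tau$ plays no role in the upper bound, is also the paper's implicit starting point. Where you genuinely diverge is in how the per-bin bound is proved. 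The paper writes the per-bin excess risk as $|1-2q_{j,1}|\cdot\Pr[\cA_j^{\S}\ \text{wrong}]$, applies Hoeffding conditionally on the bin count, integrates out the Poisson-binomial count through its moment generating function, and then maximizes $|1-2q_{j,1}|\exp\left(-c\,\nmin\ptest(I_j)(1-2q_{j,1}-2\gamma)^2\right)$ over the margin. You instead invoke the plug-in inequality $|2\eta-1|\,\mathbf{1}(\AbucketS\ne f^{\star})\le 2|\hat\eta-\eta|$ and control $\E|\hat\eta-\eta|$ by a conditional-variance argument together with the reciprocal-moment bound $\E[N_j^{-1/2}\mathbf{1}(N_j\ge1)]\lesssim(\mu_j+1)^{-1/2}$. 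Both routes are sound: yours trades the paper's MGF computation and margin optimization (which, as written there, is the least transparent step) for the reciprocal-moment lemma, which does hold for a sum of two independent binomials, e.g.\ via $\mathbf{1}(N\ge1)/\sqrt{N}\le\sqrt{2/(N+1)}$, Jensen's inequality, and $\E[1/(N+1)]\le\min\{1,\,2/\mu_j\}$; you should spell that lemma out, and your explicit handling of empty bins (contributing $O(K/\nmin)$) is bookkeeping the paper's exponential bound absorbs silently. The two approaches yield the same constant-order conclusion.
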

We provide a proof for this theorem in Appendix~\ref{s:proof_of_group_shift_theorem}. Compared to the lower bound established in Theorem~\ref{t:group_shift_lower_bound} which scales as $1/\left((2-\tau)\nmin+\nmaj \tau\right)^{1/3}$, the upper bound for the undersampled binning estimator always scales with $1/\nmin^{1/3}$ since it operates on the undersampled dataset ($\Sus$). 

Thus, we have shown that in the absence of overlap $(\tau \ll 1/\rho = \nmin/\nmaj)$ there is an undersampling algorithm that is minimax optimal up to constants. However when there is high overlap $(\tau \gg 1/\rho)$ there is a non-trivial gap between the upper and lower bounds:
\begin{align*}
    \frac{\mathsf{Upper\;Bound}}{\mathsf{Lower\;Bound}} & = c (\rho\cdot \tau + 2)^{1/3}.
\end{align*}

\section{Minority sample dependence in practice}\label{s:experiments}
\begin{figure}[t]
     \captionsetup[subfigure]{labelformat=empty}
     \centering
     \begin{subfigure}[b]{\textwidth}
         \centering
            \includegraphics[height=4.8cm]{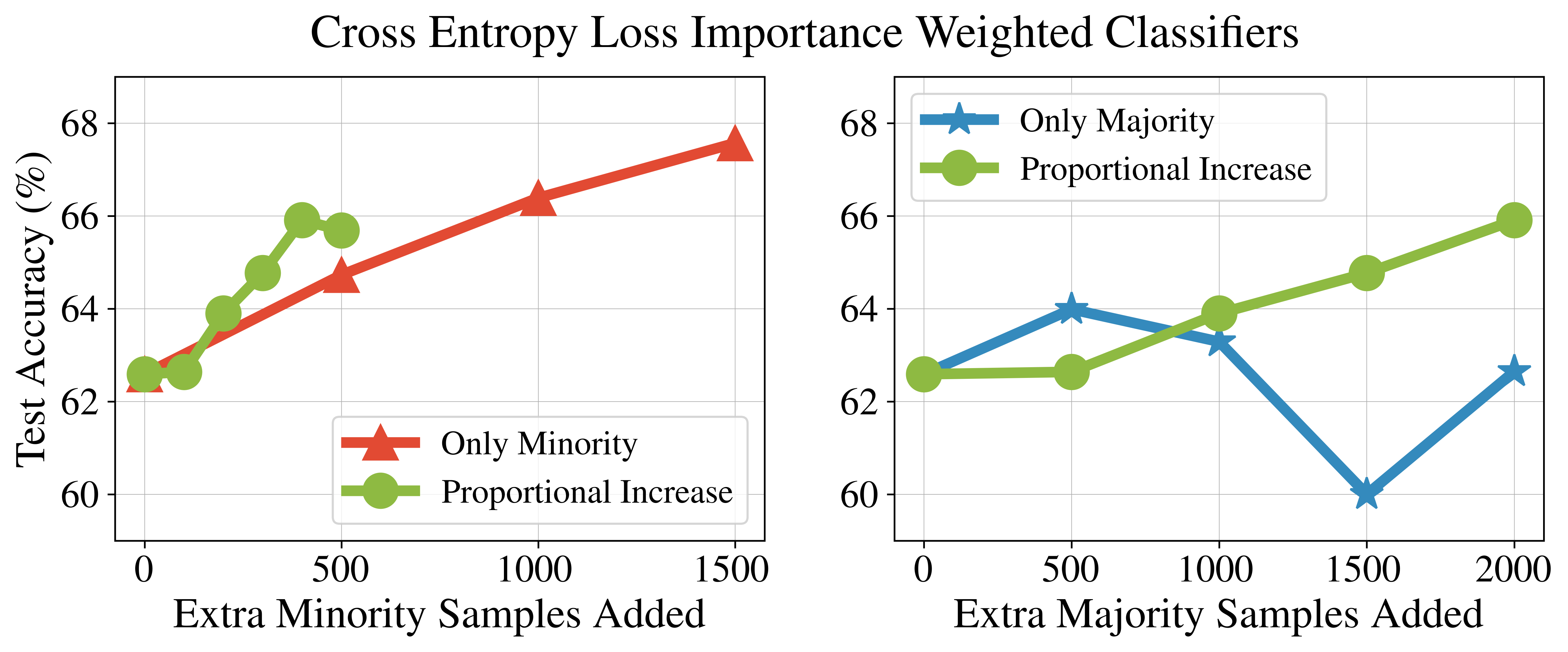}
     \end{subfigure}
     \hfill
     \begin{subfigure}[b]{\textwidth}
         \centering
        \includegraphics[height=4.8cm]{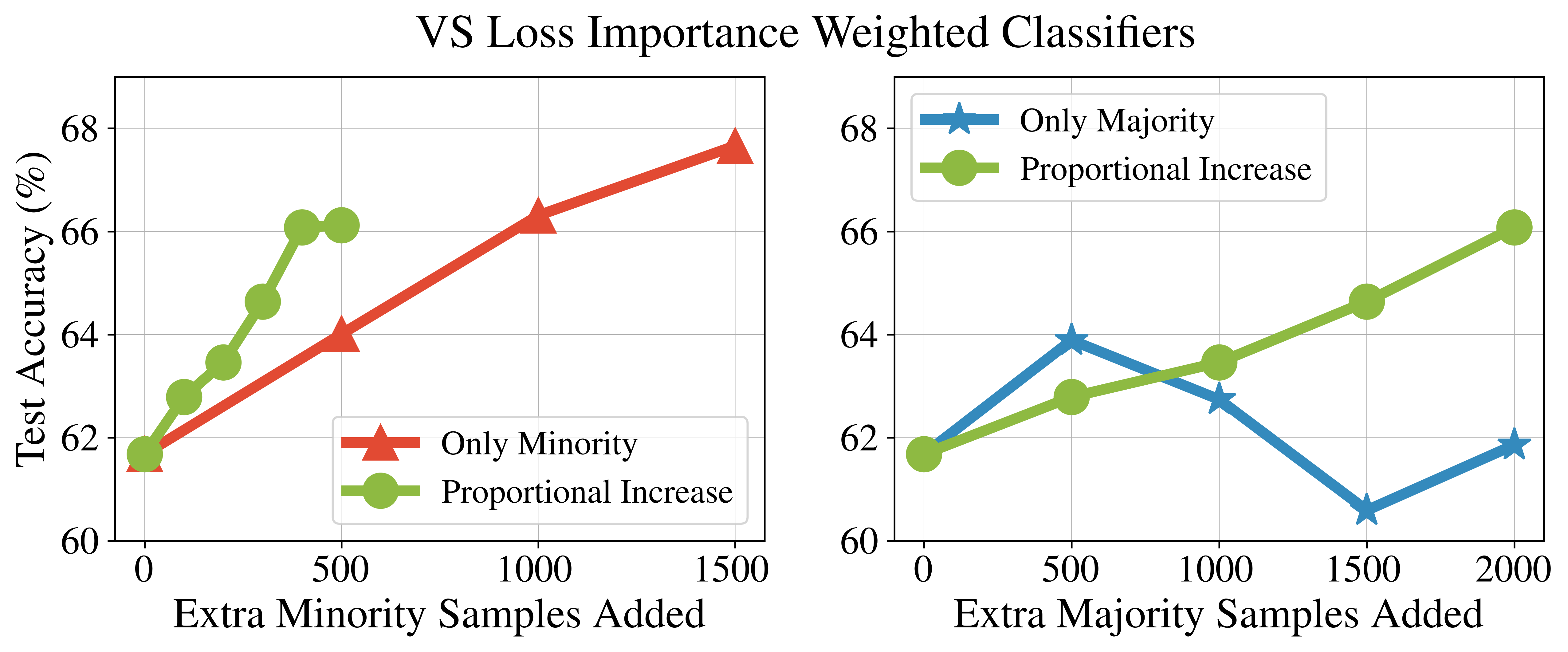}
     \end{subfigure}
     \caption{Convolutional neural network classifiers trained on the Imbalanced Binary CIFAR10 dataset with a 5:1 label imbalance. (Top) Models trained using the importance weighted cross entropy loss with early stopping. (Bottom) Models trained using the importance weighted VS loss~\citep{kini2021label} with early stopping. We report the average test accuracy calculated on a balanced test set over 5 random seeds. We start off with $2500$ cat examples and $500$ dog examples in the training dataset. We find that in accordance with our theory, for both of the classifiers adding only minority class samples (red) leads to large gain in accuracy ($\sim6\%$), while adding majority class samples (blue) leads to little or no gain. In fact, adding majority samples sometimes hurts test accuracy due to the added bias. When we add majority and minority samples in a 5:1 ratio (green), the gain is largely due to the addition of minority samples and is only marginally higher ($<2\%$) than adding only minority samples. The green curves correspond to the same classifiers in both the left and right panels.}
     \label{fig:undersampled_experiments}
\end{figure}
Inspired by our worst-case theoretical predictions in nonparametric classification, we ask: how does the accuracy of neural network classifiers trained using robust algorithms evolve as a function of the majority and minority samples?

To explore this question, we conduct a small case study using the imbalanced binary CIFAR10 dataset~\citep{byrd2019effect,wang2021importance} that is constructed using the ``cat'' and ``dog'' classes.  The test set consists of all of the $1000$ cat and $1000$ dog test examples. To form our initial train and validation sets, we take $2500$ cat examples but only $500$ dog examples from the official train set, corresponding to a 5:1 label imbalance. We then use $80\%$ of those examples for training and the rest for validation. In our experiment, we either $(a)$ add only minority samples; $(b)$ add only majority samples; $(c)$ add both majority and minority samples in a 5:1 ratio. We consider competitive robust classifiers proposed in the literature that are convolutional neural networks trained either by using $(i)$ the importance weighted cross entropy loss, or $(ii)$ the importance weighted VS loss~\citep{kini2021label}. We early stop using the importance weighted validation loss in both cases. The additional experimental details are presented in Appendix~\ref{s:experimental_details}.

Our results in Figure~\ref{fig:undersampled_experiments} are generally consistent with our theoretical predictions. By adding only minority class samples the test accuracy of both classifiers increases by a great extent (6\%), while by adding only majority class samples the test accuracy remains constant or in some cases even decreases owing to the added bias of the classifiers. When we add samples to both groups proportionately, the increase in the test accuracy appears to largely to be due to the increase in the number of minority class samples. We see this on the left panels, where the difference between adding only extra minority group samples (red) and both minority and majority group samples (green) is small. Thus, we find that the accuracy for these neural network classifiers is also constrained by the number of minority class samples. Similar conclusions hold for  classifiers trained using the tilted loss~\citep{li2020tilted} and group-DRO objective~\citep{sagawa2019distributionally} (see Appendix~\ref{s:additional_experiments}).

\section{Discussion}\label{s:discussion}

We showed that undersampling is an optimal robustness intervention in nonparametric classification in the absence of significant overlap between group distributions or without additional structure beyond Lipschitz continuity. We worked in one dimension for the sake of clarity and it would be interesting to extend this study to higher dimensions. We focused on Lipschitz continuous distributions here, but it is also interesting to consider other forms of regularity such as H{\"o}lder continuity.



At a high level our results highlight the need to reason about the specific structure in the distribution shift and design algorithms that are tailored to take advantage of this structure. This would require us to step away from the common practice in robust machine learning where the focus is to design ``universal'' robustness interventions that are agnostic to the structure in the shift. Alongside this, our results also dictate the need for datasets and benchmarks with the propensity for transfer from train to test time.

\subsection*{Acknowledgments}
We would like to thank Ke Alexander Wang for his useful comments and feedback in the early stages of this project. We would also like to thank Shibani Santurkar and Dimitrios Tsipras for useful discussions and encouragement. Finally, we would like to thank the anonymous reviewers whose many helpful comments improved the paper. NC was supported by a SAIL Postdoctoral Fellowship and TH was supported by a gift from Open Philanthropy.

\newpage
\appendix
\section{Technical tools} 
In this section we avail ourselves of some technical tools that shall be used in all of the proofs below. 
\subsection{Reduction to lower bounds over a finite class}\label{s:common_lower_bound_framework}

The lower bound on the minimax excess risk will be established via the usual route of first identifying a ``hard'' finite set of problem instances and then establishing the lower bound over this finite class. One difference from the usual setup in proving such lower bounds~\citep[see][Chapter~15]{wainwright2019high} is that the training samples are drawn from an imbalanced distribution, whereas the test samples are drawn from a balanced one.

Let $\gP$ be a class of pairs of distributions, where each element $(\pmaj,\pmin) \in \gP$ is a pair of distributions over $[0,1] \times \{-1,1\}$. As before, we let $\ptest$ denote the uniform mixture over $\pmaj$ and $\pmin$. We let $\gV$ denote a finite index set. Corresponding to each element $v \in \gV$ there is a $\pv = (\pmajv{v},\pminv{v}) \in \gP$ with $\ptestv{v} = (\pmajv{v}+\pminv{v})/2$. Finally, also define a pair of random variables $(V,S)$ as follows:
\begin{enumerate}
    \item $V$ is a uniform random variable over the set $\gV$.
    \item $(S \mid V=v) \sim \pmajv{v}^\nmaj \times \pminv{v}^\nmin$, is an independent draw of $\nmaj$ samples from $\pmajv{v}$ and $\nmin$ samples from $\pminv{v}$.
\end{enumerate}
We shall let $\q$ denote the joint distribution of the random variables $(V,S)$, and let $\qs$ denote the marginal distribution of $S$. 

With this notation in place, we now present a lemma that lower bounds the minimax excess risk in terms of quantities defined over the finite class of ``hard'' instances $\pv$.

\begin{lemma}\label{l:m_lb}Let the random variables $(V,S)$ be as defined above. The minimax excess risk is lower bounded as follows:
\begin{align*}
    \mmR(\gP)&= \inf_{\cA}\sup_{(\pmaj,\pmin)\in\gP} \E_{\S \sim \pmaj^\nmaj \times \pmin^\nmin}\left[R(\cA^\S; \ptest)-R(f^{\star}(\ptest); \ptest) \right]\\&\geq \riskv -   \bayesv,
\end{align*}
where $\riskv$ and Bayes-error $\bayesv$ are defined as
\begin{align*}
    \riskv &:= \E_{S \sim \qs}[\inf_{h}\Pr_{(x,y) \sim \sum_{v \in \gV} \q(v\mid S)\ptestv{v}}(h(x) \neq y)], \\
    \bayesv &:= \E_{V}[R(f^{\star}(\p_{V,\mathsf{test}});\p_{V,\mathsf{test}}))].
\end{align*}
\end{lemma}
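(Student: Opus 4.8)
The plan is to run the standard estimation-to-testing reduction~\citep[][Chapter~15]{wainwright2019high}, being careful about the asymmetry that each training sample is drawn from the imbalanced product $\pmajv{v}^\nmaj\times\pminv{v}^\nmin$ while the risk is measured under the balanced mixture $\ptestv{v}$. First I would shrink the supremum over all of $\gP$ to a supremum over the finite subfamily $\{\pv\}_{v\in\gV}\subseteq\gP$, and then lower bound that supremum by the average over $v$ with $V$ uniform on $\gV$. Since the Bayes term $R(f^{\star}(\ptestv{v});\ptestv{v})$ depends on neither the algorithm nor the dataset, its average over $v$ is exactly $\bayesv$ and pulls out of the infimum over $\cA$, leaving
\begin{align*}
\mmR(\gP)\;\ge\;\inf_{\cA}\,\E_{(V,S)\sim\q}\big[R(\cA^\S;\ptestv{V})\big]\;-\;\bayesv ,
\end{align*}
where I have also used that drawing $V$ uniformly and then $S\mid V=v$ from $\pmajv{v}^\nmaj\times\pminv{v}^\nmin$ is precisely the joint law $\q$. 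It then remains to prove $\inf_{\cA}\E_{(V,S)\sim\q}[R(\cA^\S;\ptestv{V})]\ge\riskv$.

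For this step I would condition on $S$ and apply the tower property. The crucial point is that the output classifier $\cA^\S$ is a function of $S$ alone, so for each fixed $S$,
\begin{align*}
\E_{V\mid S}\big[\Pr_{(x,y)\sim\ptestv{V}}(\cA^\S(x)\ne y)\big]=\Pr_{(x,y)\sim\sum_{v\in\gV}\q(v\mid S)\ptestv{v}}(\cA^\S(x)\ne y),
\end{align*}
i.e. the error of a \emph{fixed} classifier under the Bayes posterior mixture of the test laws. Bounding the right-hand side below by $\inf_{h}\Pr_{(x,y)\sim\sum_{v\in\gV}\q(v\mid S)\ptestv{v}}(h(x)\ne y)$ and taking $\E_{S\sim\qs}$ produces exactly $\riskv$; since this holds for every measurable $\cA$ it survives the infimum, which finishes the argument.

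The reduction is essentially bookkeeping, so I do not expect a deep obstacle; the step I would handle most carefully is the displayed exchange of $\E_{V\mid S}$ with $\Pr_{(x,y)\sim\ptestv{V}}$, which is where the construction effectively hides the adversary's parameter $V$ from the learner. It relies on (i) measurability of $\cA^\S$ with respect to $S$ only, so that conditionally on $S$ no additional information about $V$ is available to the learner, and (ii) recognizing the law of the test point obtained by sampling $V\sim\q(\cdot\mid S)$ and then $(x,y)\sim\ptestv{V}$ as the mixture $\sum_{v\in\gV}\q(v\mid S)\ptestv{v}$. Randomized algorithms are handled by conditioning additionally on their internal randomness, which changes nothing since each realized classifier still has error at least $\inf_h(\cdot)$. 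I would also note that the lemma deliberately stops at $\riskv-\bayesv$: the remaining work of lower bounding this difference over the hard family $\{\pv\}$, via a Le Cam or Fano-type testing argument, is exactly what the subsequent sections carry out.
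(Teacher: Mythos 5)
Your proposal is correct and follows essentially the same route as the paper's proof: restrict the supremum to the finite family $\{\pv\}_{v\in\gV}$, average over a uniform $V$, pull out the Bayes term as $\bayesv$, and then condition on $S$ to replace $\cA^S$ by $\inf_h$ against the posterior mixture $\sum_{v}\q(v\mid S)\ptestv{v}$, yielding $\riskv$. The only (harmless) addition is your explicit remark on randomized algorithms, which the paper leaves implicit.
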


\begin{proof}
By the definition of $\mmR$,
\begin{align*}
    \mmR &= \inf_{\cA} \sup_{(\pmaj,\pmin) \in \gP} \E_{\S \sim \pmaj^\nmaj \times \pmin^\nmin}[R(\gA^\S;\ptest)]-R(f^{\star}(\ptest);\ptest) \\
    &\geq \inf_{\cA}  \sup_{v \in \gV} \E_{S\mid v \sim \pmajv{v}^\nmaj \times \pminv{v}^\nmin}[R(\gA^S; \ptestv{v})]-R(f^{\star}(\ptestv{v}); \ptestv{v}) \\
    &\geq \inf_{\cA} \E_{V} \left[\E_{S\mid V \sim \pmajv{V}^\nmaj \times \pminv{V}^\nmin}[R(\gA^S; \p_{V,\mathsf{test}})]-R(f^{\star}(\p_{V,\mathsf{test}});\p_{V,\mathsf{test}}))\right] \\
    &= \inf_{\cA} \E_{V} [\E_{S \mid V \sim \pmajv{V}^\nmaj \times \pminv{V}^\nmin}[R(\gA^S; \p_{V,\mathsf{test}})]]-\underbrace{\E_{V}[R(f^{\star}(\p_{V,\mathsf{test}});\p_{V,\mathsf{test}}))]}_{=\bayesv}.
\end{align*}
We continue lower bounding the first term as follows
\begin{align*}
     \inf_{\cA} \E_{V} [\E_{S\mid V \sim \pmajv{V}^\nmaj \times \pminv{V}^\nmin}[R(\gA^S; \p_{V,\mathsf{test}})]]
    &= \inf_{\cA} \E_{(V,S) \sim \q}[\Pr_{(x,y) \sim \ptestv{V}}(\gA^S(x) \neq y)] \\
    &= \inf_{\cA} \E_{S\sim \qs}\E_{V \sim \q(\cdot \mid S)}[\Pr_{(x,y) \sim \ptestv{V}}(\gA^S(x) \neq y)] \\
    &\overset{(i)}{\geq} \E_{S \sim \qs}[\inf_{h}\E_{V \sim \q(\cdot \mid S)}[\Pr_{(x,y) \sim \ptestv{V}}(h(x) \neq y)]] \\
    &= \E_{S \sim \qs}[\inf_{h}\Pr_{(x,y) \sim \sum_{v \in \gV} \q(v\mid S)\ptestv{v}}(h(x) \neq y)] \\
    &= \riskv,
\end{align*}
where $(i)$ follows since $\gA^S$ is a fixed classifier given the sample set $S$. This, combined with the previous equation block completes the proof.
\end{proof}
\subsection{The hat function and its properties}
In this section, we define the \emph{hat function} and establish some of its properties. This function will be useful in defining ``hard'' problem instances to prove our lower bounds. Given a positive integer $K$ the hat function is defined as
\begin{align*}
    \phi_K(x) = \begin{cases} \left|x+\frac{1}{4K}\right|-\frac{1}{4K} \qquad  &\text{for } x \in \left[-\frac{1}{2K},0\right],\\
    \frac{1}{4K} - \left| x-\frac{1}{4K }\right| \qquad  &\text{for } x\in \left[0,\frac{1}{2K}\right],\\
    0 \qquad  &\text{otherwise.}
    \end{cases} \numberthis \label{e:phi_def}
\end{align*}
When $K$ is clear from context, we omit the subscript.
\begin{figure}[H]
    \centering
    \includegraphics[scale=0.55]{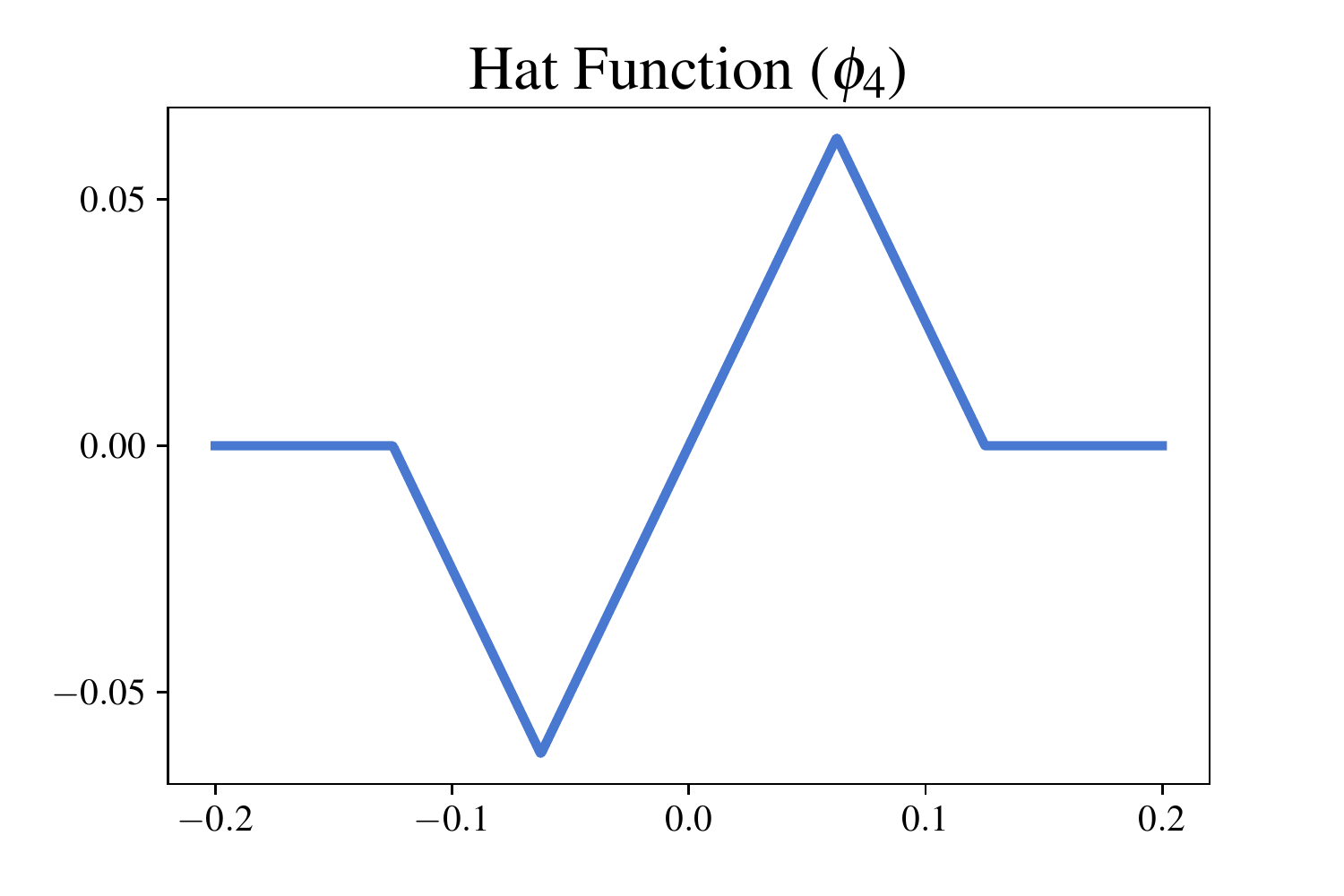}
    \caption{The hat function with $K=4$.}
    \label{fig:hatfunction}
\end{figure}

We first notice that this function is 1-Lipschitz and odd, so
\begin{align*}
    \int_{-\frac{1}{2K}}^{\frac{1}{2K}} \phi_K(x) \;\mathrm{d}x = 0.
\end{align*}

We also compute some other key quantities for $\phi$.

\begin{lemma}\label{l:int_abs_phi}For any positive integer $K$,
\begin{align*}
    \int_{-\frac{1}{2K}}^{\frac{1}{2K}} |\phi_K(x)| \;\mathrm{d}x = \frac{1}{8K^2}.
\end{align*}
\end{lemma}
\begin{proof} We suppress $K$ in the notation. We have that,
\begin{align*}
    \int_{-\frac{1}{2K}}^{\frac{1}{2K}} |\phi(x)| \;\mathrm{d}x &= \int_{-\frac{1}{2K}}^0 \left|\frac{1}{4K} - \left|x + \frac{1}{4K}\right|\right| \;\mathrm{d}x + \int_{0}^{\frac{1}{2K}} \left|\left|x - \frac{1}{4K}\right| - \frac{1}{4K}\right| \;\mathrm{d}x.
\end{align*}
The integrand $ \left|\frac{1}{4K} - \left|x + \frac{1}{4K}\right|\right|$ over $x \in \left[-\frac{1}{2K}, 0\right]$ defines a triangle with base $\frac{1}{2K}$ and height $\frac{1}{4K}$, thus it has area $\frac{1}{16K^2}$. Therefore,
\begin{align*}
    \int_{-\frac{1}{2K}}^0 \left|\frac{1}{4K} - \left|x + \frac{1}{4K}\right|\right| \;\mathrm{d}x &= \frac{1}{16K^2}.
\end{align*}
The same holds for the second term. Thus, by adding them up we get that $\int_{-\frac{1}{2K}}^{\frac{1}{2K}} |\phi(x)| \;\mathrm{d}x = \frac{1}{8K^2}$.
\end{proof}

\begin{lemma}\label{l:kl_phi}For any positive integer $K$,
\begin{align*}
    \int_{0}^{\frac{1}{K}} \log\left(\frac{1 + \phi_K(x - \frac{1}{2K})}{1 - \phi_K(x - \frac{1}{2K})}\right) \left(1 + \phi_K\left(x - \frac{1}{2K}\right)\right)\;\mathrm{d}x \leq \frac{1}{3K^3}
\end{align*}
and 
\begin{align*}
    \int_{0}^{\frac{1}{K}} \log\left(\frac{1 - \phi_K(x - \frac{1}{2K})}{1 + \phi_K(x - \frac{1}{2K})}\right) \left(1 - \phi_K\left(x - \frac{1}{2K}\right)\right)\;\mathrm{d}x \leq \frac{1}{3K^3}.
\end{align*}
\end{lemma}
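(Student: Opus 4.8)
The plan is to reduce both displays to one symmetric integral and then use the oddness of $\phi_K$ to cancel the dominant contribution. First I would substitute $u = x - \tfrac{1}{2K}$: as $x$ ranges over $[0,\tfrac1K]$, $u$ ranges over $[-\tfrac1{2K},\tfrac1{2K}]$, exactly the support of $\phi_K$, and there $0\le|\phi_K(u)|\le\tfrac1{4K}\le\tfrac14$ since the hat function is a triangle of height $\tfrac1{4K}$, so every logarithm that appears is well defined. Writing $\phi$ for $\phi_K$, the first display becomes $I:=\int_{-1/(2K)}^{1/(2K)}(1+\phi(u))\log\tfrac{1+\phi(u)}{1-\phi(u)}\,\mathrm{d}u$ and the second becomes $I':=\int_{-1/(2K)}^{1/(2K)}(1-\phi(u))\log\tfrac{1-\phi(u)}{1+\phi(u)}\,\mathrm{d}u$.

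Next I would invoke the elementary identity
\[
(1+\psi)\log\tfrac{1+\psi}{1-\psi}+(1-\psi)\log\tfrac{1-\psi}{1+\psi}=2\psi\log\tfrac{1+\psi}{1-\psi},
\]
valid for $\psi\in(-1,1)$, together with $\phi(-u)=-\phi(u)$. Replacing the integrand of $I$ by its average with the reflection $u\mapsto-u$ (which preserves the domain) turns $I$ into $\int_{-1/(2K)}^{1/(2K)}\phi(u)\log\tfrac{1+\phi(u)}{1-\phi(u)}\,\mathrm{d}u$, and the identical manipulation shows $I'$ equals the same quantity; in particular the two claimed inequalities coincide. The surviving integrand $\phi(u)\log\tfrac{1+\phi(u)}{1-\phi(u)}$ is nonnegative (the two factors share a sign) and even in $u$, so $I=I'=2\int_0^{1/(2K)}\phi(u)\log\tfrac{1+\phi(u)}{1-\phi(u)}\,\mathrm{d}u$ with $\phi\ge0$ on $[0,\tfrac1{2K}]$. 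This cancellation is the crux: the leading factor $1+\phi$ has been replaced by $\phi$ itself, gaining the extra power of $1/K$ needed for a $K^{-3}$ bound — without it only a $K^{-2}$ rate would be available.

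Finally I would estimate the scalar integrand via $\log\tfrac{1+\psi}{1-\psi}\le\tfrac{2\psi}{1-\psi}\le\tfrac83\psi$ for $\psi\in[0,\tfrac14]$, so that $\phi(u)\log\tfrac{1+\phi(u)}{1-\phi(u)}\le\tfrac83\phi(u)^2$, and compute $\int_0^{1/(2K)}\phi(u)^2\,\mathrm{d}u$ by splitting at the peak $u=\tfrac1{4K}$ (where $\phi(u)=u$ for $u\le\tfrac1{4K}$ and $\phi(u)=\tfrac1{2K}-u$ afterwards), obtaining $2\cdot\tfrac13\bigl(\tfrac1{4K}\bigr)^3=\tfrac1{96K^3}$. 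Putting these together yields $I=I'\le2\cdot\tfrac83\cdot\tfrac1{96K^3}=\tfrac1{18K^3}\le\tfrac1{3K^3}$, with room to spare. The only routine computations are this last quadratic integral and the one-variable logarithm inequality; the symmetry step above is the substantive one.
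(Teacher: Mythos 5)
Your proposal is correct and follows essentially the same route as the paper: substitute to center the integral, exploit the oddness of $\phi_K$ to replace the factor $1\pm\phi$ by $\phi$ itself (the crucial gain of a power of $1/K$), then apply an elementary logarithm bound and a quadratic integral. The only differences are cosmetic refinements — you note explicitly that the two integrals are equal rather than "analogous," and you compute $\int_0^{1/(2K)}\phi^2$ exactly instead of bounding $\phi(u)\le u$, which yields the sharper constant $1/(18K^3)$.
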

\begin{proof}Let us suppress $K$ in the notation. We prove the first bound below and the second bound follows by an identical argument. We have that
\begin{align*}
    &\int_{0}^{\frac{1}{K}} \log\left(\frac{1 + \phi(x - \frac{1}{2K})}{1 - \phi(x - \frac{1}{2K})}\right) \left(1 + \phi\left(x - \frac{1}{2K}\right)\right)\;\mathrm{d}x \\
    &\qquad= \int_{-\frac{1}{2K}}^{\frac{1}{2K}} \log\left(\frac{1 + \phi(x)}{1 - \phi(x)}\right) \left(1 + \phi(x)\right)\;\mathrm{d}x \\
    &\qquad= \int_{0}^{\frac{1}{2K}} \log\left(\frac{1 + \phi(x)}{1 - \phi(x)}\right) \left(1 + \phi(x)\right)\;\mathrm{d}x + \int_{-\frac{1}{2K}}^{0} \log\left(\frac{1 + \phi(x)}{1 - \phi(x)}\right) \left(1 + \phi(x)\right)\;\mathrm{d}x \\
    &\qquad= \int_{0}^{\frac{1}{2K}} \log\left(\frac{1 + \phi(x)}{1 - \phi(x)}\right) \left(1 + \phi(x)\right)\;\mathrm{d}x - \int_{\frac{1}{2K}}^{0} \log\left(\frac{1 + \phi(-x)}{1 - \phi(-x)}\right) \left(1 + \phi(-x)\right)\;\mathrm{d}x \\
    &\qquad= \int_{0}^{\frac{1}{2K}} \log\left(\frac{1 + \phi(x)}{1 - \phi(x)}\right) \left(1 + \phi(x)\right)\;\mathrm{d}x + \int_0^{\frac{1}{2K}} \log\left(\frac{1 - \phi(x)}{1 + \phi(x)}\right) \left(1 - \phi(x)\right)\;\mathrm{d}x,
\end{align*}
where the last equality follows since $\phi$ is an odd function. Now, we may collect the integrands to get that,
\begin{align*}
    &\int_{0}^{\frac{1}{K}} \log\left(\frac{1 + \phi(x - \frac{1}{2K})}{1 - \phi(x - \frac{1}{2K})}\right) \left(1 + \phi\left(x - \frac{1}{2K}\right)\right)\;\mathrm{d}x \\
    &\qquad= 2\int_{0}^{\frac{1}{2K}} \log\left(\frac{1 + \phi(x)}{1 - \phi(x)}\right) \phi(x)\;\mathrm{d}x \\
    &\qquad= 2\int_{0}^{\frac{1}{2K}} \log\left(1 + \frac{2\phi(x)}{1 - \phi(x)}\right) \phi(x)\;\mathrm{d}x \\
    &\qquad\leq 2\int_{0}^{\frac{1}{2K}} \frac{2\phi(x)^2}{1 - \phi(x)}\;\mathrm{d}x,
\end{align*}
where the last inequality follows since $\log(1+x) \leq x$ for all $x$. Now we observe that $\phi(x) \leq x \leq \frac{1}{2}$ for $x \in [0, \frac{1}{2K}]$, and in particular, $\frac{1}{1-\phi(x)} \leq 2$. Thus,
\begin{align*}
    &\int_{0}^{\frac{1}{K}} \log\left(\frac{1 + \phi(x - \frac{1}{2K})}{1 - \phi(x - \frac{1}{2K})}\right) \left(1 + \phi\left(x - \frac{1}{2K}\right)\right)\;\mathrm{d}x \\
    &\qquad\leq 8\int_{0}^{\frac{1}{2K}} \phi(x)^2\;\mathrm{d}x \\
    &\qquad\leq 8\int_{0}^{\frac{1}{2K}} x^2\;\mathrm{d}x \\
    &\qquad= \frac{1}{3K^3}.
\end{align*}
This proves the first bound. The second bound follows analogously. 
\end{proof}

\section{Proofs in the label shift setting}\label{s:proof_of_label_shift_theorem}
Throughout this section we operate in the label shift setting~(Section~\ref{s:label_shift_description}).

First, in Appendix~\ref{s:label_shift_lower_bound} through a sequence of lemmas we prove the minimax lower bound Theorem~\ref{t:label_shift_lower_bound}. Next, in Appendix~\ref{s:label_shift_upper_bound} we prove Theorem~\ref{t:label_shift_upper_bounds} which is an upper bound on the excess risk of the undersampled binning estimator (see Eq.~\eqref{e:definition_undersampled_binning}) with $\ceil{\nmin}^{1/3}$ bins by invoking previous results on nonparametric density estimation~\citep{freedman1981histogram,gyorfi1985nonparametric}.

\subsection{Proof of Theorem~\ref{t:label_shift_lower_bound}}\label{s:label_shift_lower_bound}
In this section, we provide a proof of the minimax lower bound in the label shift setting.

We will proceed by constructing a class of distributions where the separation between any two distributions in the class is small enough such that it is hard to distinguish between them with finite minority class samples. In particular, we split the interval $[0,1]$ into sub-intervals and each class distribution on each sub-interval either has slightly more probability mass on the left side of the sub-interval, on the right, or completely uniform. Since the minority class sample size is limited, no classifier will be able to tell which distribution the minority class is generated from, and hence will suffer high excess risk.

We construct the ``hard'' set of distributions as follows. Fix $K$ to be an integer that will be specified in the sequel as a function of $\nmin$. Let the index set be $\gV = \left\{-1,0,1\right\}^K \times \left\{-1,0,1\right\}^K$. For $v \in \gV$, we will let $v_1  \in \left\{-1,0,1\right\}^K$ be the first $K$ coordinates and $v_{-1}\in \left\{-1,0,1\right\}^K$ be the last $K$ coordinates. That is, $v = (v_1,v_{-1})$. 

For every $v \in \gP$ we shall define pair of class-conditional distributions $\p_{v,1}$ and $\p_{v,-1}$ as follows: for $x \in I_j = [\frac{j-1}{K},\frac{j}{K}]$,
\begin{align*}
    \p_{v,1}(x) &= 1+v_{1,j} \phi\left(x-\frac{j+1/2}{K}\right) \\
    \p_{v,-1}(x) &= 1+v_{-1,j} \phi\left(x-\frac{j+1/2}{K}\right),
\end{align*}
 where $\phi$ is defined in Eq.~\ref{e:phi_def}. Notice that $\p_{v,1}$ only depends on $v_1$ while $\p_{v,-1}$ only depends on $v_{-1}$. We continue to define
\begin{align*}
    \pmajv{v}(x,y) &= \p_{v,1}(x) \mathbf{1}(y=1)\\
    \pminv{v}(x,y) &= \p_{v,-1}(x) \mathbf{1}(y=-1), 
\end{align*}
and 
\begin{align*}
    \ptestv{v}(x,y) =   \frac{ \pmajv{v}(x,y)+\pminv{v}(x,y)}{2}= \frac{\p_{v,1}(x) \mathbf{1}(y=1)+\p_{v,-1}(x) \mathbf{1}(y=-1)}{2}.
\end{align*}
Observe that in the test distribution it is equally likely for the label to be $+1$ or $-1$. 

Recall that as described in Section~\ref{s:common_lower_bound_framework}, $V$ shall be a uniform random variable over $\gV$ and $S\mid V \sim \pmajv{v}^\nmaj \times \pminv{v}^\nmin$. We shall let $\q$ denote the joint distribution of $(V,S)$ and let $\qs$ denote the marginal over $S$.

With this construction in place, we first show that the minimax excess risk is lower bounded as follows. 

\begin{lemma}\label{l:label_shift_lower_bound_decomposition}For any positive integers $K, \nmaj, \nmin$, the minimax excess risk is lower bounded as follows:
\begin{align*}
    &\mmR(\gP_{\mathsf{LS}}) \\&\qquad = \inf_{\cA}\sup_{(\pmaj,\pmin)\in\gP_{\mathsf{LS}}} \E_{S \sim \pmaj^\nmaj \times \pmin^\nmin}\left[R(\cA^S; \ptest)-R(f^{\star}; \ptest) \right]\\ &\qquad\ge  \frac{1}{36K}-\frac{1}{2}\E_{S \sim \qs}\left[\mathrm{TV}\left(\sum_{v \in \gV} \q(v \mid S)\p_{v,1}, \sum_{v \in \gV} \q(v \mid S)\p_{v,-1}\right)\right]. \numberthis \label{e:label_shift_minimax_lower_bound_step_1}
\end{align*}
\end{lemma}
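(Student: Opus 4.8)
The plan is to apply the reduction to a finite class from Lemma~\ref{l:m_lb} and then lower bound $\riskv$ and upper bound $\bayesv$ separately using the structure of the constructed hat-function instances.

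First I would invoke Lemma~\ref{l:m_lb} to get $\mmR(\gP_{\mathsf{LS}}) \ge \riskv - \bayesv$. The Bayes-error term $\bayesv = \E_V[R(f^\star(\p_{V,\mathsf{test}});\p_{V,\mathsf{test}})]$ should be controlled by noting that for each fixed $v$, the Bayes classifier on $\ptestv{v}$ predicts $\sign(\p_{v,1}(x)-\p_{v,-1}(x))$ and its risk is $\tfrac12 - \tfrac14\int_0^1 |\p_{v,1}(x)-\p_{v,-1}(x)|\,\mathrm dx$, up to a constant factor in the normalization of the risk (note the $\ptestv{v}$ here is defined without the $\tfrac12$ on $\pmajv{v}$, so the ``total mass'' is $1$). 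Using Lemma~\ref{l:int_abs_phi} and the fact that $|\p_{v,1}-\p_{v,-1}| = |v_{1,j}\phi - v_{-1,j}\phi|$ on each bin, I would bound $\int_0^1|\p_{v,1}-\p_{v,-1}| \le 2\sum_j \int_{I_j}|\phi| = 2K\cdot\frac{1}{8K^2} = \frac{1}{4K}$, which after accounting for the factor of $\tfrac14$ gives $\bayesv \ge \tfrac12 - \tfrac{1}{16K}$ or similar; more carefully I want the clean statement that the \emph{excess} risk contributed, i.e.\ the gap we lose, is at most something like $\tfrac{1}{16K}$ per the normalization used.

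Next, for $\riskv$, the key observation is that for any fixed $S$, the inner infimum over classifiers $h$ of $\Pr_{(x,y)\sim \sum_v \q(v\mid S)\ptestv{v}}(h(x)\neq y)$ is a Bayes-risk computation for the \emph{mixture} test distribution $\sum_v \q(v\mid S)\ptestv{v}$, whose class-conditionals are $\bar\p_{1,S}:=\sum_v \q(v\mid S)\p_{v,1}$ and $\bar\p_{-1,S}:=\sum_v \q(v\mid S)\p_{v,-1}$. Its optimal risk is $\tfrac12 - \tfrac14\int_0^1|\bar\p_{1,S}-\bar\p_{-1,S}|\,\mathrm dx = \tfrac12 - \tfrac12\mathrm{TV}(\bar\p_{1,S},\bar\p_{-1,S})$ (again up to the normalization convention). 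Taking expectations over $S\sim\qs$, this yields $\riskv = \tfrac12 - \tfrac12\E_S[\mathrm{TV}(\bar\p_{1,S},\bar\p_{-1,S})]$. Combining with the bound on $\bayesv$, the $\tfrac12$ terms cancel and we are left with $\mmR \ge \tfrac{1}{c_0 K} - \tfrac12\E_S[\mathrm{TV}(\sum_v\q(v\mid S)\p_{v,1},\sum_v\q(v\mid S)\p_{v,-1})]$; the target constant $\tfrac{1}{36K}$ tells me the normalization should be chosen so the first term comes out as $\tfrac{1}{36K}$, which I would obtain by being slightly more generous/careful in the $\bayesv$ bound (e.g.\ using that the supports of the $\phi$'s on adjacent bins are disjoint and that averaging over $v_{1,j},v_{-1,j}\in\{-1,0,1\}$ uniformly shrinks the expected separation, so actually $\bayesv$ is even closer to $\tfrac12$).

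The main obstacle I anticipate is bookkeeping around the normalization of the risk and the test distribution — because $\ptestv{v}$ as written is a sum of two ``half-densities'' rather than an average, I need to be careful whether the Bayes risk formula carries a $\tfrac14$ or a $\tfrac12$ in front of the $L^1$ distance, and to make sure the $\tfrac12$ additive constants in $\riskv$ and $\bayesv$ genuinely cancel so that only the TV term remains. A secondary subtlety is justifying that the inner $\inf_h$ in the definition of $\riskv$ really is achieved by the Bayes rule for the mixture and equals the stated TV expression; this is standard (the Bayes risk of a two-class problem with class-conditionals $q_1,q_{-1}$ and equal priors-within-the-construction is $\tfrac12(1 - \mathrm{TV}(q_1,q_{-1}))$ when $q_1,q_{-1}$ are probability densities) but must be applied to the mixture consistently. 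Once these normalizations are pinned down, the remaining step is purely arithmetic and produces exactly inequality~\eqref{e:label_shift_minimax_lower_bound_step_1}.
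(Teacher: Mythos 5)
Your reduction via Lemma~\ref{l:m_lb} and your treatment of $\riskv$ are exactly right: the inner infimum is the Bayes risk of a two-class problem with equal priors and class-conditional densities $\sum_{v}\q(v\mid S)\p_{v,1}$ and $\sum_{v}\q(v\mid S)\p_{v,-1}$, so Le Cam gives $\riskv = \tfrac12 - \tfrac12\E_{S}[\mathrm{TV}(\cdot,\cdot)]$, which is what the paper does. The gap is in the $\bayesv$ term, where your inequality points the wrong way. Since $\mmR \ge \riskv - \bayesv$ and $\riskv = \tfrac12 - \tfrac12\E_S[\mathrm{TV}]$, producing the positive term $\tfrac{1}{36K}$ requires an \emph{upper} bound $\bayesv \le \tfrac12 - \tfrac{1}{36K}$, equivalently a \emph{lower} bound $\E_V[\mathrm{TV}(\p_{V,1},\p_{V,-1})] \ge \tfrac{1}{18K}$. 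Your bound $\int_0^1|\p_{v,1}-\p_{v,-1}|\,\mathrm{d}x \le \tfrac{1}{4K}$ (using $|v_{1,j}-v_{-1,j}|\le 2$) yields only $\bayesv \ge \tfrac12 - \tfrac{1}{16K}$, a lower bound on $\bayesv$, which cannot be subtracted from $\riskv$ to leave anything positive; and your closing heuristic that averaging over $v$ ``shrinks the expected separation, so $\bayesv$ is even closer to $\tfrac12$'' would, if pushed, make the lemma's first term \emph{smaller}, not recover $\tfrac{1}{36K}$.

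What is actually needed (and what the paper does) is an exact evaluation rather than a triangle-inequality estimate: on each bin the total variation contributes $\tfrac{1}{16K^2}|V_{1,j}-V_{-1,j}|$ by Lemma~\ref{l:int_abs_phi}, and for $V_{1,j},V_{-1,j}$ i.i.d.\ uniform on $\{-1,0,1\}$ one computes $\E|V_{1,j}-V_{-1,j}| = \tfrac{8}{9}$, giving $\E_V[\mathrm{TV}(\p_{V,1},\p_{V,-1})] = \tfrac{1}{18K}$ exactly and hence $\bayesv = \tfrac12 - \tfrac{1}{36K}$. The constant $36$ is pinned by this $\tfrac{8}{9}$ and cannot be obtained by being ``slightly more generous'' with a one-sided bound. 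Once you reverse the direction of the $\bayesv$ estimate and carry out that computation, the remainder of your argument goes through and coincides with the paper's proof.
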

\begin{proof}
By invoking Lemma~\ref{l:m_lb} we get that
\begin{align*}
&\mmR(\gP_{\mathsf{LS}}) \\ 
&\qquad \ge \underbrace{\E_{S \sim \qs}[\inf_{h}\Pr_{(x,y) \sim \sum_{v \in \gV} \q(v\mid S)\ptestv{v}}(h(x) \neq y)]}_{=:\mathfrak{R}_{\gV}}-\underbrace{\E_{V}[R(f^{\star}(\p_{V,\mathsf{test}});\p_{V,\mathsf{test}}))]}_{=:\mathfrak{B}_{\gV}}.
\end{align*}
We proceed by calculating alternate expressions for $\mathfrak{R}_{\gV}$ and $\mathfrak{B}_{\gV}$ to get our desired lower bound on the minimax excess risk.

\textbf{Calculation of $\mathfrak{R}_{\gV}$:}
Immediately by Le Cam's lemma \citep[][Eq.~15.13]{wainwright2019high}, we get that
\begin{align*}
    \mathfrak{R}_{\gV} &= \E_{S \sim \qs}\left[\inf_{h}\Pr_{(x,y) \sim \sum_{v \in \gV} \q(v \mid S)\ptestv{v}}(h(x) \neq y)\right] \\
    &= \frac{1}{2}\E_{S \sim \qs}\left[1 - \mathrm{TV}\left(\sum_{v \in \gV} \q(v \mid S)\p_{v,1}, \sum_{v \in \gV} \q(v \mid S)\p_{v,-1}\right)\right]. \numberthis \label{e:minimax_lipschitz_label_shift_1d}
\end{align*}

\textbf{Calculation of $\mathfrak{B}_{\gV}$:} Again by invoking Le Cam's lemma \citep[][Eq.~15.13]{wainwright2019high}, we get that for any class conditional distributions $\p_1,\p_{-1}$,
\begin{align*}
    R(f^{\star};\p_{\mathsf{test}}) = \frac{1}{2} - \frac{1}{2}\mathrm{TV}(\p_{1}, \p_{-1}).
\end{align*}
So by taking expectations, we get that
\begin{align*}
  \mathfrak{B}_{\gV} =  \E_{V }[R(f^{\star}(\p_{V,\mathsf{test}});\p_{V,\mathsf{test}})] &= \E_{V }\left[\frac{1}{2} - \frac{1}{2}\mathrm{TV}(\p_{V, 1}, \p_{V, -1})\right]. \label{e:ls_br_first_step}\numberthis
\end{align*}

We now compute $\E_{V} [\mathrm{TV}(\p_{V, 1}, \p_{V, -1})]$ as follows:
\begin{align*}
    \E_{V} [\mathrm{TV}(\p_{V, 1}, \p_{V, -1})] &= \frac{1}{2} \E_{V}\left[\int_{x=0}^1 \left|\p_{V, 1}(x) - \p_{V, -1}(x)\right| \; \mathrm{d}x\right] \\
    &= \frac{1}{2}\E_{V}\left[\sum_{j=1}^K \int_{\frac{j-1}{K}}^{\frac{j}{K}} |V_{1,j} - V_{-1,j}| \left|\phi\left(x - \frac{j+1/2}{K}\right)\right|
    \; \mathrm{d}x\right] \\
    &= \frac{1}{2}\sum_{j=1}^K\E_{V}\left[ \int_{\frac{j-1}{K}}^{\frac{j}{K}} |V_{1,j} - V_{-1,j}| \left|\phi\left(x - \frac{j+1/2}{K}\right)\right|
    \; \mathrm{d}x\right] \\
    &\overset{(i)}{=} \frac{1}{16K^2} \sum_{j=1}^K \E_{V}[|V_{1,j} - V_{-1,j}|],
\end{align*}
where $(i)$ follows by Lemma~\ref{l:int_abs_phi}. Observe that $V_{1,j}, V_{-1,j}$ are independent uniform random variables on $\{-1, 0, 1\}$, it is therefore straightforward to compute that
\begin{align*}
    \E_{V}[|V_{1,j} - V_{-1,j}|] &= \frac{8}{9}.
\end{align*}
This yields that
\begin{align*}
    \E_{V} \left[\mathrm{TV}(\p_{V, 1}, \p_{V, -1})\right] &= \frac{1}{18K}.
\end{align*}
Plugging this into Eq.~\eqref{e:ls_br_first_step} allows us to conclude that
\begin{align*}
   \mathfrak{B}_{\gV} = \E_{V }[R(f^{\star}(\p_{V,\mathsf{test}});\p_{V,\mathsf{test}})] &= \frac{1}{2}\left(1 - \frac{1}{18K}\right). \label{e:label_shift_bayes_risk}\numberthis
\end{align*}

Combining Eqs.~\eqref{e:minimax_lipschitz_label_shift_1d} and \eqref{e:label_shift_bayes_risk} establishes the claimed result.

\end{proof}

In light of this previous lemma we now aim to upper bound the expected total variation distance in Eq.~\eqref{e:label_shift_minimax_lower_bound_step_1}.

\begin{lemma}\label{l:label_shift_lower_bound_tv_bound}Suppose that $v$ is drawn uniformly from the set $\{-1,1\}^K$, and that $S \mid v$ is drawn from $\pmajv{v}^{\nmaj}\times \pminv{v}^{\nmin}$ then,
\begin{align*}
    \mathbb{E}_S\left[\mathrm{TV}\left( \sum_{v \in \gV}\q(v\mid S)\p_{v,1},\sum_{v \in \gV}\q(v\mid S)\p_{v,-1} \right)\right] 
     \leq \frac{1}{18K} - \frac{1}{144K}\exp\left(-\frac{\nmin}{3K^3}\right).
\end{align*}
\end{lemma}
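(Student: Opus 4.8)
The plan is to reduce the total variation distance to a sum of $K$ one-dimensional, per-bin contributions, and then to bound each of them by relating it to the residual uncertainty of the minority posterior.

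\emph{Step 1 (per-bin decomposition).} Fix $S$, write $V=(V_1,V_{-1})$ for the uniform index over $\gV$ and $c_j$ for the center of the $j$-th hat. On bin $I_j$ we have $\sum_{v\in\gV}\q(v\mid S)(\p_{v,1}(x)-\p_{v,-1}(x))=\E[V_{1,j}-V_{-1,j}\mid S]\,\phi(x-c_j)$, and $\int_{I_j}|\phi(x-c_j)|\,\mathrm{d}x=\tfrac1{8K^2}$ by Lemma~\ref{l:int_abs_phi}, so
\[
\mathrm{TV}\Big(\textstyle\sum_v\q(v\mid S)\p_{v,1},\ \sum_v\q(v\mid S)\p_{v,-1}\Big)=\frac1{16K^2}\sum_{j=1}^K\big|\E[V_{1,j}-V_{-1,j}\mid S]\big|.
\]
Taking $\E_{S\sim\qs}$, it is enough to show $\E_S\big[\big|\E[V_{1,j}-V_{-1,j}\mid S]\big|\big]\le\tfrac89-\tfrac19\exp(-\nmin/(3K^3))$ for each $j$; summing and dividing by $16K^2$ then gives exactly the asserted bound.

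\emph{Step 2 (reduction to the minority posterior).} Since $\p_{v,1}$ depends only on $V_1$, $\p_{v,-1}$ only on $V_{-1}$, and $V_1\perp V_{-1}$, the datasets $\Smaj$ and $\Smin$ are independent and $\E[V_{1,j}-V_{-1,j}\mid S]=A-B$ with $A:=\E[V_{1,j}\mid\Smaj]$ and $B:=\E[V_{-1,j}\mid\Smin]$ independent. The law of $B$ is symmetric about $0$: reflecting the minority points that land in $I_j$ about $c_j$ preserves the marginal law of $\Smin$ (since $\phi$ is odd about $c_j$, it turns $1+V_{-1,j}\phi$ into $1-V_{-1,j}\phi$) and sends $B\mapsto-B$. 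Conditioning on $A$ and on $|B|$ and averaging over $\sign(B)$, which is uniform on $\{\pm1\}$ and independent of $A$,
\[
\E_S[|A-B|]=\E_S\big[\tfrac12|A-B|+\tfrac12|A+B|\big]=\E_S\big[\max(|A|,|B|)\big].
\]
Next, $\max(|A|,|B|)\le|A|+(1-|A|)|B|$, so by independence and $\E|A|\le\E|V_{1,j}|=\tfrac23$ (using that $\alpha\mapsto\alpha+(1-\alpha)\E|B|$ is increasing for $\alpha\le\tfrac23$), $\E_S[\max(|A|,|B|)]\le\E|A|+(1-\E|A|)\E|B|\le\tfrac23+\tfrac13\E|B|$. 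Hence it remains to prove $\E_{\Smin}\big[\big|\E[V_{-1,j}\mid\Smin]\big|\big]\le\tfrac23-\tfrac13\exp(-\nmin/(3K^3))$.

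\emph{Step 3 (the minority posterior via KL divergence).} Let $Q_w$ denote the law of $\Smin$ conditional on $V_{-1,j}=w$. As the prior on $V_{-1,j}$ is uniform on $\{-1,0,1\}$, $\Pr(V_{-1,j}=w\mid\Smin)=q_w/(q_{-1}+q_0+q_1)$ for the densities $q_w$ of $Q_w$, and a direct computation (in which the $q_0$ term cancels) gives
\[
\tfrac23-\E_{\Smin}|B|=\E_{\Smin}\big[2\min(\Pr(V_{-1,j}=1\mid\Smin),\Pr(V_{-1,j}=-1\mid\Smin))\big]=\tfrac23\!\int\!\min(q_1,q_{-1})=\tfrac23\big(1-\mathrm{TV}(Q_1,Q_{-1})\big).
\]
By the Bretagnolle--Huber inequality, $1-\mathrm{TV}(Q_1,Q_{-1})\ge\tfrac12\exp(-\mathrm{KL}(Q_1\Vert Q_{-1}))$; and by convexity of KL, tensorization, and the fact that two $\p_{v,-1}$ differing only in coordinate $j$ coincide off $I_j$, $\mathrm{KL}(Q_1\Vert Q_{-1})\le\nmin\int_{I_j}(1+\phi(x-c_j))\log\frac{1+\phi(x-c_j)}{1-\phi(x-c_j)}\,\mathrm{d}x\le\nmin/(3K^3)$ by Lemma~\ref{l:kl_phi}. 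Chaining the last displays completes the proof.

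\emph{Main obstacle.} The crux is Step 2---realizing that the gain over the trivial bound $\E_S[|\E[V_{1,j}-V_{-1,j}\mid S]|]\le\E|V_{1,j}-V_{-1,j}|=\tfrac89$ is controlled purely by the minority posterior's residual uncertainty $\tfrac23-\E_{\Smin}|B|$, which needs the symmetry of $B$ to collapse $\E_S[|A-B|]$ to $\E_S[\max(|A|,|B|)]$---together with the clean identity $\tfrac23-\E_{\Smin}|B|=\tfrac23(1-\mathrm{TV}(Q_1,Q_{-1}))$ in Step 3. After that, Lemma~\ref{l:kl_phi} and Bretagnolle--Huber finish it, and the per-bin decomposition and posterior factorization are routine.
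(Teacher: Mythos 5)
Your proof is correct, and while it shares the paper's outer skeleton (the per-bin reduction via Lemma~\ref{l:int_abs_phi} to $\frac{1}{16K^2}\sum_j \E_S\left|\E[V_{1,j}-V_{-1,j}\mid S]\right|$, and the final Bretagnolle--Huber plus Lemma~\ref{l:kl_phi} step), the core of your argument is genuinely different from the paper's. The paper stays with the full mixture over $\gV$: it pairs each $v$ with the sign-flipped $\tilde v$, uses the triangle inequality to bound the per-bin term by $\sum_{v\in\gV^+}(v_{1,1}-v_{-1,1})\,\mathrm{TV}(\q(S\mid v),\q(S\mid\tilde v))$, and then observes that only for the subset where $v_{1,1}=\tilde v_{1,1}=0$ does the KL chain rule leave a pure $\nmin$ term (the other two subsets of $\gV^+$ are handled by the trivial bound $\mathrm{TV}\le 1$). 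You instead exploit the product structure of the posterior --- $A=\E[V_{1,j}\mid\Smaj]$ and $B=\E[V_{-1,j}\mid\Smin]$ independent --- together with the reflection symmetry of $\phi$ to get $\E|A-B|=\E[\max(|A|,|B|)]$ (the identity $|a-b|+|a+b|=2\max(|a|,|b|)$ plus $B\overset{d}{=}-B$), then dispose of the majority part by Jensen ($\E|A|\le 2/3$) and reduce everything to the exact identity $\tfrac23-\E|B|=\tfrac23(1-\mathrm{TV}(Q_1,Q_{-1}))$ for the two-point minority testing problem. Both routes land on exactly $\frac{1}{18K}-\frac{1}{144K}\exp(-\nmin/(3K^3))$. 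Your version is arguably more conceptual: it makes transparent that the entire gain over the trivial bound $8/9$ is the residual uncertainty of the minority posterior, hence why $\nmaj$ can never appear; the paper's pairing argument is more mechanical but avoids the symmetrization step and the $\max$ inequality. One cosmetic note: the lemma statement (in the paper too) says $v$ is uniform on $\{-1,1\}^K$, but the construction uses $\gV=\{-1,0,1\}^K\times\{-1,0,1\}^K$; you correctly worked with the latter, which is what the constants $8/9$ and $2/3$ require.
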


\begin{proof} Let $\psi :=  \mathbb{E}_S\left[\mathrm{TV}\left( \sum_{v \in \gV}\q(v\mid S)\p_{v,1},\sum_{v \in \gV}\q(v\mid S)\p_{v,-1} \right)\right] $. Then,
\begin{align*}
    \psi &=  \mathbb{E}_S\left[\mathrm{TV}\left( \sum_{v \in \gV}\q(v\mid S)\p_{v,1},\sum_{v \in \gV}\q(v\mid S)\p_{v,-1} \right)\right]\\ 
    &= \frac{1}{2}\mathbb{E}_S\left[\int_{x=0}^{1} \left|\sum_{v \in \gV}\q(v\mid S)\left(\p_{v,1}(x)-\p_{v,-1}(x)\right)\right| \;\mathrm{d}x\right] \\
    &= \frac{1}{2}\mathbb{E}_{S}\left[\sum_{j=1}^{K}\int_{x=\frac{j-1}{K}}^{\frac{j}{K}} \left|\sum_{v \in \gV}\q(v\mid S)\left(\p_{v,1}(x)-\p_{v,-1}(x)\right)\right| \;\mathrm{d}x\right] \\
    &= \frac{1}{2}\mathbb{E}_{S}\left[\sum_{j=1}^{K}\int_{x=\frac{j-1}{K}}^{\frac{j}{K}} \left|\sum_{v \in \gV}\q(v\mid S)(v_{1,j}-v_{-1,j}) \phi\left(x-\frac{j+1/2}{K}\right)\right| \;\mathrm{d}x\right],
\end{align*}
where the last equality is by the definition of $\p_{v,1}$ and  $\p_{v,-1}$. Continuing we get that,
\begin{align*}
    \psi &= \frac{1}{2}\sum_{j=1}^{K}\left[\int_{x=\frac{j-1}{K}}^{\frac{j}{K}} \left|\phi\left(x-\frac{j+1/2}{K}\right)\right| \;\mathrm{d}x\right]\mathbb{E}_{S}\left[\left|\sum_{v \in \gV}\q(v\mid S)(v_{1,j}-v_{-1,j})\right| \right] \\
    &\overset{(i)}{=} \frac{1}{16K^2}\mathbb{E}_{S}\left[\sum_{j=1}^{K}\left|\sum_{v \in \gV}\q(v\mid S)(v_{1,j}-v_{-1,j})\right|\right] \\
    &= \frac{1}{16K^2}\sum_{j=1}^{K} \int \left|\sum_{v \in \gV}\q(v\mid S)(v_{1,j}-v_{-1,j})\right|\; \mathrm{d}\qs(S) \\
    &= \frac{1}{16K^2}\sum_{j=1}^{K} \int \left|\sum_{v \in \gV}\q(v, S)(v_{1,j}-v_{-1,j})\right| \; \mathrm{d}S \\
    &\overset{(ii)}{=} \frac{1}{16K^2 |\gV|}\sum_{j=1}^{K} \int \left|\sum_{v \in \gV}\q(S \mid v)(v_{1,j}-v_{-1,j})\right| \; \mathrm{d}S,
\end{align*}
where $(i)$ follows by the calculation in Lemma~\ref{l:int_abs_phi} and $(ii)$ follows since $v$ is a uniform random variable over the set $\gV$. 

The distributions $\p_{v,1}$ and $\p_{v,-1}$ are symmetrically defined over all intervals $I_{j} = [\frac{j-1}{K},\frac{j}{K}]$, and hence all of the summands in the RHS above are equal. Thus, 
\begin{align}
    \psi &= \frac{1}{16K |\gV|}\int \left|\sum_{v \in \gV}\q(S \mid v)(v_{1,1}-v_{-1,1})\right| \; \mathrm{d}S. \label{e:label_shift_lb_pre_tilde_v}
    \end{align}
    
    Before we continue further, let us define 
\begin{align*}
\gV^+ = \{v \in \gV \mid v_{1,1}>v_{-1,1}\}. 
\end{align*}
 For every $v \in \gV^+$, let $\tilde{v} \in \gV$ be such that is the same as $v$ on all coordinates, except $\tilde{v}_{1,1} = -v_{1,1}$ and $\tilde{v}_{-1,1} = -v_{-1,1}$. Then continuing from Eq.~\eqref{e:label_shift_lb_pre_tilde_v} we find that,
    \begin{align*}
    \psi &\overset{(i)}{=} \frac{1}{16K |\gV|} \int \left|\sum_{v \in \gV^+}(v_{1,1}-v_{-1,1})(\q(S \mid v)-\q(S \mid \tilde{v}))\right| \; \mathrm{d}S \\
    &\overset{(ii)}{\leq} \frac{1}{16K |\gV|} \int \sum_{v \in \gV^+}(v_{1,1}-v_{-1,1})\left|\q(S \mid v)-\q(S \mid \tilde{v})\right| \;\mathrm{d}S \\
    &= \frac{1}{16K |\gV|} \sum_{v \in \gV^+}(v_{1,1}-v_{-1,1}) \int \left|\q(S \mid v)-\q(S \mid \tilde{v})\right| \;\mathrm{d}S \\
    &= \frac{1}{8K |\gV|}  \underbrace{\sum_{v \in \gV^+}(v_{1,1}-v_{-1,1}) \mathrm{TV} (\q(S \mid v), \q(S \mid \tilde{v}))}_{=:\Xi}, \label{e:ls_lb_psi_by_xi}\numberthis
\end{align*}
where $(i)$ we use the definition of $\gV^{+}$ and $\tilde{v}$, $(ii)$ follows since $v_{1,1}>v_{-1,1}$ for $v \in \gV^+$.


Now we further partition $\gV^+$ into 3 sets $\gV^{(1,0)}, \gV^{(0, -1)}, \gV^{(1,-1)}$ as follows
\begin{align*}
    &\gV^{(1,0)} = \{v \in \gV \mid v_{1,1} = 1, v_{-1,1} = 0\}, \\
    &\gV^{(0,-1)} = \{v \in \gV \mid v_{1,1} = 0, v_{-1,1} = -1\}, \\
    &\gV^{(1,-1)} = \{v \in \gV \mid v_{1,1} = 1, v_{-1,1} = -1\}.
\end{align*}

Note that $\q(S \mid v) = \pmajv{v}^\nmaj \times \pminv{v}^\nmin$, and therefore
\begin{align*}
\Xi&=\sum_{v \in \gV^+}(v_{1,1}-v_{-1,1}) \mathrm{TV} \left(\pmajv{v}^\nmaj \times \pminv{v}^\nmin, \pmajv{\tilde{v}}^\nmaj \times \pminv{\tilde{v}}^\nmin\right) \\
    & \overset{(i)}{=} \sum_{v \in \gV^{(1,0)}} \mathrm{TV} \left(\pmajv{v}^\nmaj \times \pminv{v}^\nmin, \pmajv{\tilde{v}}^\nmaj \times \pminv{\tilde{v}}^\nmin\right) \\
    & \qquad + \sum_{v \in \gV^{(0,-1)}} \mathrm{TV} \left(\pmajv{v}^\nmaj \times \pminv{v}^\nmin, \pmajv{\tilde{v}}^\nmaj \times \pminv{\tilde{v}}^\nmin\right) \\
    & \qquad + 2\sum_{v \in \gV^{(1,-1)}} \mathrm{TV} \left(\pmajv{v}^\nmaj \times \pminv{v}^\nmin, \pmajv{\tilde{v}}^\nmaj \times \pminv{\tilde{v}}^\nmin\right), \numberthis \label{e:ls_lb_xi_decomposition}
\end{align*}
where $(i)$ follows since $v_1,v_{-1} \in \{-1,0,1\}^K$ and by the definition of the sets $\gV^{(1,0)},\gV^{(0,1)}$ and $\gV^{(1,-1)}$. 

Now by the Bretagnolle–Huber inequality~\citep[see][Corollary~4]{canonne2022short},
\begin{align*}
    \mathrm{TV} \left(\pmajv{v}^\nmaj \times \pminv{v}^\nmin, \pmajv{\tilde{v}}^\nmaj \times \pminv{\tilde{v}}^\nmin\right) &= \mathrm{TV} \left(\pmajv{\tilde{v}}^\nmaj \times \pminv{\tilde{v}}^\nmin, \pmajv{v}^\nmaj \times \pminv{v}^\nmin\right) \\
    &\leq 1 - \frac{1}{2}\exp\left(- \mathrm{KL} \left(\pmajv{\tilde{v}}^\nmaj \times \pminv{\tilde{v}}^\nmin \| \pmajv{v}^\nmaj \times \pminv{v}^\nmin\right)\right),
\end{align*}
where we flip the arguments in the first step for simplicity later.

Next, by the chain rule for KL-divergence, we have that
\begin{align*}
   \mathrm{KL} (\pmajv{\tilde{v}}^\nmaj \times \pminv{\tilde{v}}^\nmin \| \pmajv{v}^\nmaj \times \pminv{v}^\nmin) &= \nmaj \mathrm{KL} (\pmajv{\tilde{v}} \| \pmajv{v}) + \nmin \mathrm{KL} (\pminv{\tilde{v}} \| \pminv{v}).
\end{align*}

Using these, let us upper bound the first term in Eq.~\eqref{e:ls_lb_xi_decomposition} corresponding to $v \in \gV^{(0,-1)}$. For $v \in \gV^{(0,-1)}$, notice that $\mathrm{KL} (\pmajv{\tilde{v}} \| \pmajv{v}) = 0$ since $v_{1,j} = \tilde{v}_{1,j}$ for all $j \in \{1,\ldots,K\}$. For the second term, $\mathrm{KL} (\pminv{\tilde{v}} \| \pminv{v})$, only $v_{1,1}$ and $\tilde{v}_{1,1}$ differ, so
\begin{align*}
    \mathrm{KL} (\pminv{\tilde{v}} \| \pminv{v}) &= \int_0^1 \p_{v,-1}(x) \log \left(\frac{\p_{v,-1}(x)}{\p_{\tilde{v},-1}(x)}\right) \;\mathrm{d}x \\
    &= \int_{0}^{\frac{1}{K}} \log\left(\frac{1 + \phi_K(x - \frac{1}{2K})}{1 - \phi_K(x - \frac{1}{2K})}\right) \left(1 + \phi_K\left(x - \frac{1}{2K}\right)\right)\; \mathrm{d}x \\
    &\leq \frac{1}{3K^3},
\end{align*}
where the last inequality is a result of the calculation in Lemma~\ref{l:kl_phi}.

Therefore, we get
\begin{align*}
    \sum_{v \in \gV^{(0,-1)}} \mathrm{TV} \left(\pmajv{v}^\nmaj \times \pminv{v}^\nmin, \pmajv{\tilde{v}}^\nmaj \times \pminv{\tilde{v}}^\nmin\right) &\leq 9^{K-1} \left(1 - \frac{1}{2}\exp\left(-\frac{\nmin}{3K^3}\right)\right).
\end{align*}

For the terms in Eq.~\eqref{e:ls_lb_xi_decomposition} corresponding to  $\gV^{(0,-1)}, \gV^{(1,-1)}$, we simply take the trivial bound to get
\begin{align*}
    \sum_{v \in \gV^{(0,-1)}} \mathrm{TV} \left(\pmajv{v}^\nmaj \times \pminv{v}^\nmin, \pmajv{\tilde{v}}^\nmaj \times \pminv{\tilde{v}}^\nmin\right) &\leq 9^{K-1}, \\
    \sum_{v \in \gV^{(1,-1)}} \mathrm{TV} \left(\pmajv{v}^\nmaj \times \pminv{v}^\nmin, \pmajv{\tilde{v}}^\nmaj \times \pminv{\tilde{v}}^\nmin\right) &\leq 9^{K-1}.
\end{align*}
Plugging these bounds into Eq.~\eqref{e:ls_lb_xi_decomposition} we get that,
\begin{align*}
    \Xi \le 4 \cdot 9^{K-1} - \frac{9^{K-1}}{2}\exp\left(-\frac{\nmin}{3K^3}\right).
\end{align*}

Now using this bound on $\Xi$ in Eq.~\eqref{e:ls_lb_psi_by_xi} and observing that $|\gV| = 9^K$, we get that,
\begin{align*}
    \psi&=\mathbb{E}_S\left[\mathrm{TV}\left( \sum_{v \in \gV}Q(v\mid S)P_{v,1},\sum_{v \in \gV}Q(v\mid S)P_{v,-1} \right)\right] \\
    & \le \frac{1}{8\cdot 9^K K }\left(4 \cdot 9^{K-1} - \frac{9^{K-1}}{2}\exp\left(-\frac{\nmin}{3K^3}\right)\right)\\
    & = \frac{1}{18K} - \frac{1}{144K}\exp\left(-\frac{\nmin}{3K^3}\right),
\end{align*}
completing the proof.
\end{proof}
Finally, we combine Lemma~\ref{l:label_shift_lower_bound_decomposition} and Lemma~\ref{l:label_shift_lower_bound_tv_bound} to establish the minimax lower bound in this label shift setting. We recall the statement of the theorem here. 
\lblshiftlower*
\begin{proof}By Lemma~\ref{l:label_shift_lower_bound_decomposition} we know that,
\begin{align*}
    \mmR(\gP_{\mathsf{LS}})& \ge  \frac{1}{36K}-\frac{1}{2}\E_{S \sim \qs}\left[\mathrm{TV}\left(\sum_{v \in \gV} \q(v \mid S)\p_{v,1}, \sum_{v \in \gV} \q(v \mid S)\p_{v,-1}\right)\right].
\end{align*}
Next by the calculation in Lemma~\ref{l:label_shift_lower_bound_tv_bound} we have that
\begin{align*}
    \mmR(\gP_{\mathsf{LS}})& \ge  \frac{1}{36K}-\frac{1}{2}\left(\frac{1}{18K}- \frac{1}{144K}\exp\left(-\frac{\nmin}{3K^3}\right)\right) \\
    & = \frac{1}{288K}\exp\left(-\frac{\nmin}{3K^3}\right).
\end{align*}
Setting $K = \ceil{\nmin^{1/3}}$ yields the following
\begin{align*}
    \mmR(\gP_{\mathsf{LS}})& \ge \frac{1}{288 \ceil{\nmin^{1/3}} }\exp\left(-\frac{\nmin}{3\ceil{\nmin^{1/3}}^3}\right) \\
    & \ge \frac{\exp\left(-\frac{\nmin}{3\ceil{\nmin^{1/3}}^3}\right)}{288  }\frac{\nmin^{1/3}}{\ceil{\nmin^{1/3}}} \frac{1}{\nmin^{1/3}}\\
    & \overset{(i)}{\ge} \frac{0.7\exp\left(-\frac{1}{3}\right)}{288  } \frac{1}{\nmin^{1/3}} \\
    & \ge \frac{1}{600}\frac{1}{\nmin^{1/3}},
\end{align*}
where $(i)$ follows since $\nmin^{1/3}/\ceil{\nmin^{1/3}} \ge 0.7$ for $\nmin \ge 1$.
\end{proof}

\subsection{Proof of Theorem~\ref{t:label_shift_upper_bounds}}\label{s:label_shift_upper_bound}

In this section, we derive an upper bound on the excess risk of the undersampled binning estimator $\Abucket$ (Eq.~\eqref{e:definition_undersampled_binning}) in the label shift setting. Recall that given a dataset $\S$ this estimator first calculates the undersampled dataset $\Sus$, where the number of points from the minority group ($\nmin$) is equal to the number of points from the majority group ($\nmin$), and the size of the dataset is $2\nmin$. Throughout this section, $(\pmaj,\pmin)$ shall be an arbitrary element of $\gP_{\mathsf{LS}}$.

To bound the excess risk of the undersampling algorithm, we will relate it to density estimation.

Recall that $n_{1,j}$ denotes the number of points  in $\Sus$ with label $+1$ that lie in $I_j$, and $n_{-1,j}$ is defined analogously.

Given a positive integer $K$, for $x \in I_j = [\frac{j-1}{K},\frac{j}{K}]$, by the definition of the undersampled binning estimator (Eq.~\eqref{e:definition_undersampled_binning})
\begin{align*}
    \AbucketS(x) &= \begin{cases}
        1 & \text{if }n_{1,j} > n_{-1,j}, \\
        -1 & \textrm{otherwise}.
    \end{cases}
\end{align*}
Recall that since we have undersampled, $\sum_{j}n_{1,j} = \sum_{j} n_{-1,j} = \nmin$. Therefore, define the simple histogram estimators for $\p_1(x) = \p(x \mid y=1)$ and $\p_{-1}(x) = \p(x \mid y=-1)$ as follows: for $x \in I_{j}$,
\begin{align*}
    \widehat{\p}^{\S}_{1}(x) := \frac{n_{1,j}}{K \nmin}\quad \text{and}\quad 
    \widehat{\p}^{\S}_{-1}(x) := \frac{n_{-1,j}}{K \nmin}.
\end{align*}
 With this histogram estimator in place, we may define an estimator for $\eta(x) := \ptest(y=1|x)$ as follows,
\begin{align*}
    \widehat{\eta}^{\S}(x) &:= \frac{\widehat{\p}^{\S}_{1}(x)}{\widehat{\p}^{\S}_{1}(x) + \widehat{\p}^{\S}_{-1}(x)}.
\end{align*}
Observe that, for $x\in I_j$
\begin{align*}
    \widehat{\eta}^{\S}(x) > 1/2 \iff n_{1,j}>n_{-1,j} \iff \AbucketS(x) = 1.
\end{align*}
Defining an estimator $\widehat{\eta}^\S$ for the $\ptest(y=1 \mid x)$ in this way will allow us to relate the excess risk of $\Abucket$ to the estimation error in $\widehat{\p}_1^{\S}$ and $\widehat{\p}_{-1}^\S$. 

Before proving the theorem we restate it here.
\lblshiftupper*
\begin{proof}
By the definition of the excess risk 
\begin{align*}
    \eR[\Abucket;(\pmaj,\pmin)] := \E_{\S \sim \pmaj^\nmaj \times \pmin^\nmin}\big[R(\AbucketS;\ptest))-R(f^{\star};\ptest)\big].
\end{align*}
By invoking \citep[][Theorem~1]{wassermannonpara} we may upper bound the excess risk given a draw of $\S$ by
\begin{align*}
    R(\AbucketS;\ptest))-R(f^{\star};\ptest) &\leq 2 \int \left|\widehat{\eta}^{\S}(x) - \eta(x)\right| \ptest(x) \; \mathrm{d}x .
    \end{align*}
    Continuing using the definition of $\widehat{\eta}^\S$ above and because $\eta = \p_1/(\p_1+\p_{-1})$ we have that,
    \begin{align*}
    &R(\AbucketS;\ptest))-R(f^{\star};\ptest)\\
    &\quad= 2 \int_{0}^{1} \left|\frac{\widehat{\p}^{\S}_{1}(x)}{\widehat{\p}^{\S}_{1}(x) + \widehat{\p}^{\S}_{-1}(x)} - \frac{\p_1(x)}{\p_1(x)+\p_{-1}(x)}\right| \left(\frac{\p_1(x)+\p_{-1}(x)}{2}\right) \; \mathrm{d}x \\
    &\quad= \int_{0}^{1} \left|\left(\frac{\p_1(x)+\p_{-1}(x)}{\widehat{\p}^{\S}_{1}(x) + \widehat{\p}^{\S}_{-1}(x)}\right)\widehat{\p}^{\S}_{1}(x) - \p_1(x)\right| \; \mathrm{d}x \\
    &\quad\overset{(i)}{\leq} \int_{0}^{1} \left|\widehat{\p}^{\S}_{1}(x) - \p_1(x)\right| \; \mathrm{d}x + \int_{0}^{1} \left|\frac{\p_1(x)+\p_{-1}(x)}{\widehat{\p}^{\S}_{1}(x) + \widehat{\p}^{\S}_{-1}(x)}-1\right| \widehat{\p}^{\S}_{1}(x) \; \mathrm{d}x \\
     &\quad = \int_{0}^{1}\left|\widehat{\p}^{\S}_{1}(x) - \p_1(x)\right| \; \mathrm{d}x + \int_{0}^{1}\left|\widehat{\p}^{\S}_{1}(x)+\widehat{\p}^{\S}_{-1}(x) - \p_{1}(x)- \p_{-1}(x)\right| \frac{\widehat{\p}^{\S}_{1}(x)}{\widehat{\p}^{\S}_{1}(x)+\widehat{\p}^{\S}_{-1}(x)} \; \mathrm{d}x \\
    &\quad \le 2\int_{0}^{1}\left|\widehat{\p}^{\S}_{1}(x) - \p_1(x)\right| \; \mathrm{d}x + \int_{0}^{1}\left|\widehat{\p}^{\S}_{-1}(x) - \p_{-1}(x)\right| \; \mathrm{d}x \\
    &\quad\overset{(ii)}{\leq} 2\sqrt{\int_{0}^{1}\left(\widehat{\p}^{\S}_{1}(x) - \p_1(x)\right)^2 \; \mathrm{d}x} + \sqrt{\int_{0}^{1}\left(\widehat{\p}^{\S}_{-1}(x) - \p_{-1}(x)\right)^2 \; \mathrm{d}x},
\end{align*}
where $(i)$ follows by the triangle inequality, $(ii)$ is by the Cauchy--Schwarz inequality.

Taking expectation over the samples $\S$ and by invoking Jensen's inequality we find that,
\begin{align*}
    &\eR(\gA^{\S};(\pmaj,\pmin)) \\&\qquad= \E_{\S}
    \left[R(\AbucketS;\ptest))-R(f^{\star};\ptest)\right] \\
    &\qquad\leq 2 \sqrt{\E_{\S}\left[\int\left(\widehat{\p}^{\S}_{1}(x) - \p_1(x)\right)^2 \; \mathrm{d}x\right]} + \sqrt{\E_{\S}\left[\int\left(\widehat{\p}^{\S}_{-1}(x) - \p_{-1}(x)\right)^2 \; \mathrm{d}x\right]}.
\end{align*}
We note that $\widehat{\p}^{\S}_{j}$ only depends on $\nmin$ i.i.d. draws from class $j$. Thus by \citep[][Theorem~1.7]{freedman1981histogram}, if $K =c\ceil{\nmin}^{1/3}$ then \begin{align*}
    \E_{\S}\left[\int\left(\widehat{\p}^{\S}_j(x) - \p_j(x)\right)^2 \; \mathrm{d}x\right] \leq \frac{C}{\nmin^{2/3}}.
\end{align*}
Plugging this into the previous inequality yields the desired result.
\end{proof}


\section{Proof in the group-covariate shift setting}\label{s:proof_of_group_shift_theorem}

Throughout this section we operate in the group-covariate shift setting (Section~\ref{s:group_shift_description}).

We will proceed similarly to Section~\ref{s:proof_of_label_shift_theorem}. We shall construct a family of class-conditional distributions such that it will be necessary for adequate samples in each sub-interval of $[0,1]$ to be able to learn the maximally likely label in that sub-interval. On the other hand, we will construct the group-covariate distributions to be separated from one another. As a consequence, sub-intervals with high probability mass under the minority group distribution will have low probability mass under the majority group distribution. Hence, these sub-intervals will not have enough training sample points for any classifier to be able to learn the maximally likely label and as a result shall suffer high excess risk.

First in Appendix~\ref{s:group_shift_lower_bound}, we prove Theorem~\ref{t:group_shift_lower_bound}, the minimax lower bound through a
 sequence of lemmas. Second in Appendix~\ref{s:group_shift_upper_bound}, we prove Theorem~\ref{t:group_shift_upper_bound} that upper bound on the excess risk of the undersampled binning estimator with $\ceil{\nmin}^{1/3}$ bins.

\subsection{Proof of Theorem~\ref{t:group_shift_lower_bound}} \label{s:group_shift_lower_bound}
In this section, we provide a proof of the minimax lower bound in the group shift setting. 

We construct the ``hard'' set of distributions as follows. Let the index set be $\cV = \{-1,1\}^K$.  For every $v \in \cV$ define a distribution as follows: for $x \in I_j = [\frac{j-1}{K},\frac{j}{K}]$,
\begin{align*}
    \pv(y=1 \mid x) & := \frac{1}{2}\left[1+v_j \phi\left(x-\frac{j+1/2}{K}\right)\right],
\end{align*}
where $\phi$ is defined in Eq.~\ref{e:phi_def}.
Given a $\tau \in [0,1]$ we also construct the group distributions as follows:
\begin{align*}
    \pa(x) = \begin{cases} 2-\tau &\qquad \text{if } x \in [0,0.5)\\
    \tau &\qquad \text{if } x \in [0.5,1],
    \end{cases}
\end{align*}
and let 
\begin{align*}
    \pb(x) = 2-\pa(x).
\end{align*}
We can verify that 
\begin{align*}
   \overlap(\pa,\pb) =  1-\mathrm{TV}(\pa,\pb)  = 1-\frac{1}{2}\int_{x=0}^1 |\pa(x)-\pb(x)|\; \mathrm{d}x =\tau.
\end{align*}
We continue to define
\begin{align*}
    \pmajv{v}(x,y) &= \pv(y \mid x) \pa(x) \\
    \pminv{v}(x,y) &= \pv(y \mid x) \pb(x), 
\end{align*}
and 
\begin{align*}
    \ptestv{v}(x,y) =    \pv(y \mid x) \left(\frac{\pa(x)+\pb(x)}{2}\right).
\end{align*}
Observe that $(\pa(x)+\pb(x))/2 = 1$, the uniform distribution over $[0,1]$.

Recall that as described in Section~\ref{s:common_lower_bound_framework}, $V$ shall be a uniform random variable over $\gV$ and $S\mid V \sim \pmajv{v}^\nmaj \times \pminv{v}^\nmin$. We shall let $\q$ denote the joint distribution of $(V,S)$ and let $\qs$ denote the marginal over $S$.

With this construction in place, we present the following lemma that lower bounds the minimax excess risk by a sum of $\exp(-\mathrm{KL}(\q(S \mid v_j= 1)\|\q(S \mid v_j= -1))$ over the intervals. Intuitively,  $\mathrm{KL}(\q(S \mid v_j= 1)\|\q(S \mid v_j= -1)$ is a measure of how difficult it is to identify whether $v_j = 1$ or $v_j = -1$ from the samples. 
\begin{lemma} \label{l:group_shift_expected_minimax_risk}
For any positive integers $K,\nmaj,\nmin$ and $\tau \in [0,1]$, the minimax excess risk is lower bounded as follows:
\begin{align*}
    \mmR(\gP_{\mathsf{GS}}(\tau)) & = \inf_{\cA}\sup_{(\pmaj,\pmin)\in\gP_{\mathsf{GS}}(\tau)} \E_{S \sim \pmaj^\nmaj \times \pmin^\nmin}\left[R(\cA^S; \ptest)-R(f^{\star}; \ptest) \right]\\ &\ge  \frac{1}{32K^2}\sum_{j=1}^K \exp(-\mathrm{KL}(\q(S\mid v_{j}=1) \| \q(S\mid v_{j}=-1))).
\end{align*}
\end{lemma}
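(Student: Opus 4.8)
The plan is to feed the hard family just constructed into the general reduction of Lemma~\ref{l:m_lb} — exactly as in the label-shift argument (Lemmas~\ref{l:int_abs_phi} onward) — evaluate the two resulting quantities $\riskv$ and $\bayesv$ by exploiting that every test marginal in this construction is uniform on $[0,1]$, and then convert the per-coordinate residuals into KL divergences via Le Cam's two-point identity and the Bretagnolle--Huber inequality.

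First I would apply Lemma~\ref{l:m_lb} to obtain $\mmR(\gP_{\mathsf{GS}}(\tau)) \ge \riskv - \bayesv$. For $\bayesv$: since each $\ptestv{v}$ has uniform $x$-marginal, the Bayes error under a single $v$ decomposes over the $K$ intervals, and on $I_j$ the pointwise minimum of $\pv(y=1\mid x)$ and $\pv(y=-1\mid x)$ is $\tfrac12\bigl(1-\bigl|\phi\bigl(x-\tfrac{j+1/2}{K}\bigr)\bigr|\bigr)$ because $|v_j|=1$; integrating and invoking Lemma~\ref{l:int_abs_phi} gives $R(f^\star(\ptestv{v});\ptestv{v}) = \tfrac12 - \tfrac{1}{16K}$ for every $v$, hence $\bayesv = \tfrac12 - \tfrac{1}{16K}$.

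Next I would compute $\riskv$. The only feature of the posterior I need is its marginal on coordinate $j$: since $\pv(y=1\mid x)$ is affine in $v_j$ on $I_j$ and depends on no other coordinate there, the posterior-averaged test distribution $\sum_{v\in\cV}\q(v\mid S)\ptestv{v}$ has uniform $x$-marginal and conditional $\Pr(y=1\mid x) = \tfrac12\bigl(1 + \mu_j(S)\,\phi\bigl(x-\tfrac{j+1/2}{K}\bigr)\bigr)$ for $x\in I_j$, where $\mu_j(S) := \sum_{v\in\cV}\q(v\mid S)v_j = \E[v_j\mid S] \in [-1,1]$. Taking the pointwise minimum, decomposing over intervals, and using Lemma~\ref{l:int_abs_phi} again shows the Bayes error of this mixture equals $\tfrac12 - \tfrac{1}{16K^2}\sum_{j=1}^{K}|\mu_j(S)|$, so $\riskv = \tfrac12 - \tfrac{1}{16K^2}\sum_{j}\E_S[\,|\mu_j(S)|\,]$. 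Subtracting the two expressions and writing $\tfrac{1}{16K} = \tfrac{1}{16K^2}\sum_j 1$ yields $\mmR(\gP_{\mathsf{GS}}(\tau)) \ge \tfrac{1}{16K^2}\sum_{j}\E_S\bigl[1-|\mu_j(S)|\bigr]$.

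Finally, writing $p_j(S) := \q(v_j=1\mid S)$ we have $1-|\mu_j(S)| = 1-|2p_j(S)-1| = 2\min\bigl(p_j(S),1-p_j(S)\bigr)$; because $v_j$ has a uniform prior, Le Cam's two-point identity \citep[][Eq.~15.13]{wainwright2019high} gives $\E_S[\,2\min(p_j,1-p_j)\,] = 1 - \mathrm{TV}\bigl(\q(S\mid v_j=1),\q(S\mid v_j=-1)\bigr)$, and the Bretagnolle--Huber inequality \citep[][Corollary~4]{canonne2022short} lower bounds this by $\tfrac12\exp\bigl(-\mathrm{KL}(\q(S\mid v_j=1)\,\|\,\q(S\mid v_j=-1))\bigr)$. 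Summing over $j$ gives the claimed bound with the constant $\tfrac{1}{32K^2}$. I expect the main obstacle to be the bookkeeping that legitimizes the closed form $\Pr(y=1\mid x) = \tfrac12(1+\mu_j(S)\phi(\cdots))$ for the posterior-averaged test distribution on each interval, together with the Le Cam identity when $S$ also carries nuisance information about the other coordinates $v_k$, $k\neq j$; the interval-wise integrals and the two concluding inequalities are routine.
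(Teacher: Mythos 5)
Your proposal is correct and follows essentially the same route as the paper: reduce via Lemma~\ref{l:m_lb}, compute $\bayesv$ exactly using the uniform test marginal and Lemma~\ref{l:int_abs_phi}, express the per-interval residual of $\riskv$ through the posterior of $v_j$ given $S$ (your $1-|\mu_j(S)|$ is exactly the paper's $1-|\q(v_j=1\mid S)-\q(v_j=-1\mid S)|$, converted to $1-\mathrm{TV}(\q(S\mid v_j=1),\q(S\mid v_j=-1))$ by Bayes' rule / Le Cam), and finish with Bretagnolle--Huber to get the $\frac{1}{32K^2}\sum_j\exp(-\mathrm{KL}_j)$ bound. The step you flag as a possible obstacle is not one: the two-point identity holds with nuisance coordinates present because $\q(S\mid v_j=\pm1)$ already marginalizes over them.
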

\begin{proof}
By invoking Lemma~\ref{l:m_lb}, we know that the minimax excess risk is lower bounded by
\begin{align*}
    &\mmR(\gP_{\mathsf{GS}}(\tau)) \\&\qquad \ge 
     \underbrace{\E_{S \sim \qs}[\inf_{h}\Pr_{(x,y) \sim \sum_{v \in \gV} \q(v\mid S)\ptestv{v}}(h(x) \neq y)]}_{=\riskv} - \underbrace{\E_{V}[R(f^{\star}(\p_{V,\mathsf{test}});\p_{V,\mathsf{test}})]}_{=\bayesv},
\end{align*}
where $V$ is a uniform random variable over the set $\gV$, $S\mid V=v$ is a draw from $\pmajv{v}^{\nmaj}\times \pminv{v}^{\nmin}$, and $\q$ denotes the joint distribution over $(V,S)$.

We shall lower bound this minimax risk in parts. First, we shall establish a lower bound on $\riskv$, and then an upper bound on the Bayes risk $\bayesv$. 

\paragraph{Lower bound on $\riskv$.}
Unpacking $\riskv$ using its definition we get that,
\begin{align*}
\riskv &= \E_{S \sim \qs}[\inf_{h}\Pr_{(x,y) \sim \sum_{v \in \gV} \q(v\mid S)\ptestv{v}}(h(x) \neq y)] \\
    &= \E_{S \sim \qs}\left[\inf_{h}\int_0^1 \ptest(x) \Pr_{y \sim \sum_{v \in \gV} \q(v\mid S)\pv(\cdot \mid x)}[h(x) \neq y] \; \mathrm{d}x\right] \\
    &\overset{(i)}{=} \E_{S \sim \qs}\left[\int_0^1 \ptest(x) \min\left\{\sum_{v\in \gV} \q(v \mid S)\pv(1\mid x),\sum_{v\in \gV} \q(v \mid S)\pv(-1\mid x)\right\} \; \mathrm{d}x\right] \\
    &\overset{(ii)}{=} \frac{1}{2} - \E_{S \sim \qs}\left[\int_0^1 \ptest(x) \left| \frac{1}{2} - \sum_{v \in \gV} \q(v\mid S)\pv(1 \mid x)\right| \; \mathrm{d}x\right] \\
    &\overset{(iii)}{=} \frac{1}{2} - \int_0^1 \ptest(x)\E_{S \sim \qs}\left[ \left| \frac{1}{2} - \sum_{v \in \gV} \q(v\mid S)\pv(1 \mid x)\right| \right] \; \mathrm{d}x , \numberthis \label{eq:group_shift_lower_bound_midway}
\end{align*}
where $(i)$ follows by taking $h$ to be the pointwise minimizer over $x$, $(ii)$ follows since $\pv(-1 \mid x) = 1-\pv(1 \mid x)$ and $\min\{s,1-s\} = (1-|1-2s|)/2$ for all $s \in [0,1]$, and $(iii)$ follows by Fubini's theorem which allows us to switch the order of the integrals. 

If $x \in I_{j} = [\frac{j-1}{K},\frac{j}{K}]$ for some $j\in \{1,\ldots,K\}$ we let $j_x$ denote the value of this index $j$. With this notation in place let us continue to upper bound integrand in the second term in the RHS above as follows:
\begin{align*}
  & \E_{S \sim \qs}\left[ \left| \frac{1}{2} - \sum_{v \in \gV} \q(v\mid S)\pv(1 \mid x)\right| \right] \\
    & \overset{(i)}{=} \E_{S \sim \qs}\left[\left|\phi\left(x-\frac{j_x+1/2}{K}\right)\right| \left| \q(v_{j_x}=1 \mid S) - \q(v_{j_x}=-1 \mid S)\right|\right] \\
    &= \left|\phi\left(x-\frac{j_x+1/2}{K}\right)\right|\E_{S \sim \qs}\left[ \left| \q(v_{j_x}=1 \mid S) - \q(v_{j_x}=-1 \mid S)\right|\right] \\
    &\overset{(ii)}{=} \left|\phi\left(x-\frac{j_x+1/2}{K}\right)\right| \E_{S \sim \qs}\left[\left| \frac{\q(S \mid  v_{j_x} = 1)\q_V(v_{j_x}=1)}{\qs(S)} - \frac{\q(S \mid  v_{j_x} = -1)\q_V(v_{j_x}=-1)}{\qs(S)}\right|\right] \\
    & \overset{(iii)}{=} \frac{1}{2}\left|\phi\left(x-\frac{j_x+1/2}{K}\right)\right| \mathrm{TV}(\q(S\mid v_{j_x}=1), \q(S\mid v_{j_x}=-1)), \label{e:group_shift_lower_bound_midway_2} \numberthis
\end{align*}
where $(i)$ follows since $\pv(1\mid x) = (1+v_{j_x} \phi(x - (j_x+1/2)/K))/2$ and by marginalizing $\q(v\mid S)$ over the indices $j\neq j_x$, $(ii)$ follows by using Bayes' rule and $(iii)$ follows since the total-variation distance is half the $\ell_1$ distance. Now by the Bretagnolle–Huber inequality~\citep[see][Corollary~4]{canonne2022short} we get that,
\begin{align}
   \nonumber & \mathrm{TV}(\q(S\mid v_{j_x}=1), \q(S\mid v_{j_x}=-1))\\&\hspace{1.5in} \leq 1 - \frac{\exp(-\mathrm{KL}(\q(S\mid v_{j_x}=1) \| \q(S\mid v_{j_x}=-1)))}{2}. \label{e:group_shift_tv_by_KL} 
\end{align}
Combining Eqs.~\eqref{eq:group_shift_lower_bound_midway}-\eqref{e:group_shift_tv_by_KL} we get that
\begin{align*}
    &\riskv \\ &\ge \frac{1}{2} - \frac{1}{2}\int_0^1 \ptest(x)\left|\phi\left(x-\frac{j_x+1/2}{K}\right)\right| \;\mathrm{d}x \\ & +\frac{1}{4} \int_0^1 \ptest(x)\left|\phi\left(x-\frac{j_x+1/2}{K}\right)\right|\exp(-\mathrm{KL}(\q(S\mid v_{j_x}=1) \| \q(S\mid v_{j_x}=-1))) \; \mathrm{d}x. \numberthis \label{e:group_shift_lower_bound_riskv}
\end{align*}

\paragraph{Upper bound on $\bayesv$:} The Bayes error is  
\begin{align*}
    \bayesv & = \E_{V}[R(f^{\star}(\p_V);\p_V)] \\
    & = \E_{V}\left[\inf_{f}\E_{(x,y) \sim \ptestv{v}}\mathbf{1}(f(x) \neq y)\right] \\
    & = \E_{V}\left[\inf_{f}\int_{x = 0}^1\sum_{y \in \{-1,1\}} \ptest(x)\ptestv{V}(y \mid x)\mathbf{1}(f(x)=-y)\right] \\
    & = \E_{V}\left[\int_{x = 0}^1 \ptest(x)\min_{y \in \{-1,1\}}\ptestv{V}(y \mid x)\right] \\
    & \overset{(i)}{=} \E_V\left[\frac{1}{2}\left(1-\int_{x=0}^1 \ptest(x)|\ptestv{V}(1 \mid x)-\ptestv{V}(-1\mid x)|\; \mathrm{d}x\right)\right] \\
    & \overset{(ii)}{=} \E_V\left[\frac{1}{2}\left(1-\int_{x=0}^1 \ptest(x)\left|\phi\left(x-\frac{j_x + 1/2}{K}\right)\right|\; \mathrm{d}x\right)\right] \\
    & =\frac{1}{2}-\frac{1}{2}\int_{x=0}^1 \ptest(x)\left|\phi\left(x-\frac{j_x + 1/2}{K}\right)\right|\; \mathrm{d}x,\numberthis \label{e:group_shift_bayes_error_bound}
\end{align*}
where $(i)$ follows since $\pv(1\mid x) = 1-\pv(-1 \mid x)$ and $\min\{s,1-s\} = (1-|1-2s|)/2$ for all $s \in [0,1]$, and $(ii)$ follows by our construction of $\pv$ above along with the fact that $\pv(1\mid x) = 1-\pv(-1 \mid x)$. 

\paragraph{Putting things together:} Combining Eqs.~\eqref{e:group_shift_lower_bound_riskv} and \eqref{e:group_shift_bayes_error_bound} allows us to conclude that 
\begin{align*}
    &\mmR(\gP_{\mathsf{GS}}(\tau))\\&\quad  \ge \frac{1}{4} \int_0^1 \ptest(x)\left|\phi\left(x-\frac{j_x+1/2}{K}\right)\right|\exp(-\mathrm{KL}(\q(S\mid v_{j_x}=1) \| \q(S\mid v_{j_x}=-1))) \; \mathrm{d}x\\
     &\quad= \frac{1}{4}\sum_{j=1}^K \int_{\frac{j-1}{K}}^{\frac{j}{K}} \ptest(x)\left|\phi\left(x-\frac{j+1/2}{K}\right)\right|\exp(-\mathrm{KL}(\q(S\mid v_{j}=1) \| \q(S\mid v_{j}=-1))) \; \mathrm{d}x\\
     &\quad= \frac{1}{4}\sum_{j=1}^K \exp(-\mathrm{KL}(\q(S\mid v_{j}=1) \| \q(S\mid v_{j}=-1))) \left[\int_{\frac{j-1}{K}}^{\frac{j}{K}} \ptest(x)\left|\phi\left(x-\frac{j+1/2}{K}\right)\right| \; \mathrm{d}x\right]\\
     &\quad\overset{(i)}{=} \frac{1}{32K^2}\sum_{j=1}^K \exp(-\mathrm{KL}(\q(S\mid v_{j}=1) \| \q(S\mid v_{j}=-1))),
\end{align*}
where $(i)$ follows by using Lemma~\ref{l:int_abs_phi} along with the fact that $\ptest(x) = 1$ in our construction to show that the integral in the square brackets is equal to $1/8K^2$. 
This proves the result.
\end{proof}
The next lemma upper bounds the KL divergence between $\q(S\mid v_j=1)$ and $\q(S\mid v_j=-1)$ for each $j\in \{1,\ldots,K\}$. It shows that the KL divergence between these two posteriors is larger when the expected number of samples in that bin is larger.
\begin{lemma}
\label{l:group_shift_KL_upper} Suppose that $v$ is drawn uniformly from the set $\{-1,1\}^K$, and that $S \mid v$ is drawn from $\pmajv{v}^{\nmaj}\times \pminv{v}^{\nmin}$. Then for any $j \in \{1,\ldots,K/2\}$ and any $\tau \in [0,1]$,
\begin{align*}
    \mathrm{KL}(\q(S\mid v_{j}=1) \| \q(S\mid v_{j}=-1)) \le \frac{\nmaj(2-\tau)+\nmin \tau}{3K^3},
\end{align*}
and for any $j \in \{K/2+1,\ldots,K\}$
\begin{align*}
    \mathrm{KL}(\q(S\mid v_{j}=1) \| \q(S\mid v_{j}=-1)) \le \frac{\nmaj\tau+\nmin(2-\tau)}{3K^3}.
\end{align*}
\end{lemma}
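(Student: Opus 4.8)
The plan is to reduce the KL divergence to a computation involving only the samples that fall into bin $I_j$, by exploiting the product structure of $\q(S\mid v)$ across bins. Write $S=\{(x_i,y_i)\}_{i=1}^n$ with the first $\nmaj$ samples from the majority group and the last $\nmin$ from the minority group, so that $\q(S\mid v)=\prod_{i\le\nmaj}\pa(x_i)\pv(y_i\mid x_i)\cdot\prod_{i>\nmaj}\pb(x_i)\pv(y_i\mid x_i)$. First I would observe that the feature factors $\pa(x_i),\pb(x_i)$ do not depend on $v$, and that for $x_i\in I_\ell$ we have $\pv(y_i\mid x_i)=\tfrac12\big(1+v_\ell\,\phi(x_i-\tfrac{\ell+1/2}{K})\big)$, which depends on $v$ only through the coordinate $v_\ell$. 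Grouping the label factors by bin and averaging over $v_{-j}$ (whose coordinates are independent and uniform), the bins $\ell\ne j$ contribute the same $v_{-j}$-averaged factor to both $\q(S\mid v_j=1)$ and $\q(S\mid v_j=-1)$, so the likelihood ratio collapses to
\[
\frac{d\q(\cdot\mid v_j=1)}{d\q(\cdot\mid v_j=-1)}(S)=\prod_{i\,:\,x_i\in I_j}\frac{\mathsf{P}^{+}(y_i\mid x_i)}{\mathsf{P}^{-}(y_i\mid x_i)},\qquad \mathsf{P}^{\pm}(1\mid x):=\tfrac12\big(1\pm\phi(x-\tfrac{j+1/2}{K})\big),
\]
i.e.\ only samples landing in $I_j$ enter, and $\mathsf{P}^{\pm}$ are the label conditionals on $I_j$ for $v_j=\pm1$.

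Next I would take logarithms and expectations under $\q(\cdot\mid v_j=1)$, so that $\mathrm{KL}(\q(S\mid v_j=1)\,\|\,\q(S\mid v_j=-1))=\E_{S\sim\q(\cdot\mid v_j=1)}\big[\sum_{i:x_i\in I_j}\log\tfrac{\mathsf{P}^{+}(y_i\mid x_i)}{\mathsf{P}^{-}(y_i\mid x_i)}\big]$. Since the $n$ samples are independent and their features are independent of $v$, each majority sample lands in $I_j$ with probability $\int_{I_j}\pa$ and each minority sample with probability $\int_{I_j}\pb$; and, because $\pa,\pb$ are constant on each bin (so, for $K$ even, $I_j$ lies entirely in $[0,\tfrac12)$ or in $[\tfrac12,1]$), conditioned on $x_i\in I_j$ the feature is uniform on $I_j$ while the label is drawn from $\mathsf{P}^{+}(\cdot\mid x_i)$. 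The per-sample contribution is thus identical for majority and minority samples and equals $\E_{x\sim\mathrm{Unif}(I_j)}\big[\mathrm{KL}(\mathsf{P}^{+}(\cdot\mid x)\,\|\,\mathsf{P}^{-}(\cdot\mid x))\big]$, whence
\[
\mathrm{KL}\big(\q(S\mid v_j=1)\,\big\|\,\q(S\mid v_j=-1)\big)=\Big(\nmaj\!\int_{I_j}\!\pa+\nmin\!\int_{I_j}\!\pb\Big)\cdot\E_{x\sim\mathrm{Unif}(I_j)}\big[\mathrm{KL}(\mathsf{P}^{+}(\cdot\mid x)\,\|\,\mathsf{P}^{-}(\cdot\mid x))\big].
\]
By the definition of $\pa,\pb$, the prefactor equals $(\nmaj(2-\tau)+\nmin\tau)/K$ when $I_j\subseteq[0,\tfrac12)$ (i.e.\ $j\le K/2$) and $(\nmaj\tau+\nmin(2-\tau))/K$ when $I_j\subseteq[\tfrac12,1]$.

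Finally I would evaluate the per-sample factor. Because $\mathsf{P}^{-}(1\mid x)=1-\mathsf{P}^{+}(1\mid x)$, the Bernoulli KL simplifies to $\mathrm{KL}(\mathsf{P}^{+}(\cdot\mid x)\,\|\,\mathsf{P}^{-}(\cdot\mid x))=\phi(x-\tfrac{j+1/2}{K})\log\frac{1+\phi(x-\frac{j+1/2}{K})}{1-\phi(x-\frac{j+1/2}{K})}$; since $\phi$ is odd about the bin centre, this integrand differs from $\log\frac{1+\phi}{1-\phi}(1+\phi)$ by an odd term with zero integral over $I_j$, so
\[
\E_{x\sim\mathrm{Unif}(I_j)}\big[\mathrm{KL}(\mathsf{P}^{+}(\cdot\mid x)\,\|\,\mathsf{P}^{-}(\cdot\mid x))\big]=K\!\int_{I_j}\!\log\tfrac{1+\phi(x-\frac{j+1/2}{K})}{1-\phi(x-\frac{j+1/2}{K})}\Big(1+\phi\big(x-\tfrac{j+1/2}{K}\big)\Big)\,\mathrm{d}x\;\le\;\frac{1}{3K^{2}},
\]
the inequality being Lemma~\ref{l:kl_phi}. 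Multiplying the two prefactors by $1/(3K^2)$ gives $\mathrm{KL}\le(\nmaj(2-\tau)+\nmin\tau)/(3K^3)$ for $j\le K/2$ and $\mathrm{KL}\le(\nmaj\tau+\nmin(2-\tau))/(3K^3)$ for $j>K/2$, as claimed. The one delicate step is the factorization in the first paragraph: one must carefully justify that conditioning on the single coordinate $v_j$ alters $\q(S\mid\cdot)$ only through the bin-$j$ label factor — separating the $v$-independent feature marginals from the per-bin label products and checking that the average over $v_{-j}$ is a common multiplicative constant — after which the remainder is bookkeeping plus the already-proved Lemma~\ref{l:kl_phi}.
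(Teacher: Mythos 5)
Your proposal is correct and follows essentially the same route as the paper: reduce the KL to the samples landing in bin $I_j$ by exploiting that conditioning on $v_j$ only affects the bin-$j$ label factors (the paper phrases this as $\q(S\mid v_1)=\q(S_1\mid v_1)\q(\bar S_1)$), then bound the per-sample Bernoulli KL on $I_j$ by $1/(3K^2)$ via Lemma~\ref{l:kl_phi} and multiply by the expected bin counts $\nmaj\pa(I_j)+\nmin\pb(I_j)$. The only cosmetic difference is in the middle: you expand the log-likelihood ratio as a sum over samples and use linearity of expectation directly, whereas the paper conditions on the counts $n_{j,a},n_{j,b}$ and invokes joint convexity of KL plus the chain rule — both yield the same bound.
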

\begin{proof}Let us consider the case when $j = 1$. The bound for all other $j \in \{2,\ldots,K\}$ shall follow analogously. 

Given samples $S$, let $S= (S_1,\bar{S}_1)$ be a partition where $S_1$ are the samples that fall in the interval $I_1$, and $\bar{S}_1$ be the other samples. Similarly, given a vector $v \in \{-1,1\}$, let $v = (v_1,\bar{v}_1)$, where $v_1$ is the first component and $\bar{v}_1$ denotes the other components ($2,\ldots,K$) of $v$.

First, we will show that
\begin{align*}
    \q(S \mid v_1 ) & = \q(S_1 \mid v_1) \q(\bar{S}_1).
\end{align*}
To see this, observe that 
\begin{align*}
    \q(S \mid v_1) & = \q((S_1,\bar{S}_1) \mid v_1)  = \q(S_1 \mid v_1) \q(\bar{S}_1 \mid v_1, S_1).
\end{align*}
Further, if $v$ is chosen uniformly over the hypercube $\{-1,1\}^K$, then
\begin{align*}
    \q(\bar{S}_1 \mid v_1, S_1) & =  \sum_{\bar{v}_1} \q(\bar{S}_1,\bar{v}_1 \mid v_1, S_1) \\
    & =  \sum_{\bar{v}_1} \q(\bar{S}_1\mid v_1,\bar{v}_1 , S_1)  \q(\bar{v}_1\mid v_1 , S_1) \\
    & \overset{(i)}{=}  \sum_{\bar{v}_1} \q(\bar{S}_1\mid v_1,\bar{v}_1 , S_1)  \q(\bar{v}_1) \\
    & \overset{(ii)}{=}  \sum_{\bar{v}_1} \q(\bar{S}_1\mid v_1,\bar{v}_1)  \q(\bar{v}_1) \\
    & \overset{(iii)}{=}  \sum_{\bar{v}_1} \q(\bar{S}_1\mid \bar{v}_1)  \q(\bar{v}_1) \\
    & =  \q(\bar{S_1}),
\end{align*}
where $(i)$ follows since by Bayes' rule
\begin{align*}
    \q(\bar{v}_1 \mid v_1, S_1) & = \frac{\q(\bar{v}_1 \mid v_1)\q(S_1 \mid v_1,\bar{v}_1)}{\q(S_1 \mid v_1)}\\
    & = \frac{\q(\bar{v}_1 )\q(S_1 \mid v_1,\bar{v}_1)}{\q(S_1 \mid v_1)} &&\mbox{(since $\bar{v}_1$ is independent of $v_1$)}\\
    & = \frac{\q(\bar{v}_1 )\q(S_1 \mid v_1)}{\q(S_1 \mid v_1)} = \q(\bar{v}_1) &&\mbox{(the samples in $S_1$ depend only on $v_1$)}.
\end{align*}
Inequality~$(ii)$ follows since the samples are drawn independently given $v = (v_1,\bar{v}_1)$. Finally, $(iii)$ follows since $\bar{S}_1$ (the samples that lie outside the interval $I_1$) only depend on $\bar{v}_1$ since the marginal distribution of $x$ is independent of $v$ and the distribution of $y \mid x$ depends only on the value of $v$ corresponding to the interval in which $x$ lies. 

Thus since, $\q(S \mid v_1) = \q(S_1 \mid v_1)\q(\bar{S}_1)$ we have that
\begin{align}\label{e:group_shift_KL_by_first_interval}
    \mathrm{KL}(\q(S\mid v_{1}=1) \| \q(S\mid v_{1}=-1)) & = \mathrm{KL}(\q(S_1\mid v_{1}=1) \| \q(S_1\mid v_{1}=-1)).
\end{align}

To bound this KL divergence, let us condition of the number of samples in $S_1$ from group $a$, (the majority group) $n_{1,a}$ and the number of samples from group $b$ (the minority group), $n_{1,b}$. Now since $n_{1,a}$ and $n_{1,b}$ are independent of $v_1$ (which only affects the labels) we have that,
\begin{align*}
    \q(S_1 \mid v_1) & = \sum_{n_{1,a},n_{1,b}} \q(n_{1,a},n_{1,b} \mid v_1) \q(S_1 \mid v_1, n_{1,a},n_{1,b}) \\&= \sum_{n_{1,a},n_{1,b}} \q(n_{1,a},n_{1,b} ) \q(S_1 \mid v_1, n_{1,a},n_{1,b})\\
    & = \E_{n_{1,a},n_{1,b}} \left[\q(S_1 \mid v_1, n_{1,a},n_{1,b})\right].
\end{align*}
Therefore, by the joint convexity of the KL-divergence and by Jensen's inequality we have that,
\begin{align*}
    &\mathrm{KL}(\q(S_1\mid v_{1}=1) \| \q(S_1\mid v_{1}=-1)) \\& \qquad \le \mathbb{E}_{n_{1,a},n_{1,b}}\left[\mathrm{KL}(\q(S_1\mid v_{1}=1,n_{1,a},n_{1,b}) \| \q(S_1\mid v_{1}=-1, n_{1,a},n_{1,b})) \right].\numberthis\label{e:group_shift_kl_by_conditioning}
\end{align*}
Now conditioned on $v_1, n_{1,a}$ and $n_{1,b}$, samples in $S_1$ are composed of 2 groups of samples $(S_{1,a},S_{1,b})$. The samples in each group $(S_{1,a},S_{1,b})$ are drawn independently from the distributions $\pa(x \mid x\in I_1) \pv(y \mid x)$ and $\pb(x \mid x\in I_1) \pv(y \mid x)$ respectively. Therefore,

\begin{align*}
    &\mathrm{KL}(\q(S_1\mid v_{1}=1,n_{1,a},n_{1,b}) \| \q(S_1\mid v_{1}=-1, n_{1,a},n_{1,b})) \\ & \overset{(i)}{=} n_{1,a} \mathrm{KL}(\pa(x \mid x\in I_1) \p_{v_1=1}(y \mid x) \| \pa(x \mid x\in I_1) \p_{v_1=-1}(y \mid x))\\&\hspace{1in}+n_{1,b} \mathrm{KL}(\pb(x \mid x\in I_1) \p_{v_1=1}(y \mid x) \| \pb(x \mid x\in I_1) \p_{v_1=-1}(y \mid x))\\
    & \overset{(ii)}{=} (n_{1,a}+n_{1,b}) \E_{x \sim \mathsf{Unif}(I_1)}\left[\mathrm{KL}( \p_{v_1=1}(y \mid x) \| \p_{v_1=-1}(y \mid x))\right] \\
    & \overset{(iii)}{=} \frac{n_{1,a}+n_{1,b}}{2} \E_{x \sim \mathsf{Unif}(I_1)}\left[\sum_{y \in \{-1,1\}}\left(1+y\phi\left(x - \frac{1}{2K}\right)\right)\log\left(\frac{\left(1+y\phi\left(x - \frac{1}{2K}\right)\right)}{\left(1+y\phi\left(x - \frac{1}{2K}\right)\right)}\right)\right] \\
      & = \frac{n_{1,a}+n_{1,b}}{2}\sum_{y \in \{-1,1\}}\E_{x \sim \mathsf{Unif}(I_1)}\left[\left(1+y\phi\left(x - \frac{1}{2K}\right)\right)\log\left(\frac{\left(1+y\phi\left(x - \frac{1}{2K}\right)\right)}{\left(1+y\phi\left(x - \frac{1}{2K}\right)\right)}\right)\right] \\
      & = \frac{n_{1,a}+n_{1,b}}{2K}\sum_{y \in \{-1,1\}}\int_{x=0}^{\frac{1}{K}}\left[\left(1+y\phi\left(x - \frac{1}{2K}\right)\right)\log\left(\frac{\left(1+y\phi\left(x - \frac{1}{2K}\right)\right)}{\left(1+y\phi\left(x - \frac{1}{2K}\right)\right)}\right)\right] \; \mathrm{d}x\\
      & \overset{(iv)}{\le} \frac{n_{1,a}+n_{1,b}}{3K^2}, \label{e:kl_in_terms_of_n_1}\numberthis
\end{align*}
where in $(i)$ we let $\p_{v_1}$ denote the conditional distribution of $y$ for $x \in I_1$ given $v_1$, $(ii)$ follows since both $\pa$ and $\pb$ are constant in the interval, $(iii)$ follows by our construction of $\pv$ above, and finally  $(iv)$ follows by invoking Lemma~\ref{l:kl_phi} that ensures that the integral is bounded by $1/3K^2$. 

Using this bound in Eq.~\eqref{e:group_shift_kl_by_conditioning}, along with Eq.~\eqref{e:group_shift_KL_by_first_interval} we get that
\begin{align*}
    \mathrm{KL}(\q(S\mid v_{1}=1) \| \q(S\mid v_{1}=-1)) & \le \frac{\E\left[n_{1,a}+n_{2,b}\right]}{3K^2}.
\end{align*}
Now there are $\nmaj$ samples from group $a$ in $S$ and $\nmin$ samples from group $b$. Therefore,
\begin{align*}
    \E\left[n_{1,a}\right] = \nmaj \pa(x \in I_1) = \frac{\nmaj(2-\tau)}{K},\\
    \E\left[n_{1,b}\right] = \nmin \pb(x \in I_1) = \frac{\nmin\tau }{K}.
\end{align*}
Plugging this bound into Eq.~\eqref{e:kl_in_terms_of_n_1} completes the proof by the first interval. An identical argument holds for  $j \in \{2,\ldots,K/2\}$. For $j \in \{K/2+1,\ldots,K\}$ the only change is that
\begin{align*}
    \E\left[n_{j,a}\right] = \nmaj \pa(x \in I_j) = \frac{\nmaj\tau}{K},\\
    \E\left[n_{j,b}\right] = \nmin \pb(x \in I_j) = \frac{\nmin(2-\tau) }{K}.
\end{align*}
\end{proof}

Next, we combine the previous two lemmas to establish our stated lower bound. We first restate it here.
\groupshiftlower*
\begin{proof}First, by Lemma~\ref{l:group_shift_expected_minimax_risk} we know that 
\begin{align*}
   \mmR(\gP_{\mathsf{GS}}(\tau)) &\ge  \frac{1}{32K^2}\sum_{j=1}^K \exp(-\mathrm{KL}(\q(S\mid v_{j}=1) \| \q(S\mid v_{j}=-1))).
\end{align*}
Next, by invoking the bound on the KL divergences in the equation above by Lemma~\ref{l:group_shift_KL_upper} we get that
\begin{align*}
  &\mmR(\gP_{\mathsf{GS}}(\tau)) \\&\qquad \ge \frac{1}{64K}\left[\exp\left(-\frac{\nmaj(2-\tau)+\nmin\tau}{3K^3}\right)+\exp\left(-\frac{\nmin(2-\tau)+\nmaj\tau}{3K^3}\right)\right] \\
  & \qquad \ge 
  \frac{1}{64K}\left[\exp\left(-\frac{\nmin(2-\tau)+\nmaj\tau}{3K^3}\right)\right] 
\end{align*}
Setting $K = \ceil{(\nmin(2-\tau)+\nmaj \tau)^{1/3}}$ and recalling that $\tau\le 1$ we get that
\begin{align*}
  &\mmR(\gP_{\mathsf{GS}}(\tau)) \\&\qquad \ge \frac{1}{64\ceil{(\nmin(2-\tau)+\nmaj \tau)^{1/3}}}\left[\exp\left(-\frac{\nmin(2-\tau)+\nmaj\tau}{3\ceil{(\nmin(2-\tau)+\nmaj \tau)^{1/3}}^3}\right)\right] \\
  &\qquad \overset{(i)}{\ge} \frac{\exp(-1/3)}{64} \frac{(\nmin(2-\tau)+\nmaj \tau)^{1/3}}{\ceil{(\nmin(2-\tau)+\nmaj \tau)^{1/3}}}\frac{1}{(\nmin(2-\tau)+\nmaj \tau)^{1/3}}\\
  &\qquad \overset{(ii)}{\ge} \frac{0.7\exp(-1/3)}{64}\frac{1}{(\nmin(2-\tau)+\nmaj \tau)^{1/3}}\\
  & \qquad \ge \frac{1}{200}\frac{1}{(\nmin(2-\tau)+\nmaj \tau)^{1/3}},
\end{align*}
where $(i)$ follows since $\nmin(2-\tau)+\nmaj\tau /\ceil{(\nmin(2-\tau)+\nmaj \tau)^{1/3}}^3 \le 1$, and $(ii)$ follows since $0\le \tau \le 1$ and $\nmin\ge 1$ and hence $\frac{(\nmin(2-\tau)+\nmaj \tau)^{1/3}}{\ceil{(\nmin(2-\tau)+\nmaj \tau)^{1/3}}} \ge 0.7$. 
\end{proof}

\subsection{Proof of Theorem~\ref{t:group_shift_upper_bound}} \label{s:group_shift_upper_bound}
In this section, we derive an upper bound on the excess risk of the undersampled binning estimator $\Abucket$ (Eq.~\eqref{e:definition_undersampled_binning}). Recall that given a dataset $\S$ this estimator first calculates the undersampled dataset $\Sus$, where the number of points from the minority group ($\nmin$) is equal to the number of points from the majority group ($\nmin$), and the size of the dataset is $2\nmin$. Throughout this section, $(\pmaj,\pmin)$ shall be an arbitrary element of $\gP_{\mathsf{GS}}(\tau)$ for any $\tau \in [0,1]$. In this section, whenever we shall often denote $\eR(\cA;(\pmaj,\pmin))$ by simply $\eR(\cA)$.

Before we proceed, we introduce some additional notation. For any $j \in \{1,\ldots,K\}$ and $I_j = [\frac{j-1}{K},\frac{j}{K}]$ let
\begin{align*}\startsubequation \subequationlabel{eq:defining_q_group_shift}\tagsubequation\label{eq:q_positive}
q_{j,1} &:= \ptest(y=1\mid x \in I_j) = \int_{x \in I_j} \p(y=1 \mid x)\ptest(x \mid x \in I_j)\; \mathrm{d}x, \\
q_{j,1} &:= \ptest(y=1\mid x \in I_j) = \int_{x \in I_j} \p(y=1 \mid x)\ptest(x \mid x \in I_j)\; \mathrm{d}x.\tagsubequation\label{eq:q_negative}
\end{align*}
For the undersampled binning estimator $\Abucket$ (defined above in Eq.~\eqref{e:definition_undersampled_binning}), define the \emph{excess risk in an interval} $I_j$ as follows:
\begin{align*}
    R_j (\AbucketS) &:= p\left(y = -\cA^\S_j \mid  x \in I_j\right) - \min\left\{\ptest(y=1 \mid x \in I_j), \ptest(y=-1 \mid x \in I_j)\right\} \\
    & = q_{j,-\cA^\S_j} - \min\{q_{j,1}, q_{j,-1}\}.
\end{align*}
The proof of the upper bound shall proceed in steps. First, in Lemma~\ref{l:decomposition_of_excess_risk} we will show that the excess risk is equal to sum the excess risk over the intervals up to a factor of $2/K$ on account of the distribution being $1$-Lipschitz. Next, in Lemma~\ref{l:group_shift_per_interval} we upper bound the risk over each interval. We put these two together and to upper bound the risk.
\begin{lemma}
\label{l:decomposition_of_excess_risk} The expected excess risk of undersampled binning estimator $\Abucket$ can be decomposed as follows
\begin{align*}
    \eR(\Abucket) & \le \sum_{j=0}^{K-1} \E_{\S \sim \pmaj^\nmaj \times \pmin^\nmin}\left[ R_j(\AbucketS)\right]\cdot  \ptest(I_j) + \frac{2}{K},
\end{align*}
where $\ptest(I_j):=\int_{x\in I_j}\ptest(x)\;\mathrm{d}x$.
\end{lemma}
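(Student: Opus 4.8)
The plan is to split the excess risk bin-by-bin via the tower rule, and then to absorb the resulting discretization error into the $2/K$ term using the $1$-Lipschitz assumption on $\p(y\mid x)$.

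First I would use that $\AbucketS$ is constant on each bin $I_j$, with value $\cA^{\S}_j$. Conditioning $R(\AbucketS;\ptest)=\E_{(x,y)\sim\ptest}[\mathbf 1(\AbucketS(x)\neq y)]$ on the bin containing $x$ gives $R(\AbucketS;\ptest)=\sum_{j}\ptest(I_j)\,q_{j,-\cA^{\S}_j}$, where $j$ runs over the $K$ bins. Likewise, since the Bayes classifier $f^{\star}$ errs at $x$ with probability $\min\{\p(y=1\mid x),\p(y=-1\mid x)\}$,
\[
R(f^{\star};\ptest)=\sum_{j}\ptest(I_j)\,\E_{x\sim\ptest(\cdot\mid x\in I_j)}\big[\min\{\p(y=1\mid x),\p(y=-1\mid x)\}\big].
\]
Subtracting these, taking $\E_{\S}$, and inserting $\pm\min\{q_{j,1},q_{j,-1}\}$ inside each bin, the definition $R_j(\AbucketS)=q_{j,-\cA^{\S}_j}-\min\{q_{j,1},q_{j,-1}\}$ yields the identity
\[
\eR(\Abucket)=\sum_j\ptest(I_j)\,\E_{\S}\big[R_j(\AbucketS)\big]+\sum_j\ptest(I_j)\,\Delta_j,\qquad \Delta_j:=\min\{q_{j,1},q_{j,-1}\}-\E_{x\sim\ptest(\cdot\mid x\in I_j)}\big[\min\{\p(y=1\mid x),\p(y=-1\mid x)\}\big].
\]

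It then remains to bound $\sum_j\ptest(I_j)\,\Delta_j\le 2/K$. I would write $g(s):=\min\{s,1-s\}$, which is $1$-Lipschitz, and note that $q_{j,1}=\E_{x\sim\ptest(\cdot\mid x\in I_j)}[\p(y=1\mid x)]$ and $q_{j,-1}=1-q_{j,1}$, so $\min\{q_{j,1},q_{j,-1}\}=g(q_{j,1})$. The $1$-Lipschitz hypothesis on $\p(\cdot\mid x)$ forces $\p(y=1\mid x)$ to vary by at most $1/K$ over the width-$1/K$ bin $I_j$, so for every $x\in I_j$ we get $|g(\p(y=1\mid x))-g(q_{j,1})|\le|\p(y=1\mid x)-q_{j,1}|\le 1/K$; taking the conditional expectation gives $\Delta_j\le 1/K$. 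Summing against $\sum_j\ptest(I_j)=1$ yields $\sum_j\ptest(I_j)\,\Delta_j\le 1/K\le 2/K$, which completes the argument.

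The only substantive step is this last one — controlling the gap $\Delta_j$ between the within-bin Bayes risk $\min\{q_{j,1},q_{j,-1}\}$ and the genuine conditional Bayes risk — and it is the one place the Lipschitz regularity is used; the rest is bookkeeping with the law of total expectation. Combined with the per-interval estimation bound of Lemma~\ref{l:group_shift_per_interval}, this is exactly the trade-off that will fix $K\asymp\nmin^{1/3}$.
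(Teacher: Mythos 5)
Your proof is correct and follows essentially the same route as the paper's: decompose both the classifier's risk and the Bayes risk over the bins $I_j$, and use the $1$-Lipschitzness of $\p(y\mid x)$ to control the within-bin discretization error. Your observation that the classifier-risk term $\sum_j\ptest(I_j)\,q_{j,-\cA^{\S}_j}$ is exact (so only the Bayes-risk term contributes error, giving $1/K$ rather than the paper's $2/K$) is a slight sharpening, but immaterial to the final bound.
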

\begin{proof} Recall that by definition, the expected excess risk is
\begin{align*}
    \E_{\S \sim \pmaj^\nmaj \times \pmin^\nmin}\big[R(\cA^\S;\ptest)-R(f^{\star};\ptest)\big].
\end{align*}
Let us first decompose the Bayes risk $R(f^\star)$, 
\begin{align*}
    R(f^\star) & = \inf_{f} \mathbb{E}_{(x,y)\sim \ptest}\left[\mathbf{1}(f(x) \neq y)\right] \\& = \inf_{f} \int_{x=0}^1 \sum_{y \in \{-1,1\}} \mathbf{1}(f(x) \neq y)  \ptest(y \mid x) \ptest(x)\;\mathrm{d}x\\
& =  \int_{x=0}^1 \inf_{f(x) \in \{-1,1\}} \sum_{y \in \{-1,1\}} \mathbf{1}(f(x) \neq y)  \ptest(y \mid x) \ptest(x)\; \mathrm{d}x\\
& =  \int_{x=0}^1 \inf_{f(x) \in \{-1,1\}} \ptest(y=-f(x) \mid x) \ptest(x)\;\mathrm{d}x\\
    & = \int_{x=0}^1 \min\left\{\ptest(y=1\mid x), \ptest(y=-1\mid x)\right\} \ptest(x) \;\mathrm{d}x. \label{e:bayes_risk_group_shift}
    \numberthis
\end{align*}
The risk of the undersampled binning algorithm $\Abucket$ is given by
\begin{align*}
R(\AbucketS) & = \int_{x=0}^1 \sum_{y \in \{-1,1\}}\mathbf{1}(\AbucketS(x)\neq y) \ptest(y \mid x)\ptest(x) \; \mathrm{d} x\\
 &= \int_{x=0}^1 \ptest(y = -\AbucketS(x) \mid x)\ptest(x) \; \mathrm{d} x.
\end{align*}
Next, recall that the undersampled binning estimator is constant over the intervals $I_j$ for $j \in \{1,\ldots,K\}$ where it takes the value $\cA_{j}^\S$ (to ease notation let us simply denote it by $\cA_{j}$ below), and therefore 
\begin{align*}
    R(\AbucketS) & = \sum_{j=0}^{K-1} \int_{x\in I_j}  \ptest(y=-\cA_j|x) \ptest(x)\;\mathrm{d}x.
\end{align*}
This combined with Eq.~\eqref{e:bayes_risk_group_shift} tells us that
\begin{align*}
   &R(\AbucketS) - R(f^\star)\\ &  = \sum_{j=0}^{K-1} \int_{x\in I_j} \big( \ptest(y=-\cA_j|x)-\min\left\{\ptest(y=1\mid x), \ptest(y=-1\mid x)\right\} \big)\ptest(x)\;\mathrm{d}x. \numberthis \label{e:group_shift_excess_risk_bucket_formula}
\end{align*}
Recall the definition of $q_{j,1}$ and $q_{j,-1}$ from Eqs.~\eqref{eq:q_positive}-\eqref{eq:q_negative} above. For any $x \in I_j = [\frac{j-1}{K},\frac{j}{K}]$, $|\ptest(y \mid x) - q_{j,y}| \le 1/K$, since the distribution $\ptest(y \mid x)$ is $1$-Lipschitz and $q_{j,y}$ is its conditional mean. Therefore,
\begin{align*}
  &  R(\AbucketS) - R(f^\star) \\ &\qquad  \le \sum_{j=0}^{K-1} \int_{x\in I_j} \big( q_{j,-\cA_{j}}-\min\left\{q_{j,1}, q_{j,-1}\right\} \big)\ptest(x)\;\mathrm{d}x + \frac{2}{K}\sum_{j=0}^{K-1}\int_{x\in I_j}\ptest(x)\;\mathrm{d}x \\
   &\qquad  = \sum_{j=0}^{K-1} \int_{x\in I_j}R_j(\AbucketS)\ptest(x)\;\mathrm{d}x + \frac{2}{K}.
\end{align*}
Taking expectation over the training samples $\S$ (where $\nmin$ samples are drawn independently from $\pmin$ and $\nmaj$ samples are drawn independently from $\pmaj$) concludes the proof.
\end{proof}

Next we provide an upper bound on the expected excess risk is an interval $R_{j}(\AbucketS)$.
\begin{lemma} \label{l:group_shift_per_interval}For any $j \in \{1,\ldots,K\}$ with $I_j = [\frac{j-1}{K},\frac{j}{K}]$,
\begin{align*}
  \E_{\S \sim \pmaj^\nmaj \times \pmin^\nmin}\left[ R_j(\AbucketS)\right]\le \frac{c}{\sqrt{\nmin \ptest(I_j)}}+ \frac{c}{K},
\end{align*}
where $c$ is an absolute constant, and $\ptest(I_j) := \int_{x\in I_j} \ptest(x)\;\mathrm{d}x$.
\end{lemma}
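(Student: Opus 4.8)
The plan is to first reduce the per-interval excess risk to a single error indicator. Writing $\delta_j := |q_{j,1}-q_{j,-1}|$ and $y_j^\star := \argmax_y q_{j,y}$, one reads off from the definition $R_j(\AbucketS) = q_{j,-\cA_j^\S}-\min\{q_{j,1},q_{j,-1}\}$ that $R_j(\AbucketS) = \delta_j \cdot \mathbf{1}(\cA_j^\S \neq y_j^\star)$: if the bin estimator outputs the more likely label then $q_{j,-\cA_j^\S}=\min\{q_{j,1},q_{j,-1}\}$ and the excess risk is $0$, while otherwise it equals $\max\{q_{j,1},q_{j,-1}\}-\min\{q_{j,1},q_{j,-1}\}=\delta_j$. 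Hence $\E_{\S}[R_j(\AbucketS)] = \delta_j\cdot\Pr(\cA_j^\S\neq y_j^\star)$. I then split on the margin: if $\delta_j\le 4/K$ we are done since $\E_\S[R_j(\AbucketS)]\le\delta_j\le 4/K$, which is absorbed into the $c/K$ term, so it remains to handle $\delta_j>4/K$.

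For the large-margin case the first step is to describe the law of $\Sus$ restricted to $I_j$. A uniformly random size-$\nmin$ subset of $\nmaj$ i.i.d.\ draws from $\pmaj$ is itself $\nmin$ i.i.d.\ draws from $\pmaj$ (exchangeability), and it is independent of $\Smin$, so $\Sus$ is distributed as $\nmin$ i.i.d.\ samples from $\pmaj$ together with $\nmin$ independent i.i.d.\ samples from $\pmin$. Condition on the numbers $n_j^a, n_j^b$ of group-$a$ and group-$b$ points of $\Sus$ falling in $I_j$, and set $m_j:=n_j^a+n_j^b=n_{1,j}+n_{-1,j}$. Given these counts, $n_{1,j}$ is a sum of $n_j^a$ independent $\mathrm{Ber}\big(\pa(y=1\mid x\in I_j)\big)$ and $n_j^b$ independent $\mathrm{Ber}\big(\pb(y=1\mid x\in I_j)\big)$ variables; since $\p(y=1\mid\cdot)$ is $1$-Lipschitz and $|I_j|=1/K$, both of these success probabilities and $q_{j,1}$ lie within $1/K$ of one another. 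Taking WLOG $y_j^\star=1$ (so $q_{j,1}\ge 1/2$ and $\delta_j=2q_{j,1}-1$), the conditional mean of $n_{1,j}$ thus exceeds $m_j/2$ by at least $m_j(\delta_j/2-1/K)\ge m_j\delta_j/4$, using $\delta_j>4/K$. Hoeffding's inequality for $\{0,1\}$-valued summands then gives $\Pr(\cA_j^\S\neq 1\mid m_j)=\Pr(n_{1,j}\le m_j/2\mid m_j)\le \exp(-m_j\delta_j^2/8)$, valid for every value of $m_j\ge 0$ (and trivially when $m_j=0$).

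It remains to average over $m_j$ and optimize in $\delta_j$. The variable $m_j$ is a sum of $2\nmin$ independent Bernoullis with total mean $\E[m_j]=\nmin(\pa(I_j)+\pb(I_j))=2\nmin\ptest(I_j)$, so the elementary bound $\E[e^{-t m_j}]=\prod_i\big(1-p_i(1-e^{-t})\big)\le\exp\big(-\E[m_j](1-e^{-t})\big)$, combined with $1-e^{-t}\ge t/2$ for $t\in[0,1]$ and $t=\delta_j^2/8\le 1/8$, yields $\E[e^{-m_j\delta_j^2/8}]\le\exp(-\nmin\ptest(I_j)\delta_j^2/8)$. Hence $\E_\S[R_j(\AbucketS)]\le\delta_j\exp(-\nmin\ptest(I_j)\delta_j^2/8)$, and since $\delta\mapsto\delta e^{-a\delta^2}$ is maximized at $\delta=1/\sqrt{2a}$ with value $1/\sqrt{2ea}\le 1/\sqrt a$, taking $a=\nmin\ptest(I_j)/8$ gives $\E_\S[R_j(\AbucketS)]\le 3/\sqrt{\nmin\ptest(I_j)}$. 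Combining the two regimes (and noting the case $\ptest(I_j)=0$ is vacuous) proves the claim for an absolute constant $c$. The only genuinely delicate point is the interplay between the $1/K$ discretization bias of the in-bin label frequencies and the margin $\delta_j$: when $\delta_j$ is comparable to $1/K$ the majority vote can sit systematically on the wrong side of $1/2$, which is exactly why the $c/K$ term is unavoidable and why the margin split is taken at a constant multiple of $1/K$; everything else is bookkeeping with Hoeffding, the Bernoulli MGF, and a one-variable optimization.
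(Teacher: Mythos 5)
Your proof is correct and follows essentially the same route as the paper's: reduce $R_j$ to the margin times the misclassification probability, condition on the number of in-bin samples, apply Hoeffding to the conditional label count, average via the Bernoulli-product MGF of the bin count, and optimize over the margin, with the $1/K$ discretization bias absorbed into the additive term. The only differences are cosmetic — you split explicitly on $\delta_j \lessgtr 4/K$ and apply Hoeffding directly to the heterogeneous Bernoulli sum, whereas the paper carries the $1/K$ perturbation as a $\gamma$ term and first reduces to a worst-case binomial via stochastic dominance.
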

\begin{proof}
Consider an arbitrary bucket $j \in \{1,\ldots,K\}$. 

Let us introduce some notation that shall be useful in the remainder of the proof. Analogous to $q_{j,1}$ and $q_{j,-1}$ defined above (see Eqs.~\eqref{eq:q_positive}-\eqref{eq:q_negative}), define $q_{j,1}^{a}$ and $q_{j,1}^b$ as follows:
\begin{align*}\startsubequation \subequationlabel{eq:defining_q_group_shift_per_group}\tagsubequation\label{eq:q_a}
q_{j,1}^a &:= \mathsf{P}_a(y=1\mid x \in I_j) = \int_{x \in I_j} \p(y=1 \mid x)\mathsf{P}_a(x \mid x \in I_j)\; \mathrm{d}x, \\
q_{j,1}^b &:= \mathsf{P}_b(y=1\mid x \in I_j) = \int_{x \in I_j} \p(y=1 \mid x)\mathsf{P}_b(x \mid x \in I_j)\; \mathrm{d}x.\tagsubequation\label{eq:q_b}
\end{align*}
Essentially, $q_{j,1}^a$ is the probability that a sample is from group $a$ and has label $1$, conditioned on the event that the sample falls in the interval $I_j$. Since 
\begin{align*}
    \ptest(x \mid x \in I_j) & = \frac{1}{2}\left[\pa(x \mid x \in I_j) +\pb(x \mid x \in I_j)\right],
\end{align*}
therefore 
\begin{align*}
    |q_{j,1} - q_{j,1}^a| & = \left|\int_{x \in I_j} \p(y=1 \mid x)\ptest(x \mid x \in I_j)\; \mathrm{d}x - \int_{x \in I_j} \p(y=1 \mid x)\mathsf{P}_a(x \mid x \in I_j)\; \mathrm{d}x\right|  \\ & \le \frac{1}{K}. \numberthis\label{e:bound_q_j_minus_q_j_a}
\end{align*}
This follows since $\p(y \mid x)$ is $1$-Lipschitz and therefore can fluctuate by at most $1/K$ in the interval $I_j$. Of course the same bound also holds for $|q_{j,1}-q_{j,1}^b|$.

With this notation in place let us present a bound on the expected value of $R_{j}(\AbucketS)$. By definition
\begin{align*}
    R_{j}(\AbucketS) & = q_{j, -\cA^\S_{j}} - \min\{q_{j,1}, q_{j,-1}\}.
\end{align*}
First, note that $q_{j,1} := \ptest(y=1\mid x \in I_j) = 1-q_{j,-1}$. Suppose that $q_{j,1}<1/2$ and therefore $q_{j,-1}>1/2$ (the same bound shall hold in the other case). In this case, risk is incurred only when $\cA^\S_{j} = 1$. That is,
\begin{align*}
 \E_{\S \sim \pmaj^\nmaj \times \pmin^\nmin}\left[ R_j(\AbucketS)\right] & = |q_{j,-1} - q_{j,1}| \Pr_{\S}[\cA_j^\S = 1]\\ &= |1 - 2q_{j,1}|\Pr_{\S}[\cA_j^\S = 1].\label{e:group_shift_risk_in_terms_prob_error}\numberthis
\end{align*}
Now by the definition of the undersampled binning estimator (see Eq.~\eqref{e:definition_undersampled_binning}), $\cA_j^\S = 1$ only when there are more samples in the interval $I_j$ with label $1$ than $-1$. However, we can bound the probability of this happening since $q_{j,1}$ is smaller than $q_{j,-1}$.

Let $n_j$ be the number of samples in the undersampled sample set $\Sus$ in the interval $I_j$. Let $n_{1,j}$ be the number of these samples with label $1$, and $n_{-1,j} = n_j - n_{1,j}$ be the number of samples with label $-1$. Further, let $n_{a,j}$ be the number of samples in from group $a$ such that they fall in the interval $I_j$, and define $m_{b,j}$ analogously.

The probability of incurring risk is given by
\begin{align}\label{e:group_shift_probability_error}
    \mathbb{P}[\cA_j = 1] & = \sum_{s = 1}^{2\nmin} \mathbb{P}[\cA_j = 1 \mid n_j = s] \mathbb{P}[n_j = s],
\end{align}
where the sum is up to $2\nmin$ since the size of the undersample dataset $|\Sus|$ is equal to $2\nmin$. 

Conditioned on the event that $n_j = s$ the probability of incurring risk is 
\begin{align*}
    \mathbb{P}\left[\cA_{j} = 1 \mid n_j = s\right] = \mathbb{P}\left[m_{1,j} > n_{-1,j} \mid n_j = s\right] & = \mathbb{P}\left[n_{1,j} > n_j/2 \mid n_j = s\right]  \\&= \mathbb{P}\left[n_{1,j} > s/2 \mid n_j = s\right] .\numberthis \label{e:group_shift_probability_error_by_counts}
\end{align*}
Now, note that $n_j = n_{a,j}+n_{b,j}$. Thus continuing, we have that
\begin{align*}
    \mathbb{P}\left[n_{1,j} > s/2 \mid n_j = s\right]  & = \sum_{s' \le s}\mathbb{P}\left[n_{1,j} > s/2  \mid  n_j = s, n_{b,j} = s'\right]\mathbb{P}[n_{b,j} = s']\\
    & = \sum_{s' \le s}\mathbb{P}\left[n_{1,j} > s/2  \mid  n_{a,j} = s-s', n_{b,j} = s'\right]\mathbb{P}[n_{b,j} = s'].
\end{align*}
In light of this previous equation, we want to control the probability that the number of samples with label $1$ in the interval $I_j$ conditioned on the event that the number of samples from group $a$ in this interval is $s-s'$ and the number of samples from group $b$ in this interval is $s'$. Recall that $q_{j,1}^a$ and $q_{j,1}^b$ the probabilities of the label of the sample being $1$ conditioned the event that sample is in the interval $I_j$ when it is group $a$ and $b$ respectively. So we define the random variables: 
\begin{align*}
    z_a[s-s'] \sim \mathsf{Bin}(s-s',q_{j,1}^a), \quad z_{b}[s'] \sim \mathsf{Bin}(s',q_{j,1}^b), \quad z[s] \sim \mathsf{Bin}(s,\max\left\{q_{j,1}^a,q_{j,1}^b\right\}).
\end{align*}
 Then,
\begin{align*}
     &\mathbb{P}\left[n_{1,j} > s/2 \mid n_j = s\right] \\ &\qquad = \sum_{s' \le s}\mathbb{P}\left[n_{1,j} > s/2  \mid  n_{j,a} = s-s', n_{j,b} = s'\right]\mathbb{P}[n_{j,b} = s'] \\
     & \qquad=  \sum_{s' \le s}\mathbb{P}\left[z_a[s-s']+ z_{b}[s'])  > s/2  \mid  n_{a,j} = s-s', n_{b,j} = s'\right]\mathbb{P}[n_{b,j} = s']  \\
     & \qquad\le   \sum_{s' \le s}\mathbb{P}\left[z[s] > s/2  \mid  n_{a,j} = s-s', n_{b,j} = s'\right]\mathbb{P}[n_{b,j} = s']   \\
     & \qquad=  \sum_{s' \le s}\mathbb{P}\left[z[s] > s/2  \right]\mathbb{P}[n_{b,j} = s']   \\
     & \qquad=  \mathbb{P}\left[z[s] > s/2  \right] \\
     & \qquad \overset{(i)}{\le} \exp\left(-\frac{s}{2}(1-2\max\left\{q_{j,1}^a,q_{j,1}^b\right\})^2\right), \numberthis \label{e:bound_on_m_1_deviation}
\end{align*}
where $(i)$ follows by invoking Hoeffding's inequality\citep[][Proposition~2.5]{wainwright2019high}. Combining this with Eqs.~\eqref{e:group_shift_probability_error} and \eqref{e:group_shift_probability_error_by_counts} we get that
\begin{align*}
    \mathbb{P}[\cA_j = 1] & \le \sum_{s=1}^{2\nmin} \exp\left(-\frac{s}{2}(1-2\max\left\{q_{j,1}^a,q_{j,1}^b\right\})^2\right) \mathbb{P}[n_j=s].
\end{align*}
Now $n_j$, which is the number of samples that lands in the interval $I_j$ is equal to $n_{a,j} + n_{b,j}$. Now each of $n_{a,j}$ and $n_{b,j}$ (the number of samples in this interval from each of the groups) are random variables with distributions $\mathsf{Bin}(\nmin, \pa(I_j))$ and $\mathsf{Bin}(\nmin, \pb(I_j))$, where $\pa(I_j) = \int_{x\in I_j} \pa(x)\;\mathrm{d}x$ and $\pb(I_j) = \int_{x\in I_j} \pa(x)\;\mathrm{d}x$. Therefore, $n_j$ is distributed as a sum of two binomial distribution and is therefore Poisson binomially distributed~\citep{poisson_binom}. Using the formula for the moment generating function (MGF) of a Poisson binomially distributed random variable we infer that,
\begin{align*}
    \mathbb{P}[\cA_j = 1] & \le \left(1-\pa(I_j)+\pa(I_j)\exp\left(-\frac{(1-2\max\left\{q_{j,1}^a,q_{j,1}^b\right\})^2}{2}\right)\right)^{\nmin} \times \\
    & \hspace{1.1in} \left(1-\pb(I_j)+\pb(I_j)\exp\left(-\frac{(1-2\max\left\{q_{j,1}^a,q_{j,1}^b\right\})^2}{2}\right)\right)^{\nmin} .
\end{align*}
Plugging this into Eq.~\eqref{e:group_shift_probability_error} we get that,
\begin{align*}
    &\E_{\S \sim \pmaj^\nmaj \times \pmin^\nmin}\left[ R_j(\AbucketS)\right]\\ & \qquad \le |1-2q_{j,1}| \left[1-\pa(I_j)+\pa(I_j)\exp\left(-\frac{(1-2\max\left\{q_{j,1}^a,q_{j,1}^b\right\})^2}{2}\right)\right]^{\nmin} \times \\
    & \hspace{1.1in} \left[1-\pb(I_j)+\pb(I_j)\exp\left(-\frac{(1-2\max\left\{q_{j,1}^a,q_{j,1}^b\right\})^2}{2}\right)\right]^{\nmin} \\
    & \qquad = |1-2q_{j,1}| \left[1-\pa(I_j)\left(1-\exp\left(-\frac{(1-2\max\left\{q_{j,1}^a,q_{j,1}^b\right\})^2}{2}\right)\right)\right]^{\nmin} \times \\
    &\hspace{1.1in} \left[1-\pb(I_j)\left(1-\exp\left(-\frac{(1-2\max\left\{q_{j,1}^a,q_{j,1}^b\right\})^2}{2}\right)\right)\right]^{\nmin}. 
\end{align*}
Since $|1-2\max\left\{q_{j,1}^a,q_{j,1}^b\right\}| \le 1$, 
\begin{align*}
    1-\exp\left(-\frac{(1-2\max\left\{q_{j,1}^a,q_{j,1}^b\right\})^2}{2}\right) \ge \frac{(1-2\max\left\{q_{j,1}^a,q_{j,1}^b\right\})^2}{4},
\end{align*}
and therefore
\begin{align*}
        \E_{\S \sim \pmaj^\nmaj \times \pmin^\nmin}\left[ R_j(\AbucketS)\right]
        & \le |1-2q_{j,1}| \left[1-\pa(I_j)\frac{(1-2\max\left\{q_{j,1}^a,q_{j,1}^b\right\})^2}{2}\right]^{\nmin}\times \\
        &\hspace{1in}\left[1-\pb(I_j)\frac{(1-2\max\left\{q_{j,1}^a,q_{j,1}^b\right\})^2}{2}\right]^{\nmin}\\
        & \overset{(i)}{\le} |1-2q_{j,1}| \left[1-\pa(I_j)\frac{(1-2q_{j,1} - 2\gamma)^2}{2}\right]^{\nmin}\times \\
        &\hspace{1in}\left[1-\pb(I_j)\frac{(1-2q_{j,1} - 2\gamma)^2}{2}\right]^{\nmin}\\
        & \overset{(ii)}{\le}  |1-2q_{j,1}| \exp\left(-\nmin (\pa(I_j)+\pb(I_j))\frac{(1-2q_{j,1} - 2\gamma)^2}{2}\right),
\end{align*}
where $(i)$ follows since $|\max\{q_{j,1}^a,q_{j,1}^b\}-q_{j,1}|\le 1/K$ by Eq.~\eqref{e:bound_q_j_minus_q_j_a} and $\gamma$ is such that $|\gamma| \le 1/K$, and $(ii)$ follows since $(1+z)^b \le \exp(bz)$. Now the RHS above is maximized when $(1-2q_{j,1} - 2\gamma)^2 = \frac{c}{\nmin (\pa(I_j)+\pb(I_j))}$, for some constant $c$. Plugging this into the equation above we get that
\begin{align*}
    \E_{\S \sim \pmaj^\nmaj \times \pmin^\nmin}\left[ R_j(\AbucketS)\right] &\le \frac{c'}{\sqrt{\nmin (\pa(I_j)+\pb(I_j))}}+ c'|\gamma| \\
    &\le \frac{c'}{\sqrt{\nmin (\pa(I_j)+\pb(I_j))}}+ \frac{c'}{K}.
\end{align*}
Finally, noting that $\ptest(I_j) = (\pa(I_j)+\pb(I_j))/2$ completes the proof.
\end{proof}

By combining the previous two lemmas we can now prove our upper bound on the risk of the undersampled binning estimator. We begin by restating it.
\groupshiftupper*
\begin{proof}First by Lemma~\ref{l:decomposition_of_excess_risk} we know that
\begin{align*}
    \eR[\Abucket] & \le \sum_{j=0}^{K-1} \E_{\S \sim \pmaj^\nmaj \times \pmin^\nmin}\left[ R_j(\AbucketS)\right]\cdot \ptest(I_j) + \frac{2}{K}.
\end{align*}
Next by using the bound on $\E_{\S \sim \pmaj^\nmaj \times \pmin^\nmin}\left[ R_j(\AbucketS)\right]$ established in Lemma~\ref{l:group_shift_per_interval} we get that,
\begin{align*}
     \eR(\Abucket) &\le c\sum_{j=0}^{K-1} \frac{1}{\sqrt{\nmin \ptest(I_j)}}\ptest(I_j) + \frac{c}{K} \\
     & =\frac{c}{\sqrt{\nmin}}\sum_{j=0}^{K-1} \sqrt{\ptest(I_j)} + \frac{c}{K} \\
     & \overset{(i)}{\le} \frac{c}{\sqrt{\nmin}}\sqrt{K}\sum_{j=0}^{K-1} \ptest(I_j) + \frac{c}{K} \\
     & = c\sqrt{\frac{K}{\nmin}}+\frac{c}{K}.
\end{align*}
where $(i)$ follows since for any vector $z\in \mathbb{R}^{K}$, $\lv z\rv_{1}\le \sqrt{K}\lv z\rv_{2}$. Maximizing over $K$ yields the choice $K = \ceil{\nmin^{1/3}}$, completing the proof.

\end{proof}


\section{Additional simulations}\label{s:additional_experiments}
\begin{figure}[H]
     \captionsetup[subfigure]{labelformat=empty}
     \centering
     \begin{subfigure}[b]{\textwidth}
         \centering
            \includegraphics[height=4.8cm]{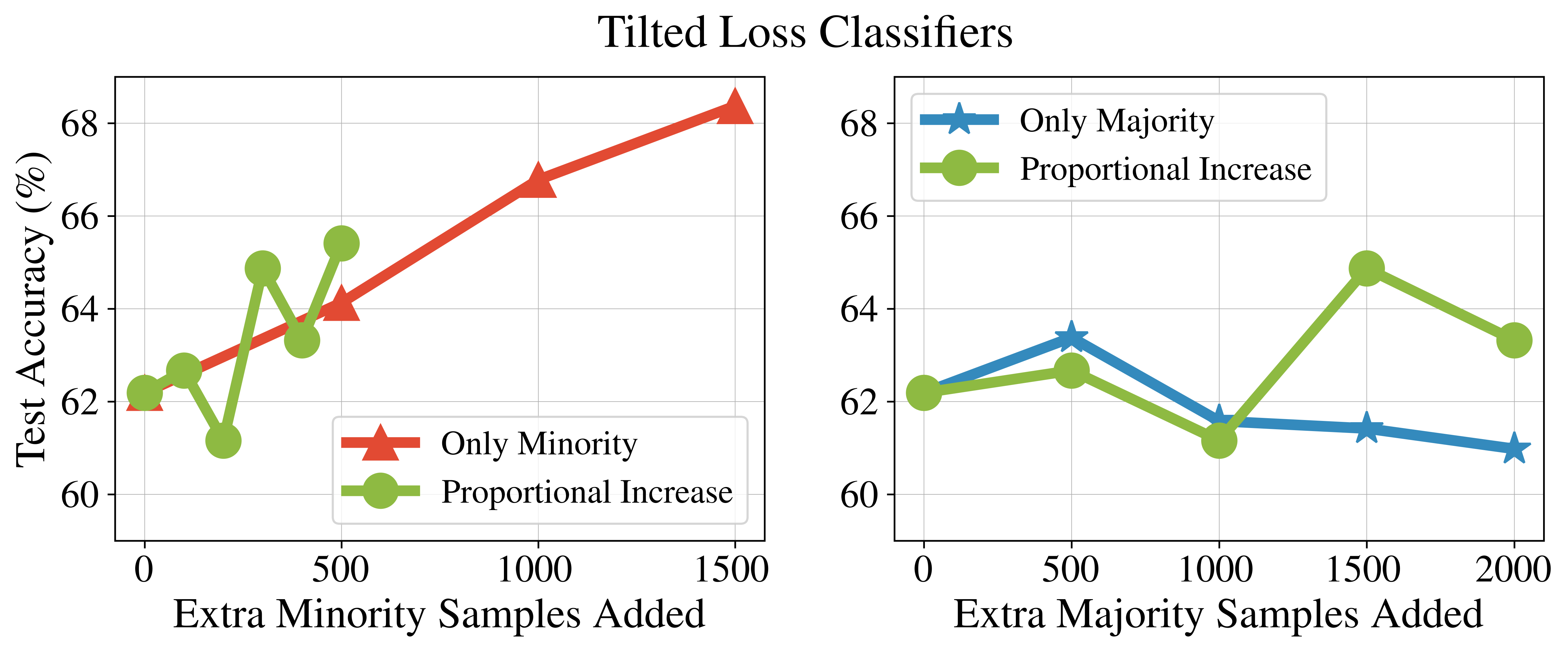}
     \end{subfigure}
     \hfill
     \begin{subfigure}[b]{\textwidth}
         \centering
        \includegraphics[height=4.8cm]{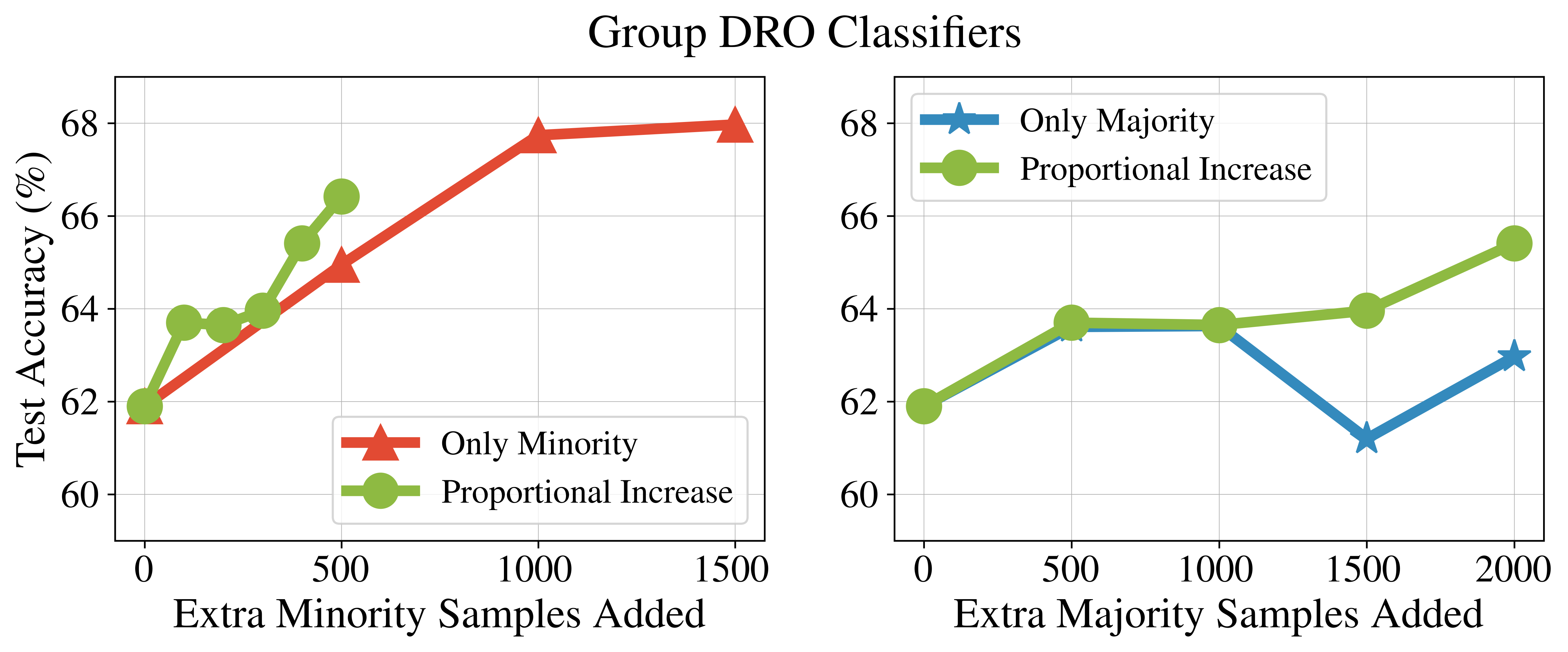}
     \end{subfigure}
     \caption{Convolutional neural network classifiers trained on the Imbalanced Binary CIFAR10 dataset with a 5:1 label imbalance. (Top) Models trained using the tilted loss~\citep{li2020tilted} with early stopping. (Bottom) Models trained using group-DRO~\citep{sagawa2019distributionally} with early stopping. We report the average test accuracy calculated on a balanced test set over 5 random seeds. We start off with $2500$ cat examples and $500$ dog examples in the training dataset. We find similar trends to those obtained in Figure~\ref{fig:undersampled_experiments} even with these losses that are designed to optimize for the worst group accuracy.}
     \label{fig:undersampled_experiments_extra}
\end{figure}

\section{Discussion about minimax lower bounds for cost-sensitive losses applied to the label shift setting} \label{s:cost_sensitive_discussion}
We add a more detailed discussion about applying minimax cost-sensitive losses to obtain a lower bound in the presence of label shift. 

Assume that $\pmaj$ is distribution of the covariates $x \mid y = 1$, and $\pmin$ is the distribution of the covariates $x \mid y = -1$. The training samples are drawn from the distribution:
\begin{align*}
    \p(x,y) = \p(y=1)\pmaj + \p(y=-1)\pmin,
\end{align*}
where
\begin{align*}
    \p(y=1) = \frac{\rho}{1+\rho} \quad \text{and}\quad  \p(y=-1) = \frac{1}{1+\rho}
\end{align*}
for some imbalance ratio $\rho >1$. On average the ratio between the number of points from the majority class to the number of points from the minority class is equal to $\rho$.

We set the cost of getting an incorrectly predicting the majority class label to be equal to 
\begin{align*}
    c_{1} = \frac{1}{1+\rho}
\end{align*}
and the cost of incorrectly predicting the minority class label to be equal to
\begin{align*}
    c_{-1} = \frac{\rho}{1+\rho}. 
\end{align*}
Note that the costs $c_{1}+ c_{-1} = 1$ and that $c_{1} < c_{-1}$.

The expected cost-sensitive loss is therefore equal to
\begin{align*}
    \E_{(x,y) \sim \p} \left[c_y \mathbf{1}\left[f(x) \neq y\right]\right] & = \frac{\rho}{1+\rho}\E_{(x) \sim \p} \left[c_1 \mathbf{1}\left[f(x) \neq 1\right]\right] + \frac{1}{1+\rho}\E_{(x) \sim \p} \left[c_{-1} \mathbf{1}
    \left[f(x) \neq -1\right]\right] \\
    & = \frac{\rho}{(1+\rho)^2}\E_{x \sim \pmaj} \left[ \mathbf{1}
    \left[f(x) \neq 1\right]\right] + \frac{\rho}{(1+\rho)^2}\E_{x \sim \pmin} \left[ \mathbf{1}\left[f(x) \neq -1\right]\right] \\
    & = \frac{2\rho}{(1+\rho)^2}\E_{y\sim  \textsf{Unif}(-1,1), x \sim \p(x \mid y)} \left[ \mathbf{1}\left[f(x) \neq y\right]\right]  .
\end{align*}
Now if we invoke the minimax lower bound \citep[][Theorem~4]{kamalaruban2018minimax} we get that
\begin{align*}
     \min_{f} \max_{P}\frac{2\rho}{(1+\rho)^2}\E_{y\sim  \textsf{Unif}(-1,1), x \sim \p(x \mid y)} \left[ \mathbf{1}\left[f(x) \neq y\right]\right]  \ge \frac{C}{1+\rho} \min\left\{ \sqrt{\frac{V}{(1+\rho) n}}, \frac{1}{1+\rho} \frac{V}{nh}\right\},
\end{align*}
where the minimum over $f$ is over all measurable functions from the training data to binary labels, the maximum is over a data distribution that can be correctly classified with a classifier from a VC class with VC dimension at most $V$ and $h$ is the Massart noise margin. For more thorough definitions we urge the reader to see \citep{kamalaruban2018minimax}. With this lower bound we get that
\begin{align*}
    \min_{f} \max_{P}\E_{y\sim  \textsf{Unif}(-1,1), x \sim \p(x \mid y)} \left[ \mathbf{1}\left[f(x) \neq y\right]\right] & \ge \frac{C(1+\rho)}{2\rho} \min\left\{ \sqrt{\frac{V}{(1+\rho) n}}, \frac{1}{1+\rho} \frac{V}{nh}\right\} \\
    & \ge \frac{C}{2} \min\left\{ \sqrt{\frac{V}{(1+\rho) n}}, \frac{1}{1+\rho} \frac{V}{nh}\right\}.
\end{align*}
Therefore we find that this lower bound gets smaller as the imbalance ratio $\rho$ gets larger, predicting the wrong trend for the label shift problem. 

\section{Details about results in Table~\ref{tab:undersampling_performance}}\label{sec:tablereferences}

In Table~\ref{tab:undersampling_performance}, we listed results regarding the performance of undersampled algorithms to others that are reported in the literature. Here we provide detailed references to these results.

\paragraph{Label shift.} The results for label shift are from the paper by \citet{cao2019learning}. The results are reported in Table~2 of that paper. For Imb CIFAR 10 (step 10), the undersampling result corresponds to the entry CB RS from that table with accuracy $84.59\%$ (error $15.41\%$), while the best method corresponds to the method LDAM-DRW with accuracy $87.81\%$ (error $12.19\%$). For Imb CIFAR100 (step 10), the undersampling result again corresponds to CB RS with accuracy $53.08\%$ (error $46.92\%$) while the best method corresponds to the method LDAM-DRW with accuracy $59.46\%$ (error $40.54\%$).

\paragraph{Group-covariate shift.}The results for the group-covariate shift are from Table~2 in \citet{idrissi2021simple}. For the CelebA dataset, the undersampled accuracy corresponds to the method SUBG and the best accuracy is for gDRO. For the Waterbirds dataset, the undersampled method is SUBG and the best competitor is RWG. For the MultiNLI dataset, the undersampled accuracy corresponds to the method SUBG and the best accuracy is for gDRO. Finally, for the CivilComments dataset,  the undersampled method is SUBG and the best method is RWG.

\section{Experimental details for Figures~\ref{fig:undersampled_experiments} and \ref{fig:undersampled_experiments_extra} }\label{s:experimental_details}
We construct our label shift dataset from the original CIFAR10 dataset. We create a binary classification task using the “cat” and “dog” classes. We use the official test examples as the balanced test set with $1000$ cats and $1000$ dogs. To form the initial train and validation sets, we use $2500$ cat examples (half of the training set) and $500$ dog examples, corresponding to a 5:1 label imbalance. We use $80\%$ of those examples for training and the rest for validation. We are left with $2500$ additional cat examples and $4500$ dog examples from the original train set which we add into our training set to generate Figure~\ref{fig:undersampled_experiments}.
 
We use the same convolutional neural network architecture as \citep{byrd2019effect,wang2021importance} with random initializations for this dataset. We train this model using SGD for $800$ epochs with batchsize $64$, a constant learning rate $0.001$ and momentum $0.9$. The importance weights used upweight the minority class samples in the training loss and validation loss is calculated to be $\frac{\# \text{Cat Train Examples}}{\# \text{Dog Train Examples}}$. We note that all of the experiments were performed on an internal cluster on 8 GPUs.

\paragraph{VS loss:} Given a dataset $\{x_i,y_i\}_{i=1}^n$, the VS loss~\citep{kini2021label} is defined as follows
\begin{align*}
  \mathcal{L}_{\mathsf{VS}}(f):= \sum_{i=1}^n\log\left(1+\exp\left(-\left(\frac{n_{g_i}}{n_{\max}}\right)^\gamma y_if(x_i) - \frac{\tau n_{g_i}}{n}\right)\right)  ,
\end{align*}
where $g_i$ denotes the group label, $n_{g_i}$ corresponds to the number of samples from the group, $n_{\max}$ is the number of samples in the largest group and $n$ is the total number of samples.
We set $\tau=3$ and $\gamma=0.3$, the best hyperparameters identified by \citet{wang2021importance} on this dataset for this neural network architecture. 

\paragraph{Tilted loss:} The tilted loss~\citep{li2020tilted} is defined as
\begin{align*}
  \mathcal{L}_{\mathsf{Tilted}}(f):= \frac{1}{t} \log\left[\sum_{i=1}^n \exp\left(t\ell (y_if(x_i))\right)\right],
\end{align*}
where we take $\ell$ to be the logistic loss.
In our experiments we set $t=2$. 

\paragraph{Group-DRO:} We run group-DRO  \citep[][Algorithm~1]{sagawa2019distributionally} with the logistic loss. We set adversarial step-size $\eta_q = 0.05$ which was the best hyperparameter identified by \citet{wang2021importance}.

\printbibliography

\end{document}